\newtheorem{theorem}{Theorem}
\newtheorem{proposition}[theorem]{Proposition}
\newtheorem{lemma}[theorem]{Lemma}
\newtheorem{corollary}[theorem]{Corollary}
\newcommand{\RR}{\mathbb{R}}
\newcommand{\dd}{\mathrm{d}}
\newcommand{\ie}{{\em i.e.}}
\title{Gradient Dynamics of Shallow Univariate ReLU Networks}
\author{
    Francis Williams\thanks{Equal contribution.} \\
    New York University \\
    \texttt{francis.williams@nyu.edu} \\
    \And
    Matthew Trager\footnotemark[1]\\
    New York University \\
    \texttt{matthew.trager@cims.nyu.edu} \\
    \AND
    Claudio Silva \\
    New York University \\
    \texttt{csilva@nyu.edu} \\
    \And
    Daniele Panozzo \\
    New York University \\
    \texttt{panozzo@nyu.edu} \\
    \And
    Denis Zorin \\
    New York University \\
    \texttt{dzorin@cs.nyu.edu} \\
    \And
    Joan Bruna \\
    New York University\\
    \texttt{bruna@cims.nyu.edu}\\
}
\begin{document}

\maketitle
\setcounter{footnote}{0}

\begin{abstract}

We present a theoretical and empirical study of the gradient dynamics of overparameterized shallow  ReLU networks with one-dimensional input, solving least-squares interpolation. We show that the gradient dynamics of such networks are determined by the gradient flow in a non-redundant parameterization of the network function. We examine the principal qualitative features of this gradient flow. In particular, we determine conditions for two learning regimes: \emph{kernel} and \emph{adaptive}, which depend both on the relative magnitude of initialization of weights in different layers and the asymptotic behavior of initialization coefficients in the limit of large network widths. We show that learning in the kernel regime yields smooth interpolants, minimizing curvature, and reduces to \emph{cubic splines} for uniform initializations. Learning in the adaptive regime favors instead \emph{linear splines}, where knots cluster adaptively at the sample points.

\end{abstract}

\section{Introduction}

An important open problem in the theoretical study of neural networks is to describe the dynamical behavior of the parameters during training and, in particular, the influence of the dynamics on the generalization error. To make progress on these issues, a number of studies have focused on a tractable class of architectures, namely single hidden-layer neural networks. For a fixed number of neurons, negative results establish that, even with random initialization, gradient descent may be trapped in arbitrarily bad local minima \cite{safran2017spurious,venturi2018spurious}, which motivates an asymptotic analysis that studies the optimization and generalization properties of these models as the number of neurons $m$ grows. 

Recently, several works \cite{du2018gradient, arora2019fine, cao2019generalization, du2018gradient2,oymak2018overparameterized} explain the success of gradient descent at optimizing the loss in the so-called \emph{over-parameterized} regime, \ie, for the number of neurons significantly higher than the number of training samples. In parallel, another line of work also established global convergence of gradient descent in the infinite-width limit using a seemingly different scaling \cite{chizat2018global, rotskoff2018neural, mei_mean_2018, sirignano2018mean}, relying on tools from optimal transport and mean-field theory.
In a remarkable effort,~\cite{chizat2018note} captured an essential difference between these two methodologies, stemming from their different scaling as $m\to \infty$: in one case, the neural network model asymptotically behaves as a kernel machine, with a kernel given by the linear approximation around its initialization \cite{jacot2018neural}, which in turn implies that as over-parameterization increases, the first-layer parameters stay close to their initial value. In contrast, the mean-field scaling 
offers a radically different picture, whereby parameters evolve in the limit $m\to \infty$ following a PDE based on a continuity equation. 

Although both scaling regimes explain the success of gradient descent optimization on overparametrized networks, they paint a different picture when it comes to generalization. The generalization properties in the kernel regime borrow from the well established theory of kernel regression in Reproducing Kernel Hilbert Spaces (RKHS), which has been applied to kernels arising from neural networks in \cite{hastie2019surprises, ghorbani2019linearized, ma2019comparative,rahimi2008random,daniely2017sgd}, and provide a somehow underwhelming answer to the benefit of neural networks compared to kernel methods. However, in practice large neural networks do not exhibit the traits of kernel/lazy learnings, since filter weights significantly deviate from their initialization despite the over-parameterization. Also, empirically, active learning provides better generalization than kernel learning \cite{chizat2018note}, although the theoretical reasons for this are still poorly understood. 

In order to progress in answering this important question, we consider a simplified setting, and study wide, single-hidden layer ReLU networks defined with one-dimensional inputs. We show how the kernel and active dynamics define fundamentally different function estimation models. For a fixed number of neurons, the neural network may follow either of these dynamics, depending on the initialization and the learning rates of the layers, and described by a simple condition on the initial weights. Specifically, we show that kernel dynamics corresponds to interpolation with \emph{cubic splines}, whereas active dynamics yields \emph{adaptive linear} splines, where neurons accumulate at the discontinuities and yield piecewise linear approximations. 

\subsection{Further related work}

Our work lies at the intersection between two lines of research: the works, described above, that develop a theory for 
optimization and generalization for shallow neural networks, and the works that attempt to shed 
light on these properties on low-dimensional inputs. 
In the latter category, we mention~\cite{basri2016efficient} for their study of the 
abilities of ReLU networks to approximate low-dimensional manifolds, and \cite{williams2018deep} 
for their empirical study of 3D surface reconstruction using precisely the intrinsic 
bias of SGD in overparametrised ReLU networks. Another remarkable recent work is \cite{daubechies2019nonlinear}, 
where the approximation power of deep ReLU networks is studied in the context of univariate functions. 
Our analysis in the active regime (cf Section \ref{sec:reduced_dynamics}) is closely related to \cite{maennel2018gradient}, in which the authors establish convergence of gradient descent to piece-wise linear functions under initialisations sufficiently close to zero. We provide an Eulerian perspective using Wasserstein gradient flows that simplifies the analysis, and is consistent with their conclusions. 
The implicit bias of SGD dynamics appears in several works, such as \cite{soudry2018implicit, gunasekar2018characterizing}, 
and, closest to our setup, in \cite{savarese2019infinite}, where the authors observe a link 
between gradient dynamics and \emph{linear splines}. They do not however observe the connection with \emph{cubic splines}, although they observe experimentally that the function returned by a network is often smooth and not piecewise linear.
Finally, we mention the works that attempt to study the tesselation of ReLU networks on the 
input space \cite{hanin2019complexity}.



\subsection{Main contributions}

The goal of this paper is to describe the qualitative behavior of the dynamics or 1D shallow ReLU networks. Our main contributions can be summarized as follows.
\begin{itemize}
    \item We investigate the gradient dynamics of shallow 1D ReLU networks using a ``canonical'' parameterization (Section~\ref{sec:reduced_dynamics}). We use machinery from mean field theory to show that the dynamics in this case are are completely determined by the evolution of the residuals. Furthermore, neurons will always converge at certain samples points where the residual is large and of opposite sign compared to neighboring samples. This means that the dynamics in the reduced parameterization biases towards functions that are \emph{piecewise-linear}.
    \item We observe that the dynamics in full parameters are related to the dynamics in canonical parameters by a change of metric which depends \emph{only} on the ``lift'' at initialization (\ie, on which full parameters are associated with a given function). In particular, the change of metric is expressible in terms of invariants $\delta_i$ associated with each neuron (Proposition~\ref{thm:reduced_parameter_grad}). When $\delta_i \gg 0$ the dynamics in full parameters (locally) agree with the dynamics in reduced parameters; when $\delta_i \ll 0$, the dynamics in full parameters (locally) follow \emph{kernel dynamics}, in which only the outer layer weights change.
    \item We consider the idealized kernel dynamics in the limit of infinite neurons, and we show that the RKHS norm of a function $f$ corresponds to a weighted L2 norm of the second derivative $f''$, \ie, a form of \emph{linearized curvature}.
   For appropriate initial distributions of neurons, the solution to kernel learning is a smooth \emph{cubic spline} (Theorem~\ref{prop:splines}). This illustrates the qualitative difference between the ``reduced'' and ``kernel'' regimes, which depend on parameter lift at initialization. Arbitrary initializations will locally interpolate between these two regimes, as can be seen using NTK kernel~\cite{jacot2018neural}. 
    \item Throughout our presentation, we also discuss the effect of a scaling parameter $\alpha(m)$ applied the network function  (where $m$ is the number of neurons), which becomes important as the number of neurons tends to infinity. As argued in~\cite{chizat2018note}, when $\alpha(m) = o(m)$, the variation of each neuron will asymptotically go to zero (\emph{lazy regime}), so our local analysis translates into a global one. 
\end{itemize}
\section{Preliminaries}

\subsection{Problem Setup}

The training of a two-layer ReLU neural network with $m$ scalar inputs, a single scalar output, and least-squares loss solves the following problem:
\begin{equation}\label{eq:leastsquares}
\begin{aligned}
    &\min_{\bm z} L(\bm z) =  \frac{1}{2} \sum_{j=1}^s |f_{\bm z}(x_j) - y_j|^2\\
    &\mbox{ where } \,\,\, f_{\bm z}(x) = \frac{1}{\alpha(m)} \sum_{i=1}^m c_i [a_i x - b_i]_+, \quad \bm z = (\bm a \in \RR^m, \bm b \in \RR^m, \bm c \in \RR^m).
\end{aligned}
\end{equation}
Here, $S = \{ (x_i, y_i) \in \RR^2, \, i =1,\ldots, s\}$ is a given set of $s$ samples, $\bm z$ is a vector of \emph{parameters}, and $\alpha(m)$ is a normalization factor 
that will be important as we consider the limit $m \to \infty$. 

We are interested in the minimisation of (\ref{eq:leastsquares}) performed by gradient descent over the parameters $\bm z$, evolving in a domain $\mathcal{D} = \RR^{3m}$: 
$$\bm z^{(k+1)} = \bm z^{(k)} - \eta \nabla_{\bm z} L(\bm z^{(k)})~.$$
As $\eta \to 0$, this scheme may be analysed through its continuous-time counterpart, the gradient flow 
\begin{equation}\label{eq:gradient_flow}
    \bm z(0) = \bm z_0, \qquad \bm z'(t) = -\nabla L(\bm z(t)).
\end{equation}
While (\ref{eq:gradient_flow}) describes the dynamics $\bm z(t)$ in the parameter space $\mathcal{D}$, one is ultimately interested in the learning trajectories of the function $f_{\bm z(t)}$ they represent. Let $\mathcal{F}:=\{ f: \RR \to \RR\}$ denote the space of square-integrable scalar functions, and $\varphi: \mathcal{D} \to \mathcal{F}$ the function-valued mapping $\varphi(\bm z) := f_{\bm z}$. Then, by observing
that $L(\bm z) = R ( \varphi(\bm z))$, with $R(f)= \frac{1}{2} \sum_{j \leq s} |f(x_j) - y_j|^2$, the chain rule immediately shows that the dynamics of $g(t):=\varphi(\bm z(t)) = f_{\bm z(t)}$ are described by
\begin{equation}
\label{eq:gradient_flow_functional}
    g(0) = f_{\bm z_0},\qquad g'(t) = - \nabla \varphi(\bm z(t))^\top \nabla \varphi(\bm z(t)) \nabla R( g(t))~. 
\end{equation}
The dynamics in function space are thus controlled by a time-varying \emph{tangent kernel} $K_t:= \nabla \varphi(\bm z(t))^\top \nabla \varphi(\bm z(t))$, that under appropriate assumptions on the parametrisation remains nearly constant throughout the optimization  \cite{jacot2018neural, chizat2018note}. This kernel may be interpreted as a change of metric resulting from a specific parametrisation of the functional space. 

A simple, yet important, observation from (\ref{eq:gradient_flow_functional}) is that the evolution of the predicted function depends on two essential aspects: the initialization and the parametrisation. 
In particular, the coefficients of each neuron can be rescaled using a positive scale factor $k>0$ according to $(a,b,c) \mapsto (a k, b k, c/k)$. Similarly, the normalization factor $\alpha(m)$ can be absorbed into $c_i$ for all $i$.

In order to study the impact of different choices of parametrisation and initialisation, as well as the asymptotic behavior of (\ref{eq:gradient_flow_functional}) as $m$ increases, we introduce the following \emph{canonical parameterization}:
\begin{equation}\label{eq:reduced_parameterization}
    \tilde f_{\bm w}(x) = \frac{1}{m} \sum_{i=1}^m r_i \langle {\tilde x}, d(\theta_i) \rangle_{+}, \qquad \bm w = (\bm r \in \RR^m, \bm \theta \in [0, 2\pi)^m),  \qquad {\tilde x} = (x,1).
\end{equation}
where $d(\theta_i) = (\cos \theta_i, \sin \theta_i) \in S^1$ and we chose $\alpha(m) = m$. As shown in Section \ref{sec:reduced_dynamics}, this choice of normalization, together with an initialisation where $r_i$ are sampled iid from a distribution with $O(1)$ variance, provides a well-defined limit of the dynamics as $m\to \infty$, and provides a basis to compare the other normalization and parametrisation choices.

\subsection{Reparametrisation and Normalization} 

Thanks to the fact that the canonical parametrisation admits a well-defined limit, 
we can study the effect of specific parametrisations and normalisations 
by expressing them as changes of variables relative to $\bm w \mapsto \tilde{f}_{\bm w}$. 
The mapping from the weights $({\bm a},{\bm b},{\bm c})$ to $({\bm r},{\bm \theta})$ in the canonical parametrisation is given by
\begin{equation}
    \bm \pi
    (a_i, b_i, c_i)  = \left(\frac{m}{\alpha(m)} c_i \sqrt{a^2_i + b^2_i}, \arctan(-b_i/a_i) \right) =(r_i,\theta_i). 
\end{equation}
This map satisfies $\tilde f_{\bm \pi(\bm z)} = f_{\bm z}$. We also define the loss with respect to this parameterization as $\tilde L(\bm w) = L(\bm z)$ where $\bm w = \bm \pi(\bm z)$. Section \ref{sec:full-dynamics} explains how the Jacobian of $\bm \pi$ affects the dynamics relative to the canonical dynamics. In particular, this illustrates the 
drastic effects of a choice of normalisation $\alpha(m) = o(m)$. 

Note that $f_{\bm z}(x)$ (and thus $\tilde f_{\bm w}(x)$) are \emph{continuous piecewise linear functions} in $x$. The \emph{knots} of of these functions are the points where the operand inside a ReLU activation changes sign: $e_i = \frac{b_i}{a_i}, \, a_i \ne 0, \, i=1,\ldots,m.$ 
See the left image in Figure~\ref{fig:knots} for an example of a function $f_{\bm z}$ and its knots.

\subsection{Visualizing the Network State}

\begin{figure}
    \centering
    \minipage{0.4\textwidth}
    \includegraphics[width=\textwidth]{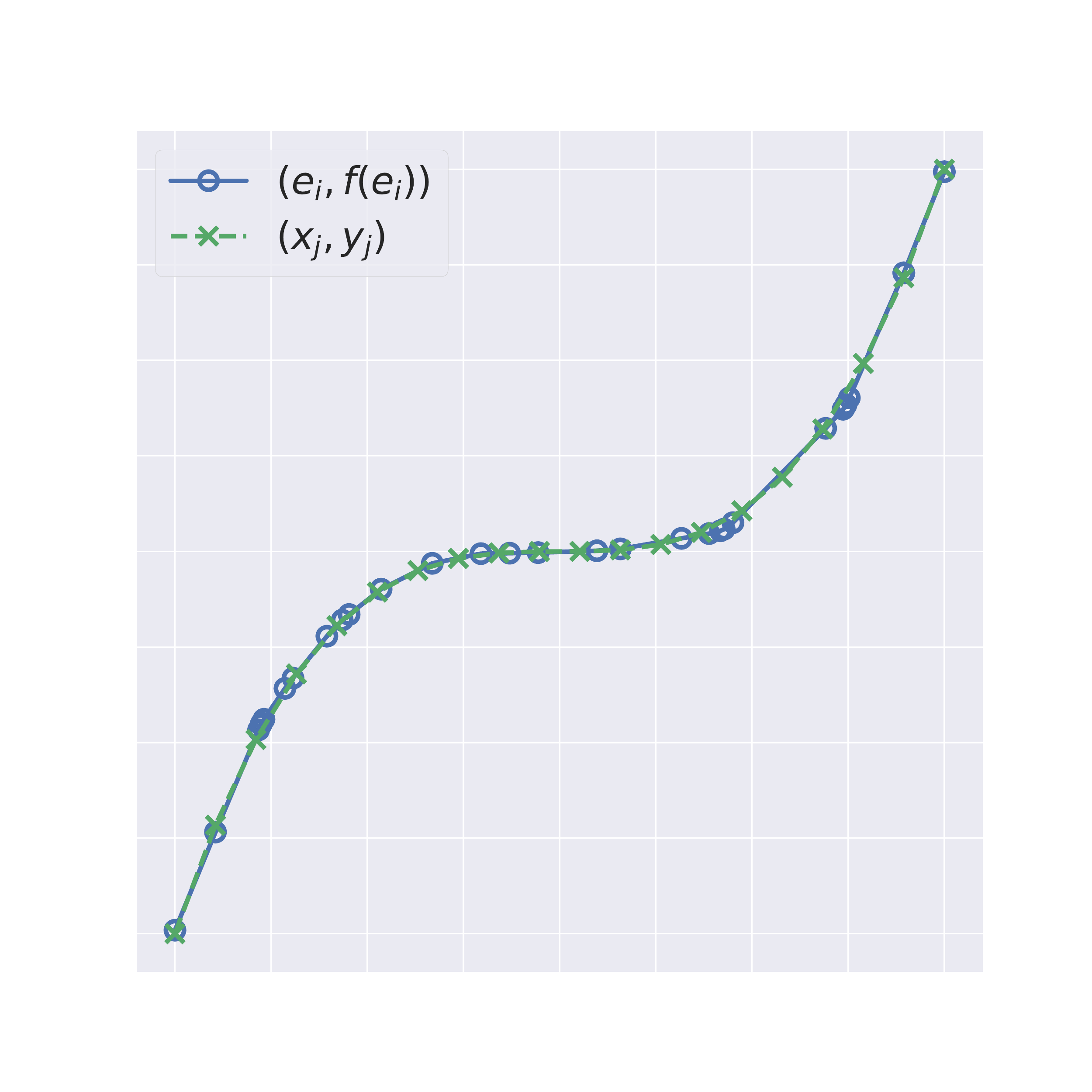}
    \endminipage\hfill
    \minipage{0.4\textwidth}
    \includegraphics[width=\textwidth]{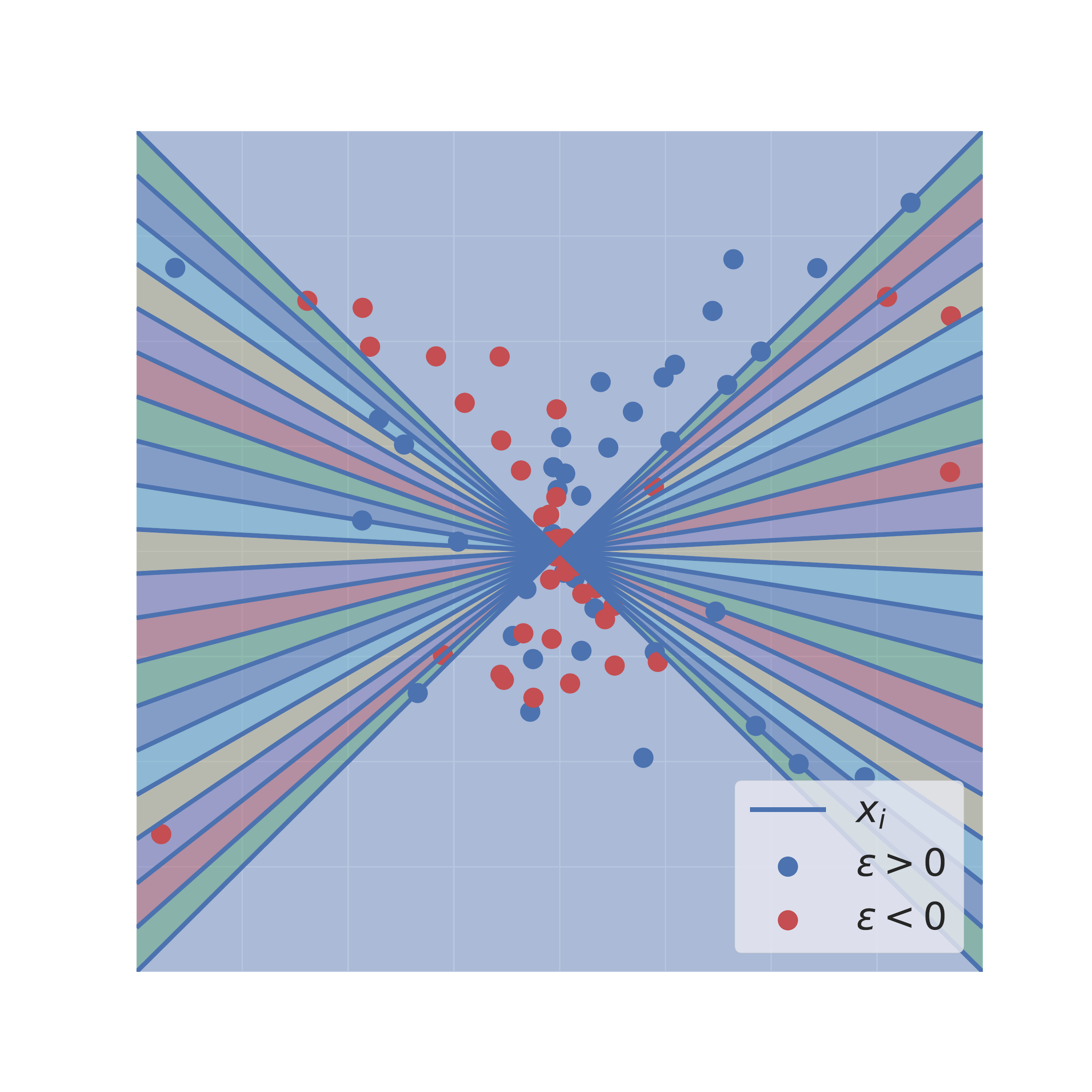}
    \endminipage\hfill\vspace{-15pt}
    \caption{\textit{Left:} Input samples (blue x's) to which we fit a neural network $f_{\bm z}(x)$ using the least squares loss \eqref{eq:leastsquares}. Clearly, $f_{\bm z}(x)$ is piecewise linear with the boundaries between pieces occurring at $(e_i, f_{\bm z}(e_i))_{i=1}^m$ (green circles). 
    \textit{Right:} the neurons visualized as $(u_i, v_i)$ in \eqref{eq:uv}. Each particle represents a neuron and the color indicates the sign of $\epsilon_i$. The samples $x_j$ correspond to the lines $u x_j + v = 0$ in this space. These sample lines divide the space into the colored regions which correspond to different activation patterns. \vspace{-15pt}}
    \label{fig:knots}
\end{figure}

In this paper, we visualize the state of the network in two ways. First, we  plot $f_{\bm z}$ to visualize the piecewise-linear function encoded by the network and its knots at given time (Figure~\ref{fig:knots}, Left). We use the alternative parameterization $\tilde f_{\bm w}$ to visualize the reduced state of the network in $\RR^2$, by plotting a neuron $(r_i, \theta_i)$ as a particle with coordinates
\begin{equation}\label{eq:uv}
(u_i, v_i) = (|r_i|\cos(\theta_i), |r_i|\sin(\theta_i))
\end{equation}
and coloring each particle according to $\epsilon_i = \text{sign}(r_i)$. Figure~\ref{fig:knots} (right) shows an example of this visualization. In the standard parameterization, a neuron $(a_i, b_i, c_i)$ coincides with a point $x$ if $\frac{b_i}{a_i} = \frac{v_i}{u_i} = x$. Thus, in the visualization, a sample point $x_j$ corresponds to the line satisfying the equation $u x_j + v = 0$. These sample lines, divide the phase space into \emph{activation regions} where a neuron has a fixed \emph{activation pattern} (the colored regions in the right image of Figure~\ref{fig:knots}).

\section{Training Dynamics}

Our goal is to solve \eqref{eq:leastsquares} using the \emph{gradient flow} \eqref{eq:gradient_flow} of the loss $L(\bm z)$\footnote{To be precise, we should replace the gradient $\nabla L(\bm z)$ with the \emph{Clarke subdifferential} $\partial L(\bm z)$~\cite{clarke1975generalized}, since $L(\bm z)$ is only piecewise smooth.  At generic smooth points $\bm z$, the subdifferential coincides with the gradient $\partial L(\bm z(t)) = \{\nabla L(\bm z)$\}.}.
We begin in Section~\ref{sec:reduced_dynamics} by investigating the dynamics in the canonical parametrisation:
\begin{equation}\label{eq:reduced_gradient_flow}
    \bm w(0) = \bm w_0, \qquad \bm w'(t) = -\nabla \tilde L(\bm w(t)).
\end{equation}
While the relationship between flows \eqref{eq:gradient_flow} and \eqref{eq:reduced_gradient_flow} is nonlinear, we show in Section~\ref{sec:full-dynamics} that these are related by a simple change of metric.

\subsection{Dynamics in the Canonical Parameters}\label{sec:reduced_dynamics}

We now examine the qualitative behavior of the gradient flow using the canonical parameterization \eqref{eq:reduced_parameterization}. In this setting, 
when $({r}_i, \theta_i)$ are initialized i.i.d. from some base distribution $\mu_0({r},\theta)$, the function $\tilde{f}_{\bm w}$ is well-defined pointwise as $m \to \infty$ (by the law of large numbers). We optimize the loss over both $\bm \theta$ and $\bm {{r}}$ using the reduced gradient flow \eqref{eq:reduced_gradient_flow}.


Using the initialization $(r_i, \theta_i) \sim \mu$, and following the mean-field formulation of single-hidden layer neural networks of \cite{mei_mean_2018, chizat2018global, rotskoff2018neural}, we express the function as an expectation with respect to the probability measure $\mu$ over the cylinder $\mathcal{D} = \mathbb{R} \times S^1$:
\begin{equation}
    \tilde{f}_{\bm w}(x) = \int_\mathcal{D} \varphi(w;x) \mu^{(m)}(\dd w)~,
\end{equation}
where $\varphi(w;x):= {r}_i \langle \tilde{x}, d(\theta_i) \rangle_+$ and $\mu^{(m)}(w) = \frac{1}{m} \sum_{i=1}^m \delta_{w_i}(w)$ is the empirical measure determined by the $m$ particles $w_i$, $i=1\dots m$. The least squares loss in this case becomes 
\begin{eqnarray}
    \tilde{L}(\bm w) &=& \frac{1}{2}\| \tilde{f}_{\bm w} - y \|_\mathcal{X}^2 \\
    &=& C_y - \frac{1}{m} \sum_{i=1}^m \langle \varphi_{w_i} , y \rangle_\mathcal{X} + \frac{1}{2m^2} \sum_{i,i'=1}^m \langle \varphi_{w_i} ,\varphi_{w_{i'}} \rangle_\mathcal{X} ~,
\end{eqnarray}
where $\langle f, g \rangle_\mathcal{X} := \sum_{j=1}^s f(x_j) g(x_j)$ is the empirical dot-product. 
This loss may be interpreted as the Hamiltonian of a system of $m$-interacting particles, under external field $F$ and interaction kernel $K$ defined respectively by 
$F(w):= \langle \varphi_w, y \rangle_\mathcal{X}$,$K(w, w'):= \langle \varphi_{w} ,\varphi_{w'} \rangle_\mathcal{X}$.
We may also express this Hamiltonian in terms of the empirical measure, by abusing notation
$$\tilde{L}(\mu^{(m)}) = C_y - \int_\mathcal{D} F(w) \mu^{(m)}(\dd w) + \frac{1}{2} \iint_{\mathcal{D}^2} K(w,w') \mu^{(m)}(\dd w) \mu^{(m)}(\dd w')~.$$

A direct calculation shows that the gradient $\nabla_{w_i} \tilde{L}(\bm w)$ can be written as 
$$\frac{m}{2} \nabla_{w_i} \tilde{L}(\bm w) = \nabla_w V(w_i; \mu^{(m)})~,$$
where $V$ is the potential function $
    V(w;\mu):= -F(w) + \int_\mathcal{D} K(w, w') \mu(\dd w')$.
    
The gradient flow in the space of parameters $\bm w$ can now be interpreted in Eulerian terms as a gradient flow in the space of measures over $\mathcal{D}$, by using the notion of Wasserstein gradient flows \cite{mei_mean_2018, chizat2018global, rotskoff2018neural}. Indeed,  particles evolve in $\mathcal{D}$ by ``feeling'' a velocity field $\nabla V$ defined in $\mathcal{D}$. 
This formalism allows us now to describe the dynamics independently of the number of neurons $m$, by replacing the empirical measure $\mu^{(m)}$ 
by any generic probability measure $\mu$ in $\mathcal{D}$. The evolution of a measure under a generic time-varying vector field is given by the 
so-called continuity equation:\footnote{Understood in the weak sense, \ie,
$\partial_t \left(\int_\mathcal{D} \phi(w) \mu_t(\dd w)\right) = - \int \langle \nabla \phi(w), \nabla V(w;\mu_t) \rangle \mu_t(\dd w)$, 
$\forall \phi \in C^1_c(\mathcal{D})$ continuously differentiable and with compact support.}
\begin{equation}
\label{eq:continuity}
    \partial_t \mu_t = \mathrm{div} ( \nabla V \mu_t)~.
\end{equation}
 The global convergence of this PDE for interaction kernels arising from single-hidden layer neural networks has been established under mild assumptions in \cite{mei_mean_2018, chizat2018global, rotskoff2019global}. Although the conditions for global convergence hold in the mean field limit $m \to \infty$, a propagation-of-chaos argument from statistical mechanics gives Central Limit Theorems for the behavior of finite-particle systems as fluctuations of order $1/\sqrt{m}$ around the mean-field solution; see \cite{rotskoff2018neural, rotskoff2019global} for further details. 

The dynamics in $\mathcal{D}$ are thus described by the velocity field $\nabla V(w; \mu_t)$, which depends on the current state of the system through the measure $\mu_t(w)$, describing the probability of encountering a particle at position $w$ at time $t$. We emphasize that equation (\ref{eq:continuity}) is valid for any measure, including the empirical measure $\mu^{(m)}$, and is therefore an exact model for the dynamics in both the finite-particle and infinite-particle regime. Let us now describe its specific form in the case of the empirical loss given above. 

Assume without loss of generality that the data points $x_j \in \mathbb{R}$, $j\leq s$ satisfy $x_j \leq x_{j'}$ whenever $j < j'$.
Denote $$\mathcal{C}_j := \{ j' ; j' \leq j\} \text{ for } j=1\dots s, \qquad \mathcal{C}_{s+j} := \{ j' ; j' > j \},\text{ for }j=1\dots s-1~.$$
We observe that for each $j$, two angles $\alpha_j^{\pm} = \arctan(x_j) \pm \pi/2$ partition the circle $S^1$ into $2s-1$ regions $\mathcal{A}_k$ (visualized as the colored regions in Figure~\ref{fig:knots}), 
which are in one-to-one correspondence with the sets $\mathcal{C}_k$, in the sense that 
$$\theta \in \mathcal{A}_k \Longleftrightarrow \{j ; \langle \tilde{x}_j , d(\theta) \rangle \geq 0 \} = \mathcal{C}_k~.$$
We also denote by $\mathcal{B}_j$ 
the half-circle where $\langle \tilde{x}_j, \theta\rangle \geq 0$. Let $t(\theta)$ be the tangent vector of $S^1$ at $\theta$ (so $t(\theta) = d(\theta)^\bot$) and $ w=( {r}, \theta)$, where we suppose $\theta \in \mathcal{A}_k$. A straightforward calculation (presented in  Appendix \ref{sec:mfappendix}) shows that the angular and radial components of the velocity field $\nabla V(w; \mu_t)$ are given by 
\begin{equation}\label{eq:v_field}
    \begin{aligned}
    &\nabla_\theta V(w; \mu_t)  =
         {r} \left \langle \sum_{j \in \mathcal{C}_k} \rho_j(t) \tilde{x}_j, t(\theta) \right \rangle~,\\
    &\nabla_{{r}} V(w; \mu_t) =   \left \langle \sum_{j \in \mathcal{C}_k} \rho_j(t) \tilde{x}_j, d(\theta) \right \rangle ~,
    \end{aligned}
\end{equation}
where $\rho_j(t) =  
\int_{\mathbb{R} \times \mathcal{B}_j} {r} \langle \tilde{x}_j,\theta  \rangle \mu_t(\dd {r}, \dd \theta)  - y_j$
is the residual at point $x_j$ at time $t$. 
These expressions show that the dynamics are entirely controlled by the 
$s$-dimensional vector of residuals $\bm \rho(t)=(\rho_1(t), \dots \rho_s(t))$, and that the velocity field is \emph{piece-wise linear} 
on each cylindrical region $\mathbb{R} \times \mathcal{A}_k$ (e.g. Figure~\ref{fig:attractor} in Appendix~\ref{sec:additional_experiments}). 
Under the assumptions that ensure global convergence of (\ref{eq:continuity}), we have $\lim_{t \to \infty} \tilde{L}(\mu_t) = 0$, 
and therefore $\| \bm \rho(t) \| \to 0$. The oscillations of $\bm \rho(t)$ as it converges 
to zero determine the relative orientation of the flow within each region. 
The exact dynamics for the vector of residuals are given by the following proposition, proved in Appendix \ref{sec:mfappendix}:
\begin{proposition}
\label{prop:resiode}
For each $j$, 
\begin{equation}
\label{eq:oderesiduals0}
\dot{\rho}_j(t) =  -\tilde{x}_j^\top \sum_{k; \mathcal{A}_k \subset \mathcal{B}_j}  \Sigma_k(t) \left(\sum_{j' \in \mathcal{C}_k} \rho_{j'}(t) \tilde{x}_{j'}\right)~, 
\end{equation}
where 
$$\Sigma_k(t) = \int_{\mathbb{R} \times \mathcal{A}_k} \left(r^2 t(\theta)\, t(\theta)^\top + d(\theta)\, d(\theta)^\top\right) \mu_t(\dd r,\dd \theta) $$
tracks the covariance of the measure along each cylindrical region. 
\end{proposition}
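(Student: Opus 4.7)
The plan is to differentiate the definition of $\rho_j(t)$ using the continuity equation \eqref{eq:continuity}, expressing the result as a sum over activation regions, and then recognizing the quadratic form in the velocity field \eqref{eq:v_field} as $\Sigma_k(t)$.

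First I would write the residual as a linear functional of the measure:
\begin{equation*}
    \rho_j(t) = \int_{\mathcal{D}} \phi_j(w)\, \mu_t(\dd w) - y_j, \qquad \phi_j(r,\theta) := r\,\langle \tilde{x}_j, d(\theta)\rangle_+ = r\,\langle \tilde{x}_j, d(\theta)\rangle \,\mathbf{1}_{\mathcal{B}_j}(\theta).
\end{equation*}
Applying the weak form of the continuity equation to the test function $\phi_j$ gives
\begin{equation*}
    \dot{\rho}_j(t) = -\int_{\mathcal{D}} \langle \nabla \phi_j(w), \nabla V(w; \mu_t)\rangle\, \mu_t(\dd w),
\end{equation*}
so I need to evaluate the gradient of $\phi_j$. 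On $\mathbb{R}\times \mathcal{B}_j$ one has $\nabla_r \phi_j = \langle \tilde{x}_j, d(\theta)\rangle$ and $\nabla_\theta \phi_j = r\,\langle \tilde{x}_j, t(\theta)\rangle$ (recall $d'(\theta) = t(\theta)$); outside $\mathcal{B}_j$ both vanish. The Lipschitz kink along $\partial \mathcal{B}_j = \{\alpha_j^\pm\}$ is harmless because $\langle \tilde{x}_j, d(\theta)\rangle$ vanishes there, so the weak derivative extends continuously across the boundary and no singular boundary term appears; this can be justified rigorously by mollifying $[\cdot]_+$ and passing to the limit.

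Next I would substitute the formulas for $\nabla V(w;\mu_t)$ from \eqref{eq:v_field}. For $\theta \in \mathcal{A}_k$, denoting $q_k(t) := \sum_{j'\in \mathcal{C}_k} \rho_{j'}(t)\,\tilde{x}_{j'}$, the inner product becomes
\begin{equation*}
\langle \nabla \phi_j, \nabla V\rangle = \langle \tilde{x}_j, d(\theta)\rangle\,\langle q_k(t), d(\theta)\rangle + r^2\,\langle \tilde{x}_j, t(\theta)\rangle\,\langle q_k(t), t(\theta)\rangle.
\end{equation*}
Using the rank-one identity $\langle u,v\rangle\langle q,v\rangle = u^\top (v v^\top) q$ with $v = d(\theta)$ and $v = t(\theta)$ respectively, the integrand becomes $\tilde{x}_j^\top \bigl(d(\theta)d(\theta)^\top + r^2 t(\theta)t(\theta)^\top\bigr) q_k(t)$.

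Finally, I would split the integral over $\mathbb{R}\times \mathcal{B}_j$ into the disjoint union over cylindrical regions $\mathbb{R}\times \mathcal{A}_k$ with $\mathcal{A}_k \subset \mathcal{B}_j$. Since $q_k(t)$ is constant on each such region, it pulls out of the integral, and what remains is precisely $\Sigma_k(t) q_k(t)$. Collecting the outer $\tilde{x}_j^\top$ gives \eqref{eq:oderesiduals0}. The main obstacle is not algebraic but regularity-related: justifying the use of the continuity equation against the piecewise-$C^1$ test function $\phi_j$, which I would handle by approximation with smooth, compactly supported test functions as in the standard theory of Wasserstein gradient flows and noting that the trace of $\phi_j$ vanishes along $\partial \mathcal{B}_j$.
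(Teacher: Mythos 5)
Your proposal is correct and follows essentially the same route as the paper's proof in Appendix~\ref{sec:mfappendix}: differentiate $\rho_j(t)$ under the integral via the weak continuity equation with test function $\varphi(w;x_j)$, substitute the expressions for $\nabla V$ from~\eqref{eq:v_field}, rewrite the integrand as a quadratic form, and split the domain over the regions $\mathcal{A}_k \subset \mathcal{B}_j$ to recognize $\Sigma_k(t)$. The only genuine difference is that you explicitly flag and resolve the regularity issue --- the test function $\phi_j$ is only piecewise $C^1$ across $\partial\mathcal{B}_j$, and you note that the kink is harmless because $\langle \tilde{x}_j, d(\theta)\rangle$ vanishes there --- whereas the paper simply computes as if $\varphi(\cdot;x_j)$ were smooth; this is a small but worthwhile tightening of the argument.
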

Equation (\ref{eq:oderesiduals0}) 
defines a system of ODEs for the residuals $\bm \rho(t)$, but its coefficients are time-varying, and behave roughly as quadratic terms in $\bm \rho(t)$ (since they are second-order moments of the measure whereas the residuals are first-order moments). It may be possible to obtain asymptotic control of the oscillations $\bm \rho(t)$ by applying Duhamel's principle, but this is left for future work.

Now let $w=({r},\theta)$ with $\theta$ at a boundary of two regions $\mathcal{A}_k$, $\mathcal{A}_{k+1}$. The velocity field is modified at the transition  by 
$$
\nabla V(w)\lvert_{\mathcal{A}_k} - \nabla V(w)\lvert_{\mathcal{A}_{k+1}} = \rho_{j*}(t) \left( 
\begin{array}{c}
{r}  \langle \tilde{x}_{j*}, t(\theta) \rangle \\
\langle \tilde{x}_{j*}, \theta \rangle
\end{array}\right)
~,$$
where $j_*$ is such that $\langle \tilde{x}_{j*}, d(\theta) \rangle =0$, since $\theta$ is at the boundary of $\mathcal{A}_k$. It follows that the only discontinuity is in the angular direction, of magnitude $|{r} \rho_{j*}(t)| \|\tilde x_{j^*}\|$.
In particular, an interesting phenomenon arises when the angular components of $\nabla V(w)\lvert_{\mathcal{A}_k}$ and $\nabla V(w)\lvert_{\mathcal{A}_{k+1}}$ have opposite signs, corresponding to an ``attractor'' or ``repulsor'' that attracts/repels particles along the direction given by $\tilde{x}_{j*}$ (See Figure~\ref{fig:attractor} in Appendix~\ref{sec:additional_experiments}). Writing $s_k = \left \langle \sum_{j \in \mathcal{C}_k} \rho_j(t) \tilde{x}_j, t(\theta) \right \rangle$, we deduce from ~\eqref{eq:v_field} that this occurs when
\begin{equation}\label{eq:attractor}
|s_k| < |\rho_{j*}(t)|\|\tilde x_{j^*}\|
 \text{ and } \text{sign}(s_k) \neq \text{sign}(\rho_{j*}(t)).    
\end{equation}
We expand this condition in the following Lemma.
\begin{lemma}\label{le:attractor}
A data point $x_k$ is an attractor/repulsor if and only if
\begin{equation}
    \sum_{i=1}^{k-1} \rho_i \rho_k \langle \tilde x_i, \tilde x_k \rangle > - \rho_k^2 \|\tilde x_k\|^2, \mbox{ or } \sum_{i=k+1}^{s} \rho_i \rho_k \langle \tilde x_i, \tilde x_k \rangle > - \rho_k^2 \|\tilde x_k\|^2.
\end{equation}
\end{lemma}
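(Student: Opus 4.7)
The plan is to unpack condition \eqref{eq:attractor} at each of the two boundaries of $S^1$ corresponding to the data point $x_k$, namely the antipodal angles $\alpha_k^{\pm} = \arctan(x_k) \pm \pi/2$ at which $\langle \tilde x_k, d(\theta)\rangle = 0$. These are precisely the boundaries at which the index $k$ is toggled in the active set $\mathcal{C}$, so \eqref{eq:attractor} with $j^* = k$ applies there; since an attractor/repulsor at $x_k$ may arise at either of them, the disjunction in the lemma will come from running the same calculation twice.

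First, at the boundary $\alpha_k^+$, on the side where $k \in \mathcal{C}$, the ordering $x_1 \leq \cdots \leq x_s$ forces $\mathcal{C} = \{1, \ldots, k\}$. Since $d(\theta) \perp \tilde x_k$ there, the unit tangent satisfies $t(\theta) = \pm \tilde x_k / \|\tilde x_k\|$; choosing the orientation consistent with the sign convention of \eqref{eq:attractor} so that $\langle \tilde x_k, t(\theta)\rangle = -\|\tilde x_k\|$, I substitute into the definition of $s_k$ to get
\begin{equation*}
    s_k = \bigg\langle \sum_{i=1}^k \rho_i \tilde x_i,\, t(\theta)\bigg\rangle = -\frac{1}{\|\tilde x_k\|}\sum_{i=1}^k \rho_i \langle \tilde x_i, \tilde x_k\rangle.
\end{equation*}
Plugging this into the sign condition $\mathrm{sign}(s_k) \neq \mathrm{sign}(\rho_k)$ (i.e.\ $s_k \rho_k < 0$) of \eqref{eq:attractor} yields $\rho_k \sum_{i=1}^k \rho_i \langle \tilde x_i, \tilde x_k\rangle > 0$, and isolating the $i = k$ term, which contributes $\rho_k^2\|\tilde x_k\|^2$, gives the first inequality of the lemma.

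Second, repeating the computation at the antipodal boundary $\alpha_k^-$, the side with $k \in \mathcal{C}$ now corresponds to $\mathcal{C} = \{k, k+1, \ldots, s\}$ (the active set on the opposite half-circle), and an identical manipulation produces the second inequality $\sum_{i=k+1}^s \rho_i \rho_k \langle \tilde x_i, \tilde x_k\rangle > -\rho_k^2 \|\tilde x_k\|^2$.

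The main obstacle is not conceptual but sign bookkeeping: identifying which of $\{1,\ldots,k\}$ and $\{k,\ldots,s\}$ sits on the $k \in \mathcal{C}$ side of each of the two antipodal boundaries, and pinning down the orientation of $t(\theta)$ relative to $\tilde x_k$ consistently with \eqref{eq:attractor}. These conventions are fully determined by the partition of $S^1$ set up in Section~\ref{sec:reduced_dynamics}, and once they are fixed the algebraic reduction above is immediate.
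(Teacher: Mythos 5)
Your algebraic reduction of the sign condition at each of the two antipodal boundaries is correct, and your identification of the active sets ($\{1,\ldots,k\}$ on the $\alpha_k^+$ side, $\{k,\ldots,s\}$ on the $\alpha_k^-$ side) is the right bookkeeping. But there is a genuine gap: condition \eqref{eq:attractor} is a conjunction of \emph{two} requirements --- the sign condition $\mathrm{sign}(s_k) \neq \mathrm{sign}(\rho_{j^*})$ \emph{and} the magnitude condition $|s_k| < |\rho_{j^*}|\|\tilde x_{j^*}\|$ --- and your argument uses only the first. Writing $A = \sum_{i=1}^{k-1}\rho_i\langle\tilde x_i,\tilde x_k\rangle$, the sign condition gives the lower bound $\rho_k A > -\rho_k^2\|\tilde x_k\|^2$ as you compute, but the magnitude condition (together with the sign condition) additionally forces the upper bound $\rho_k A < 0$. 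Concretely, with $s_k = -\frac{1}{\|\tilde x_k\|}(A + \rho_k\|\tilde x_k\|^2)$, requiring $|s_k| < |\rho_k|\|\tilde x_k\|$ is $|A + \rho_k\|\tilde x_k\|^2| < |\rho_k|\|\tilde x_k\|^2$, which combined with the sign constraint yields the two-sided condition
\begin{equation*}
-\rho_k^2\|\tilde x_k\|^2 < \rho_k\sum_{i=1}^{k-1}\rho_i\langle\tilde x_i,\tilde x_k\rangle < 0.
\end{equation*}

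Because you only derive the lower bound, what you have actually proved is the ``only if'' direction in weakened form: an attractor/repulsor at the $\alpha_k^+$ boundary implies your inequality. The ``if'' direction does not follow, since the one-sided inequality can hold trivially (e.g.\ whenever $\rho_k\sum_{i<k}\rho_i\langle\tilde x_i,\tilde x_k\rangle \geq 0$) while the two angular field components have the \emph{same} sign, in which case particles pass through the boundary and there is neither an attractor nor a repulsor. You need to either carry the magnitude half of \eqref{eq:attractor} through to obtain the upper bound as well (and then argue the converse), or flag explicitly that you are only establishing necessity of the lemma's condition.
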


In words, mass will concentrate towards input points where the residual is currently large and of opposite sign from a weighted average of neighboring residuals. This is in stark contrast with the kernel dynamics (Section~\ref{sec:kernel_dynamics}), where there is no adaptation to the input data points.
We point out that this qualitative behavior has been established in \cite{maennel2018gradient} under appropriate initial conditions, sufficiently close to zero, in line with our mean-field analysis. 

\paragraph{Regularised Objective in Functional Space:} 
The global convergence of \eqref{eq:continuity} studied in 
\cite{chizat2018global} includes the case where the energy functional $\tilde{L}=R(\int \Phi \mu(\dd w))$ is augmented with a regulariser $\Gamma(\mu) = \int \gamma(w) \mu(\dd w)$ sharing the same degree of homogeneity as $\Phi$. If $\gamma(w)=|r|$ this corresponds to the \emph{Total Variation} norm of the signed Radon measure $\tilde{\mu}(w) = \int \mu(\dd r, w)$:
$\Gamma(\mu) = \| \tilde{\mu}\|_{\mathrm{TV}}$. The Wasserstein gradient flow on the resulting regularised objective 
\begin{equation}
\label{birk}
\min_{\tilde{\mu}} \tilde{L}(\tilde{\mu}) + \lambda \|\tilde{\mu} \|_{\mathrm{TV}}
\end{equation}
thus converges to global minimisers under appropriate initial conditions (which only apply in the infinite width regime $m=\infty$). The regularised dynamics are obtained by replacing $\nabla_{{r}} V(w; \mu_t) =   \left \langle \sum_{j \in \mathcal{C}_k} \rho_j(t) \tilde{x}_j, d(\theta) \right \rangle$ in (\ref{eq:v_field}) by 
$$\nabla_{{r}} V(w; \mu_t) =   \left \langle \sum_{j \in \mathcal{C}_k} \rho_j(t) \tilde{x}_j, d(\theta) \right \rangle + \lambda \mathrm{sign}(r)~.$$
In the case of scalar inputs, \cite{savarese2019infinite} recently characterised the solutions of (\ref{birk}) in functional space, as 
\begin{equation}
\label{eq:tvfunctional}
    \min_{f} L(f) + \lambda \int |f''(x)| \dd x~,
\end{equation}
for vanishing boundary conditions. In Section \ref{sec:kernel_dynamics} we will see that 
the kernel regime corresponds to a very different prior, where the $L^1$ norm on the second derivatives is replaced by a Hilbert $L^2$ norm. The distinction between kernel and active regime in terms of $L^1$ versus $L^2$ norms was already studied in \cite{bach2017breaking}.







\subsection{Dynamics in the Full Parameters}
\label{sec:full-dynamics}

The dynamics of gradient flow~\eqref{eq:gradient_flow} are different from the dynamics of the gradient flow~\eqref{eq:reduced_gradient_flow}. For the gradient flow in canonical parameters we have observed adaptive learning behavior under the assumption of iid distribution of parameter initialization. 
The full set of parameters $\bm z = (\bm a, \bm b, \bm c)$, may exhibit both kernel and adaptive behavior depending on the initialization. To characterize this behavior we rely on the following lemma:

\begin{lemma}\label{le:fixed_delta}
If $\bm z(t) = (\bm a(t), \bm b(t), \bm c(t))$ is a solution of the gradient flow \eqref{eq:gradient_flow}, then the quantities
\begin{equation}\label{eq:invariants}
\bm \delta = (c_i(t)^2 - a_i(t)^2 - b_i(t)^2)_{i=1}^m
\end{equation}
remain constant for all $t$.
In particular, given a reduced neuron $(r_i,\theta_i)$, we can uniquely recover the original neuron parameters  $(a_i,b_i,c_i)$ from $\delta_i$ computed from the initialization.
\end{lemma}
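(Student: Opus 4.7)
The plan is to prove the two statements in sequence: first, conservation of $\delta_i$ along gradient flow, then invertibility of the parameterisation map once $\delta_i$ is fixed.

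For conservation, I would exploit positive 1-homogeneity of the ReLU. Since $\sigma(u)=[u]_+$ satisfies $\sigma(\lambda u)=\lambda \sigma(u)$ for $\lambda>0$, the single-neuron term $g_i(a_i,b_i,c_i;x):=c_i[a_i x - b_i]_+$ satisfies Euler's identity jointly: $a_i \partial_{a_i} g_i + b_i \partial_{b_i} g_i = c_i [a_i x -b_i]_+ = c_i\partial_{c_i} g_i$ whenever $a_i x - b_i \neq 0$. Summing over samples and applying the chain rule to $L(\bm z)=\tfrac12\sum_j |f_{\bm z}(x_j)-y_j|^2$ yields the pointwise identity
\begin{equation*}
a_i\,\partial_{a_i} L + b_i\,\partial_{b_i} L = c_i\,\partial_{c_i} L, \qquad i=1,\dots,m.
\end{equation*}
Under the gradient flow $\dot{\bm z}=-\nabla L$, this gives
\begin{equation*}
\tfrac{d}{dt}\bigl(c_i^2-a_i^2-b_i^2\bigr) = -2\bigl(c_i\partial_{c_i} L - a_i\partial_{a_i} L - b_i\partial_{b_i} L\bigr)=0,
\end{equation*}
establishing the first claim.

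For the invertibility part, fix a neuron with canonical data $(r_i,\theta_i)$ and invariant $\delta_i$. Setting $s_i:=\sqrt{a_i^2+b_i^2}\ge 0$ and $\beta_i:=\alpha(m) r_i/m$, the definition of $\bm \pi$ gives $c_i s_i=\beta_i$ together with $(a_i,b_i)=s_i(\cos\theta_i,-\sin\theta_i)$ (matching the sign convention in $\theta_i=\arctan(-b_i/a_i)$). Combined with $c_i^2 - s_i^2 = \delta_i$, this produces a quadratic in $s_i^2$,
\begin{equation*}
s_i^4 + \delta_i\, s_i^2 - \beta_i^2 = 0,
\end{equation*}
whose unique nonnegative root is $s_i^2 = \tfrac{1}{2}\bigl(-\delta_i + \sqrt{\delta_i^2+4\beta_i^2}\bigr)$. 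Then $c_i=\beta_i/s_i$ (which automatically matches $\mathrm{sign}(c_i)=\mathrm{sign}(r_i)$ since $s_i>0$), and $(a_i,b_i)$ are read off from $(s_i,\theta_i)$, completing the inversion.

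The main obstacle is the nonsmoothness of ReLU: Euler's identity above only holds at points where $a_i x_j - b_i\neq 0$ for every sample $j$, so at kinks the statement must be interpreted via the Clarke subdifferential referenced in the paper's footnote. I would handle this by remarking that each selection $g\in \partial [a_i x_j - b_i]_+$ still satisfies $a_i g \cdot \partial_{a_i}(\cdot) + b_i g\cdot \partial_{b_i}(\cdot) = c_i g\cdot [a_i x_j - b_i]_+$ (both sides vanish at a kink), so the identity extends. A secondary, genuinely degenerate case is $r_i=0$ together with $\beta_i=0$ and $s_i=0$, for which $\theta_i$ is undefined; this is a measure-zero configuration not produced by generic initialisation, and the statement should be read as applying away from it.
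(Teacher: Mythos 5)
Your proof is correct and reaches the same conclusion, but via a genuinely different route for the conservation claim. The paper's proof computes all three partials $\nabla_{a_i}L,\nabla_{b_i}L,\nabla_{c_i}L$ explicitly from the ReLU form and then checks by direct substitution that $c_i\nabla_{c_i}L-a_i\nabla_{a_i}L-b_i\nabla_{b_i}L=0$. You instead invoke the structural fact that $g_i(a,b,c;x)=c[ax-b]_+$ is jointly $1$-homogeneous in $(a,b)$ and separately $1$-homogeneous in $c$, so Euler's identity gives $a\partial_a g_i+b\partial_b g_i=g_i=c\partial_c g_i$ pointwise, and this pushes through any loss of the form $R(f_{\bm z})$ by the chain rule. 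Your argument is more conceptual: it isolates the actual reason the invariant exists (the two layers carry the same degree of homogeneity), it works verbatim for any positively homogeneous activation and any differentiable functional $R$, and it makes clear that the invariance is a property of the parameterisation, not of the least-squares loss. The paper's computation is more elementary and self-contained for the reader who just wants to verify the ReLU/least-squares case. Your handling of the nonsmooth points via the Clarke subdifferential is exactly the right remark (both sides of the Euler identity vanish at a kink because $a_ix_j-b_i=0$ there), and your remark about the degenerate case $s_i=0$ is a fair caveat.

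For the inversion, the two arguments are essentially equivalent: the paper solves the quadratic $c_i^4-\delta_i c_i^2-r_i^2=0$ for $c_i^2$ (note the paper has a typo, writing $r_i^2=c_i\sqrt{a_i^2+b_i^2}$ where it means $r_i=\frac{m}{\alpha(m)}c_i\sqrt{a_i^2+b_i^2}$), while you solve the conjugate quadratic $s_i^4+\delta_i s_i^2-\beta_i^2=0$ for $s_i^2=a_i^2+b_i^2$. The two are related by $c_i^2 s_i^2=\beta_i^2$, so they carry the same information; your version has the small advantage that it produces $(a_i,b_i)=s_i(\cos\theta_i,-\sin\theta_i)$ directly, making the sign bookkeeping explicit.
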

Lemma~\ref{le:fixed_delta} allows us to analyze how canonical parameters evolve 
under \emph{full} gradient flow in $(\bm a, \bm b, \bm c)$. Overall, the behavior is qualitatively the same, 
except it is in addition dependent on the relative scale of redundant parameters. 




\begin{proposition}\label{thm:reduced_parameter_grad}
Let $\bm z(t)$ be a solution of the gradient flow \eqref{eq:gradient_flow} of $L(\bm z)$, and let $\bm \delta = (\delta_i) \in \RR^m$ be the vector of invariants~\eqref{eq:invariants}, which depend only on the initialization $\bm z(0)$. If $\bm w(t) = (\bm r(t), \bm \theta(t))$ is curve of canonical parameters corresponding to $\bm z(t)$, then we have that
\begin{equation}
\dot{\bm w}_i(t) = \bm P_i \cdot \nabla_{\bm w_i} \tilde L(\bm w),
\quad i=1,\ldots,m,
\end{equation}
where
\begin{equation}\label{eq:neuron_kernel}
\bm P_i  = \begin{bmatrix}
    \frac{m^2}{\alpha(m)^2}(a_i^2 + b_i^2 + c_i^2)  & 
    0        \\
    0   & \frac{1}{a^2_i + b^2_i} \\
\end{bmatrix}.
\end{equation}
\end{proposition}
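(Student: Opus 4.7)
The plan is to recognise that $\bm w = \bm \pi(\bm z)$ is a change of variables, and to compute the pushed-forward velocity field via the chain rule; the matrix $\bm P_i$ will then appear as the Gram matrix of the Jacobian of $\bm \pi$ restricted to neuron $i$. Since $L(\bm z) = \tilde L(\bm \pi(\bm z))$, the chain rule gives $\nabla_{\bm z} L = D\bm \pi(\bm z)^\top \, \nabla_{\bm w} \tilde L(\bm w)$. Substituting into the gradient flow $\dot{\bm z} = -\nabla_{\bm z} L$ and differentiating $\bm w(t) = \bm \pi(\bm z(t))$ along the trajectory yields
$$\dot{\bm w}(t) \;=\; D\bm \pi(\bm z(t))\, \dot{\bm z}(t) \;=\; -\, D\bm \pi(\bm z)\, D\bm \pi(\bm z)^\top\, \nabla_{\bm w} \tilde L(\bm w).$$
So the proposition reduces to identifying $D\bm \pi\, D\bm \pi^\top$ with the block-diagonal matrix whose $i$-th block is $\bm P_i$ (up to the sign convention in the statement).

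Because $\bm \pi$ acts independently on each triple $(a_i,b_i,c_i) \mapsto (r_i,\theta_i)$, the Jacobian $D\bm \pi$ is block-diagonal with $2 \times 3$ blocks $D\bm \pi_i$, and hence so is $D\bm \pi\, D\bm \pi^\top$. It therefore suffices to verify the $2\times 2$ identity $D\bm \pi_i\, D\bm \pi_i^\top = \bm P_i$ on each neuron. I would compute the six partial derivatives of $r_i = (m/\alpha(m))\, c_i \sqrt{a_i^2+b_i^2}$ and $\theta_i = \arctan(-b_i/a_i)$ with respect to $(a_i,b_i,c_i)$, assemble $D\bm \pi_i$, and multiply.

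The $(1,1)$ entry collects two terms proportional to $c_i^2 a_i^2/(a_i^2+b_i^2)$ and $c_i^2 b_i^2/(a_i^2+b_i^2)$ from $\partial_{a_i} r_i$ and $\partial_{b_i} r_i$, plus a third term $(a_i^2+b_i^2)$ from $\partial_{c_i} r_i$, combining cleanly to $(m^2/\alpha(m)^2)(c_i^2 + a_i^2 + b_i^2)$. The $(2,2)$ entry collapses as $(b_i^2 + a_i^2)/(a_i^2+b_i^2)^2 = 1/(a_i^2+b_i^2)$. The decisive step is the $(1,2)$ entry: the first two summands contribute $(m/\alpha(m))\, c_i(a_i b_i - b_i a_i)/(a_i^2+b_i^2)^{3/2} = 0$, and the third vanishes because $\partial_{c_i} \theta_i = 0$. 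This cancellation is the essential geometric content: the radial and angular directions in the $(a,b)$-plane are orthogonal, and $c_i$ couples only to the radial coordinate $r_i$.

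There is no real obstacle beyond careful bookkeeping of the derivatives. Two small caveats I would flag: the chart $\bm \pi$ has a coordinate singularity at $a_i^2+b_i^2=0$, but Lemma~\ref{le:fixed_delta} shows that the conserved invariant $\delta_i = c_i^2 - a_i^2 - b_i^2$ uniquely reconstructs $(a_i,b_i,c_i)$ from $(r_i,\theta_i)$ away from a negligible degenerate locus, so $\bm \pi$ is a local diffeomorphism along generic trajectories; and the sign convention in the displayed equation is implicit, since the derivation above naturally produces $\dot{\bm w}_i = -\bm P_i \nabla_{\bm w_i} \tilde L(\bm w)$.
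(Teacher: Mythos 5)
Your proof is correct and follows the same route as the paper's (Proposition~\ref{thm:reduced_parameter_grad_app}): compute the $2\times 3$ Jacobian of $\bm\pi$ neuron-by-neuron and verify that the Gram matrix $D\bm\pi_i\, D\bm\pi_i^\top$ has the claimed diagonal entries and vanishing off-diagonal. You also rightly flag the sign: the chain rule yields $\dot{\bm w}_i = -\bm P_i\,\nabla_{\bm w_i}\tilde L$, and the paper's statement drops the minus (the appendix also writes the Gram matrix as $\nabla(\bm\pi)^\top\nabla(\bm\pi)$, a $3\times 3$ object, where $\nabla(\bm\pi)\nabla(\bm\pi)^\top$ is clearly intended).
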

With respect to rescaled differentials $d\tau = rd\theta$, corresponding to representing the flow locally in a Cartesian system aligned with the radial direction (pointing away from $\bm z = \bm 0$) and its perpendicular, the flow can be written as
\begin{equation}\label{eq:particle_vfield}
\begin{bmatrix}
dr_i\\
d\tau_i
\end{bmatrix} =
\begin{bmatrix}
    \frac{m^2}{\alpha(m)^2}(a_i^2 + b_i^2 + c_i^2)  & 
    0        \\
    0   & {c_i^2} \\
\end{bmatrix}.
\begin{bmatrix}
\nabla_{r_i} \tilde L (\bm w) dt\\
\nabla_{\tau_i} \tilde L (\bm w) dt\\
\end{bmatrix},
\quad i=1,\ldots,m,
\end{equation}
From these equations, one can see that if $ c_i^2 \ll a_i^2 + b_i^2$ for all $i$ (\ie, 
$\delta_i \ll 0$), then radial motion will dominate. In other words, initializing the first layer with significantly larger values than the second leads to kernel learning. On the other hand, if $ c_i^2 \gg a_i^2 + b_i^2$, then a solution of the gradient flow~\eqref{eq:gradient_flow} will follow the same trajectory as for the reduced gradient flow~\eqref{eq:reduced_gradient_flow}. Also, if $\alpha(m) = o(m)$, the radial component will dominate as $m$ increases.

Figure~\ref{fig:local_trajectories0} shows the trajectories corresponding to different values of $\delta_i$ for each neuron, with $\alpha(m)=m$.
The extreme cases of $\delta = -\infty$ and $\delta = +\infty$ correspond to the ``kernel'' and ``adaptive'' regimes, respectively. Note that as~$\delta$ increases, the neurons cluster at sample points, as explained in our analysis of the reduced dynamics in Section~\ref{sec:reduced_dynamics}, and in accordance to \cite{maennel2018gradient}.

\begin{figure}
    \centering
    \minipage{0.33\textwidth}
    \includegraphics[width=\linewidth]{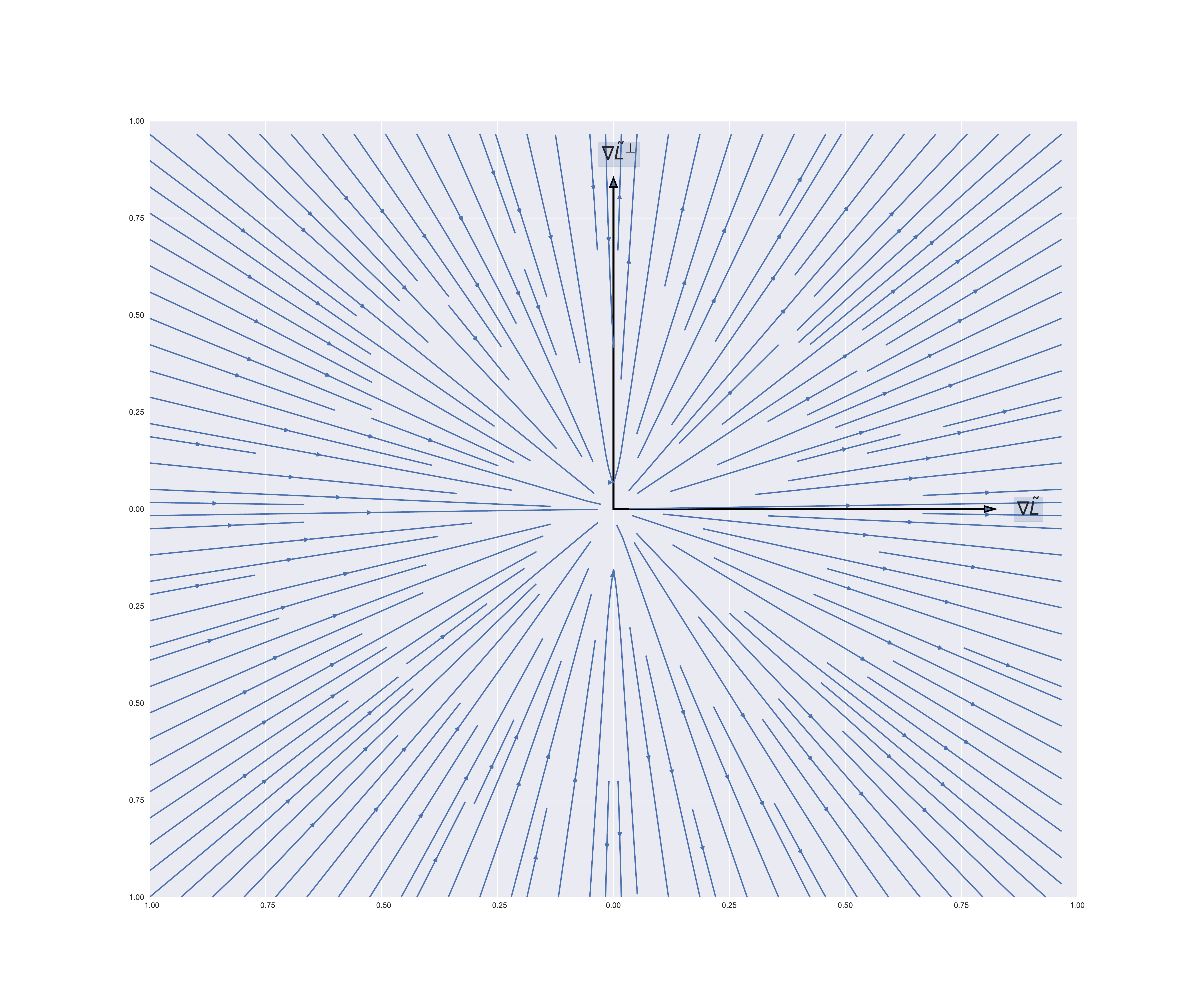}
    \endminipage\hfill
    \minipage{0.33\textwidth}
    \includegraphics[width=\linewidth]{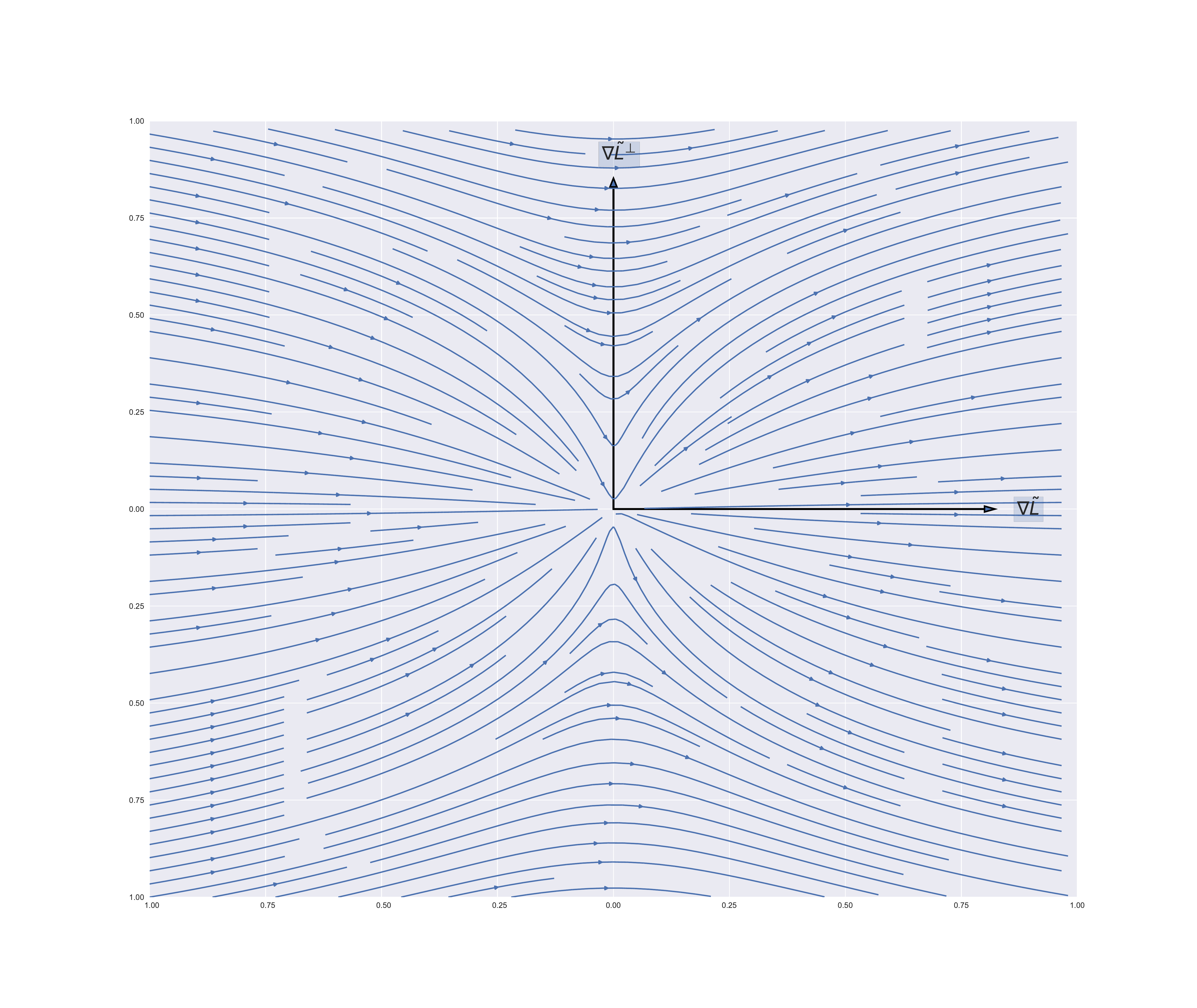}
    \endminipage\hfill
    \minipage{0.33\textwidth}
    \includegraphics[width=\linewidth]{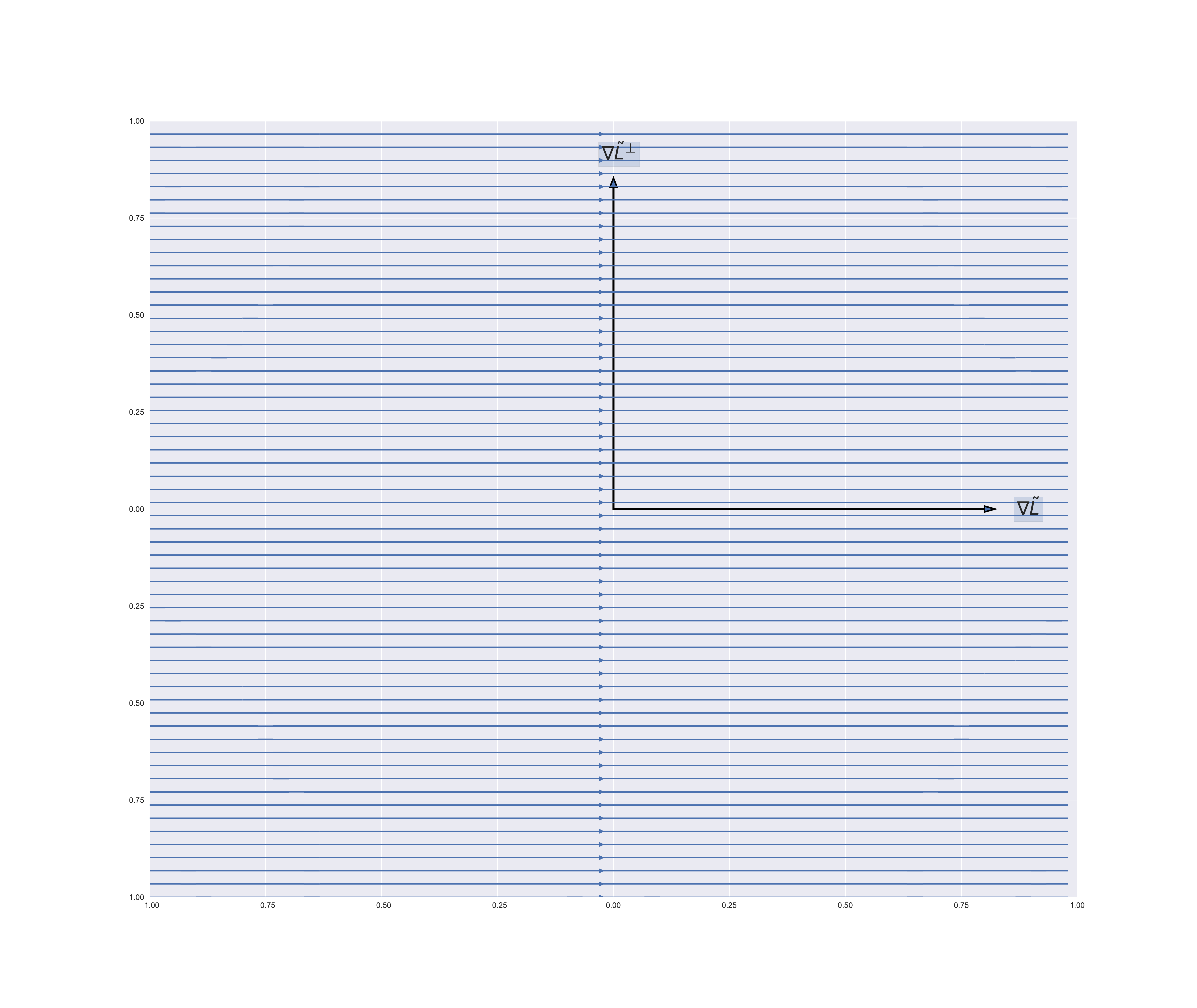}
    \endminipage\\
    \vspace{-10pt}
    \parbox{.32\textwidth}{\centering $\delta = -100$}
    \parbox{.32\textwidth}{\centering $\delta = 0$}
    \parbox{.32\textwidth}{\centering $\delta = 100$}
    \caption{The value of $\delta$ interpolates between different kinds of local trajectories of neurons. The plots are in the coordinate frame $(\nabla \tilde{L}, \nabla \tilde{L}^\bot)$. Left: the neurons move radially towards and away from the origin. Middle: the  trajectories containing both radial and tangential components. Right: the trajectories are parallel to the gradient $\nabla \tilde{L}$.}
    \label{fig:local_trajectories0}
\end{figure}

\subsection{Kernel Dynamics}\label{sec:kernel_dynamics}

We now consider the dynamics in the special case where $\delta \ll 0$, 
and we consider $m \rightarrow \infty$.  To obtain the kernel 
regime in this case, it is sufficient to consider a normalisation $\alpha(m)=o(m)$. 
In particular, when $\alpha(m)=1$, as shown in the previous section, the parameters $\bm a$ and $\bm b$ remain mostly fixed and the parameters $\bm c$ change throughout training, corresponding to the so-called random-features (RF) kernel of Rahimi and Recht \cite{rahimi2008random}. 

In the limit case where $\bm a$ and $\bm b$ are completely fixed to their initial values, if we choose $\bm c$ close to the zero vector, then the least squares problem \eqref{eq:leastsquares} solved using gradient flow, is equivalent to the minimal-norm constraint problem solution: 
\begin{equation}\label{eq:spline_leastsquares}
\begin{aligned}
    &\text{minimize } \|\bm c\|^2\\
    &\text{subject to } f_{\bm z}(x_i) = y_i, \qquad i=1,\ldots,s.
\end{aligned}
\end{equation}
Given an initial distribution $\mu_0$ over the domain $\mathcal{D}_a \times \mathcal{D}_b$ of parameters $a$ and $b$, the random-feature (RF) kernel associated with (\ref{eq:spline_leastsquares}) is given by
\begin{equation}
K_{\mathrm{RF}}(x,x') = \int_{\mathcal{D}_a \times \mathcal{D}_b}  [x a - b]_+ \cdot [x'a - b]_+ \mu_0(\dd a, \dd b)~.    
\end{equation}
The solution of~\eqref{eq:spline_leastsquares} can now be written in terms of this RF kernel using the representer theorem: 
\begin{equation}
    \tilde f_{\bm z}(x) = \sum_{j=1}^s \alpha_j K_{\mathrm{RF}}(x_j, x) ~,
\end{equation}
where $\alpha$ is a vector of minimal RKHS norm that fulfills the interpolation constraints. 
Under appropriate assumptions, the solution to~\eqref{eq:spline_leastsquares} is a \emph{cubic spline}, as shown by the following Theorem proved in Appendix~\ref{app:spline}.  



\begin{theorem} 
\label{prop:splines}
Assume the measure $\mu_0$ has finite second moment $\sigma_{\mu_0}^2:=\mathbb{E}_{(a,b) \sim \mu_0}(a^2+b^2) < \infty$.
Let $\mu_0(a,b) = q(a) \mu_a(b)$ be the decomposition in terms of marginal and conditional, and assume $\mu_a$ is bounded for each $a$. Define $\nu(u)=\int |a| \mu_a(ab) \dd q(a)$. 
Then the solution~\eqref{eq:spline_leastsquares} solves
\begin{equation}
\label{blok}
\min_f  \quad  \| f \|_{\mathrm{RF}}^2 := 
\int_{\Omega} \frac{|f''(u)|^2}{ \nu(u)} \dd u \quad ~s.t.\quad f(x_i) = y_i~,i=1\dots s~,
\end{equation}
where $\Omega:=\text{supp}(\nu)$.  
Moreover, if $\mu_0$ is such that $\mu_0(a,b)= q(a) \mathbf{1}( b \in I_a)$, where $I_a \subset \RR$ is an arbitrary interval, 
then ~\eqref{eq:spline_leastsquares} will be a cubic spline.
\end{theorem}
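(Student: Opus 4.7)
My plan is to compute $\|f\|_{\mathrm{RF}}^2$ in closed form by reparametrising the random-feature representation $f(x) = \int \alpha(a,b)[xa-b]_+\,\dd\mu_0$, reducing the minimum-norm interpolation problem to the variational problem~\eqref{blok}, and then analysing the latter via Euler--Lagrange under the explicit choice of $\mu_0$.

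I would begin from the standard RKHS duality $\|f\|_{\mathrm{RF}}^2 = \min_{\alpha \in L^2(\mu_0)} \int \alpha^2\,\dd\mu_0$ over $\alpha$ representing $f$. Taking two distributional derivatives in $x$ and using $\partial_x^2[xa-b]_+ = a^2 \delta(xa-b)$, one obtains (writing $\mu_a$ for both the conditional measure and its density)
\[
    f''(u) = \int a^2\, \alpha(a,au)\, \mu_a(au)\, \dd q(a).
\]
I would next change variables $(a,b)\mapsto(a,u):=(a,b/a)$, whose absolute Jacobian is $|a|$, giving $\int \alpha^2\,\dd\mu_0 = \int \tilde\alpha(a,u)^2 |a|\mu_a(au)\,\dd q(a)\,\dd u$ with $\tilde\alpha(a,u):=\alpha(a,au)$. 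The crucial observation is that in these new coordinates the constraint at level $u$ depends only on the slice $\tilde\alpha(\cdot,u)$, so the minimisation decouples across $u$.

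For each fixed $u$ I would then apply Cauchy--Schwarz to $\min\int \tilde\alpha^2 w\,\dd q$ subject to $\int \tilde\alpha\, h\,\dd q = f''(u)$, with weight $w(a)=|a|\mu_a(au)$ and linear functional $h(a) = a^2\mu_a(au)$. The minimum equals $|f''(u)|^2/\nu(u)$ where
\[
    \nu(u)=\int \tfrac{h^2}{w}\,\dd q = \int |a|^3\, \mu_a(au)\,\dd q(a),
\]
and integrating over $u$ yields exactly~\eqref{blok}. (This calculation suggests that the weight in the statement should in fact read $|a|^3$ in place of $|a|$.) For the explicit family $\mu_0(a,b)=q(a)\mathbf{1}(b\in I_a)$, the density $\mu_a(au)$ is proportional to $\mathbf{1}(au\in I_a)$, hence piecewise constant in $u$ for each $a$, so $\nu$ is piecewise constant on $\Omega$. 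On any subinterval where $\nu$ is constant the Euler--Lagrange equation for~\eqref{blok} with constraints $f(x_j)=y_j$ reads $f^{(4)}=0$ away from the data points, yielding a piecewise cubic polynomial joined with $C^2$ regularity at the interior knots $\{x_j\}$ and satisfying the natural boundary conditions $f''=0$ on $\partial\Omega$---the defining characterisation of the natural cubic spline.

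The main difficulty I anticipate is justifying the slice-wise decoupling rigorously: a measurable-selection argument is needed to perform the minimisation on each line $\{b=au\}$ independently, and the resulting Cauchy--Schwarz optimiser $\alpha^\ast(a,b)= |a|\, f''(b/a)/\nu(b/a)$ must be checked to lie in $L^2(\mu_0)$. A secondary subtlety is that~\eqref{blok} determines $f$ only up to an affine function; these two degrees of freedom must be pinned down by the interpolation constraints together with the boundary behaviour of $\alpha^\ast$, which corresponds precisely to imposing $f''=0$ at $\partial\Omega$.
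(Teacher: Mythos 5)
Your route is essentially the paper's, rendered more directly. Both arguments rest on the same duality $\|f\|_{\mathrm{RF}}^2 = \min_{\alpha\in L^2(\mu_0)}\int\alpha^2\,\dd\mu_0$ over all representers of $f$, followed by a decoupling of that minimisation across the knot coordinate $u$ and a pointwise Cauchy--Schwarz (the paper phrases this as a per-$u$ Lagrangian). Where you differ is in how you get there: you differentiate $f$ twice in the sense of distributions and change variables $(a,b)\mapsto(a,b/a)$ directly on the representer, whereas the paper first builds, for each fixed $a$, an RKHS $\mathcal{H}_a$ with explicit norm $\|h\|_{K_a}^2 = c_a\int|h''|^2/\bar\mu_a\,\dd u$ via Taylor's remainder formula, and then invokes the integral-of-kernels theorem of Hotz and Le to assemble $K_{\mathrm{RF}}=\int K_a\,\dd q(a)$ into a global RKHS. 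Your version is the shorter path to the same computation; the paper's version makes explicit the feasibility/representability bookkeeping (that the optimal slice-wise choices $f''_a=\eta_a\,f''$ actually lie in each $\mathcal{H}_a$, and that the resulting family $(f_a)$ integrates back to $f$), which is precisely the measurable-selection worry you raised and should be addressed along similar lines.

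Your correction of the weight is, as far as I can tell, right: carrying the $a^2$ factor from $\partial_x^2[ax-b]_+ = a^2\delta(ax-b)$ through the norm (it appears squared, giving $a^4$, against a Jacobian $|a|$) yields $\nu(u)=\int|a|^3\mu_a(au)\,\dd q(a)$. One can also check this by noting that, for fixed $a$, $K_a(x,x')=|a|^3\int[x-u]_+[x'-u]_+\bar\mu_a(u)\,\dd u$, so scaling the kernel by $|a|^3$ scales the RKHS norm by $|a|^{-3}$. The paper's appendix derives $\|h\|_{K_a}^2 = \frac{1}{|a|}\int_{\tilde I_a}|h''|^2/\bar\mu_a\,\dd u$, but from the stated representer $\hat h_a(u)=h''(u/a)/(a^2\mu_a(u))$ the exponent should be $|a|^{-3}$, not $|a|^{-1}$ --- the $a^2$ appears not to have been squared. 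The slip propagates to the stated $\nu$, but the qualitative conclusion (cubic splines when $\mu_0(a,b)=q(a)\mathbf{1}(b\in I_a)$, since $\nu$ is then piecewise constant in $u$ and the Euler--Lagrange equation becomes $f^{(4)}=0$ between knots with $C^2$ matching) is unaffected by the exponent on $|a|$.

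One point to tighten in your write-up: the affine-kernel ambiguity you flag in your last paragraph is handled in the paper by imposing boundary conditions $f(s_a/a)=f'(s_a/a)=0$ on each $\mathcal{H}_a$, so the RKHS contains no nontrivial affine functions and the representer $\alpha^\ast$ determines $f$ uniquely; you should state the analogous boundary condition explicitly rather than defer it to ``pinned down by the interpolation constraints.''
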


Notice that the assumptions on $\mu_0$ to obtain an exact cubic spline kernel impose that if $A,B$ is a random vector distributed according to $\mu_0$, then $B|A$ is uniform over an arbitrary interval $I_A$ that can depend upon $A$. The proof illustrates that one may generalise the interval $I_A$ by any countable union of intervals. 
In particular, independent uniform initialization yields cubic splines,
but radial distributions, such as $A,B$ being jointly Gaussian, don't. 
In that case, the kernel is  expressed in terms of trigonometric functions \cite{chizat2018note,bach2017breaking}. In our setup this becomes
\begin{eqnarray}
\label{eq:RFkernel_radial}
   K_{\mathrm{RF}}^r(x,x')&=& \frac{C}{4\pi}\Big\{(\pi - \arctan(x')+\arctan(x))(xx'+1) +  \\ && +\left( \frac{x'}{1+(x')^2} - \frac{x}{1+x^2} \right)(xx'-1)   + (x+x')\left(\frac{(x')^2}{1+(x')^2} - \frac{x^2}{1+x^2} \right)\Big\}~,\nonumber
\end{eqnarray}
as verified in Appendix \ref{app:spline}.


The normalization choice $\alpha(m)=\sqrt{m}$ results in a different kernel, the Neural Tangent Kernel of \cite{jacot2018neural}. In this scaling regime, the linearization of the model becomes 
\begin{eqnarray}
K_{\mathrm{NTK}}(x,x')&=& \nabla_{\bm z} f_{\bm z}(x)^\top \nabla_{\bm z} f_{\bm z}(x') \nonumber \\
&=& \frac{1}{m} \sum_{i=1}^m [a_i x - b_i]_+ [a_i x' - b_i]_+ + \frac{xx'+1}{m}\sum_{i=1}^m c_i^2 \mathbf{1}(a_i x - b_i > 0)\cdot \mathbf{1}(a_i x' - b_i > 0) \nonumber \\
&\stackrel{m\to \infty}\to& K_{\mathrm{RF}}(x,x') + (xx' + 1)\cdot \mathbb{E}(c^2)\int \mathbf{1}(a x > b)\cdot \mathbf{1}(a x' > b ) \mu_0(\dd a,\dd b)~.  
\end{eqnarray}
The extra term in $K_{\mathrm{NTK}}$ captures lower-order regularity as shown in the following corollary
\begin{corollary}
\label{coro:ntk}
Under the same assumptions as in Theorem \ref{prop:splines}, we have
\begin{equation}\label{eq:ntk_kernel}
\| f \|_{\mathrm{NTK}}^2 := \inf \left\{ 
\int_{\Omega} \left(\frac{|f_1''(u)|^2}{ \nu(u)} + \mathbb{E}(c^2) \left(\frac{|f_2'(u)|^2}{\nu(u)} + \frac{| u f_3'(u) - f_3(u)|^2}{u^2 \nu(u)}\right)\right) \dd u ; \, f = f_1 + f_2 + f_3
\right\}\,.
\end{equation}
Additionally, if $\mu_0$ is such that $\mu_0(a,b)= q(a) \mathbf{1}( b \in I_a)$, where $I_a \subset \RR$ is an arbitrary interval, 
then $K_{\mathrm{NTK}}$ is also piece-wise cubic. 
\end{corollary}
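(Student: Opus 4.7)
The plan is to exploit the additive decomposition of the NTK kernel into three positive-semidefinite pieces. Writing
\[
K_{\mathrm{NTK}}(x,x') = K_{\mathrm{RF}}(x,x') + \mathbb{E}(c^2)\,K_{\mathbf{1}}(x,x') + \mathbb{E}(c^2)\,xx'\,K_{\mathbf{1}}(x,x'),
\]
where $K_{\mathbf{1}}(x,x') := \int \mathbf{1}(ax>b)\mathbf{1}(ax'>b)\,\mu_0(\dd a,\dd b)$, the classical sum-of-kernels RKHS norm identity yields
\[
\|f\|^2_{\mathrm{NTK}} = \inf\left\{\|f_1\|^2_{K_{\mathrm{RF}}} + \mathbb{E}(c^2)\left(\|f_2\|^2_{K_{\mathbf{1}}} + \|f_3\|^2_{xx'K_{\mathbf{1}}}\right) : f = f_1+f_2+f_3\right\}.
\]
Since Theorem~\ref{prop:splines} already identifies $\|f_1\|^2_{K_{\mathrm{RF}}}$ as the first summand in~\eqref{eq:ntk_kernel}, it remains to compute the two additional RKHS norms and match them to the second and third summands.

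For $\|f_2\|^2_{K_{\mathbf{1}}}$ I would repeat the change of variable $b=au$ and the Cauchy--Schwarz/sign-split argument used in the proof of Theorem~\ref{prop:splines}, but with one fewer antiderivative, since the feature is now the step $\mathbf{1}(ax>b)$ rather than the ramp $[ax-b]_+$. After the reparameterization the representation reads $f_2(x)=\int_{-\infty}^{x}h^+(u)\,\dd u+\int_{x}^{\infty}h^-(u)\,\dd u$ with $f_2'(x)=h^+(x)-h^-(x)$, and the pointwise sign-split optimization proceeds identically, yielding $\|f_2\|^2_{K_{\mathbf{1}}}=\int|f_2'(u)|^2/\nu(u)\,\dd u$. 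For $\|f_3\|^2_{xx'K_{\mathbf{1}}}$, the feature $x\mathbf{1}(ax>b)$ factors as $x$ times the feature of $K_{\mathbf{1}}$, so any $f_3$ in this RKHS has the form $x\,\tilde f(x)$ with $\tilde f$ sharing the same representer; hence $\|f_3\|^2_{xx'K_{\mathbf{1}}}=\|\tilde f\|^2_{K_{\mathbf{1}}}$, and substituting $\tilde f'(x)=(xf_3'(x)-f_3(x))/x^2$ into the previous step delivers the announced last summand.

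For the piecewise-cubic conclusion under $\mu_0(a,b)=q(a)\mathbf{1}(b\in I_a)$, the representer theorem expresses the minimum-norm NTK interpolant as $f^{\star}(x)=\sum_{j}\alpha_j K_{\mathrm{NTK}}(x,x_j)$. Theorem~\ref{prop:splines} gives that $K_{\mathrm{RF}}(\cdot,x_j)$ is piecewise cubic with a knot at $x_j$; a direct length-of-interval calculation for $\int_{I_a}\mathbf{1}(ax>b)\mathbf{1}(ax_j>b)\,\dd b$ (which depends only on $\min(x,x_j)$ and the endpoints of $I_a$, then averaged against $q(a)$) shows that $K_{\mathbf{1}}(\cdot,x_j)$ is piecewise linear with a knot at $x_j$, and therefore $xx_j K_{\mathbf{1}}(\cdot,x_j)$ is piecewise quadratic with a knot at $x_j$. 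Summing the three contributions produces a piecewise cubic interpolant with knots at the $x_j$. The main obstacle I anticipate is making sure that the pointwise Cauchy--Schwarz optimization for $\|f_2\|^2_{K_{\mathbf{1}}}$ is compatible with the scalar boundary constraint $f_2(-\infty)=\int h^-(u)\,\dd u$, and that the reparameterization $\tilde f=f_3/x$ is well defined at the origin; both subtleties are resolved by observing that every feature $\mathbf{1}(ax>b)$ is supported on a half-line in $x$ (so that functions in the $K_{\mathbf{1}}$-RKHS have consistent limits at infinity and the boundary constraint becomes automatic) and that every feature $x\,\mathbf{1}(ax>b)$ vanishes at $x=0$ (so $f_3(0)=0$ holds by construction), after which the rest is a routine repetition of the arguments from Theorem~\ref{prop:splines}.
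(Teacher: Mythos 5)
Your proposal follows essentially the same route as the paper's proof: decompose $K_{\mathrm{NTK}} = K_{\mathrm{RF}} + \mathbb{E}(c^2)(K_1 + K_2)$ with $K_1 = K_{\mathbf{1}}$ and $K_2 = xx'K_{\mathbf{1}}$, invoke the sum-of-kernels RKHS norm identity (the infimum over decompositions $f = f_1+f_2+f_3$), identify $f_2'$ as the feature representation for $K_1$ via the order-one Taylor remainder (the step feature ``loses one antiderivative'' relative to the ramp), and obtain $(u f_3'(u)-f_3(u))/u^2$ for $K_2$ by factoring $f_3 = x\tilde f$ with $\tilde f$ living in the $K_1$-RKHS. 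The only meaningful difference is that you spell out the piecewise-cubic conclusion explicitly via the representer theorem and a degree count ($\mathrm{cubic} + \mathrm{linear} + \mathrm{quadratic}$), whereas the paper simply defers to ``repeating the argument of Theorem~\ref{prop:splines}''; your version of that step is a welcome clarification and is consistent with the paper's logic.
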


We remark that machine learning packages such as PyTorch use a uniform distribution for linear layer parameter initialization by default.
We verify that indeed, solutions to \eqref{eq:leastsquares} converge to cubic splines as $m$ grows in Figure~\ref{fig:spline}. We also point out that in Kernel Learning, early termination of gradient flow acts as a regularizer favoring smooth, non-interpolatory solutions (see \cite{jacot2018neural}).

\begin{figure}
    \centering
    \minipage{0.5\textwidth}
    \includegraphics[width=\linewidth]{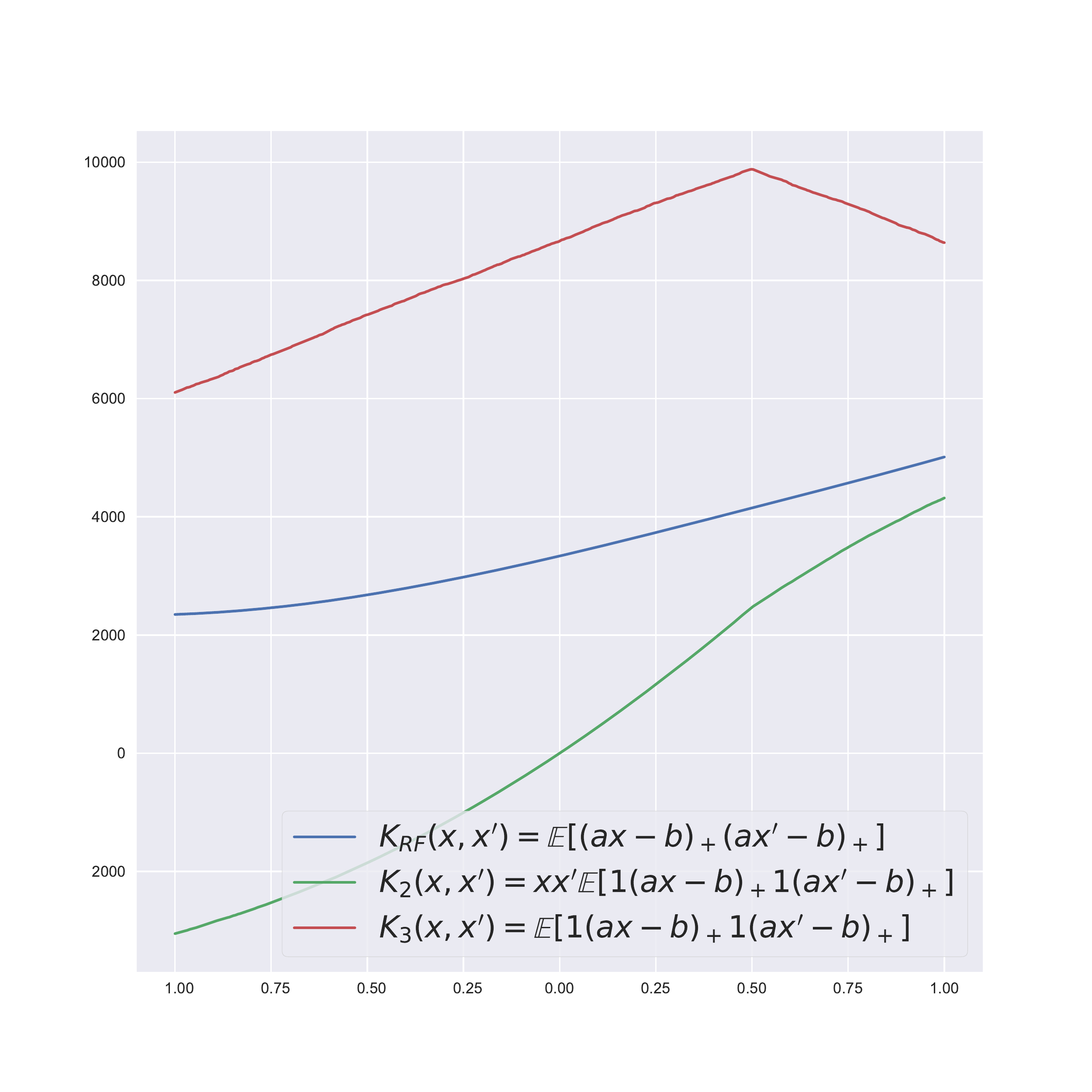}
    \endminipage
    \minipage{0.5\textwidth}
    \includegraphics[width=\linewidth]{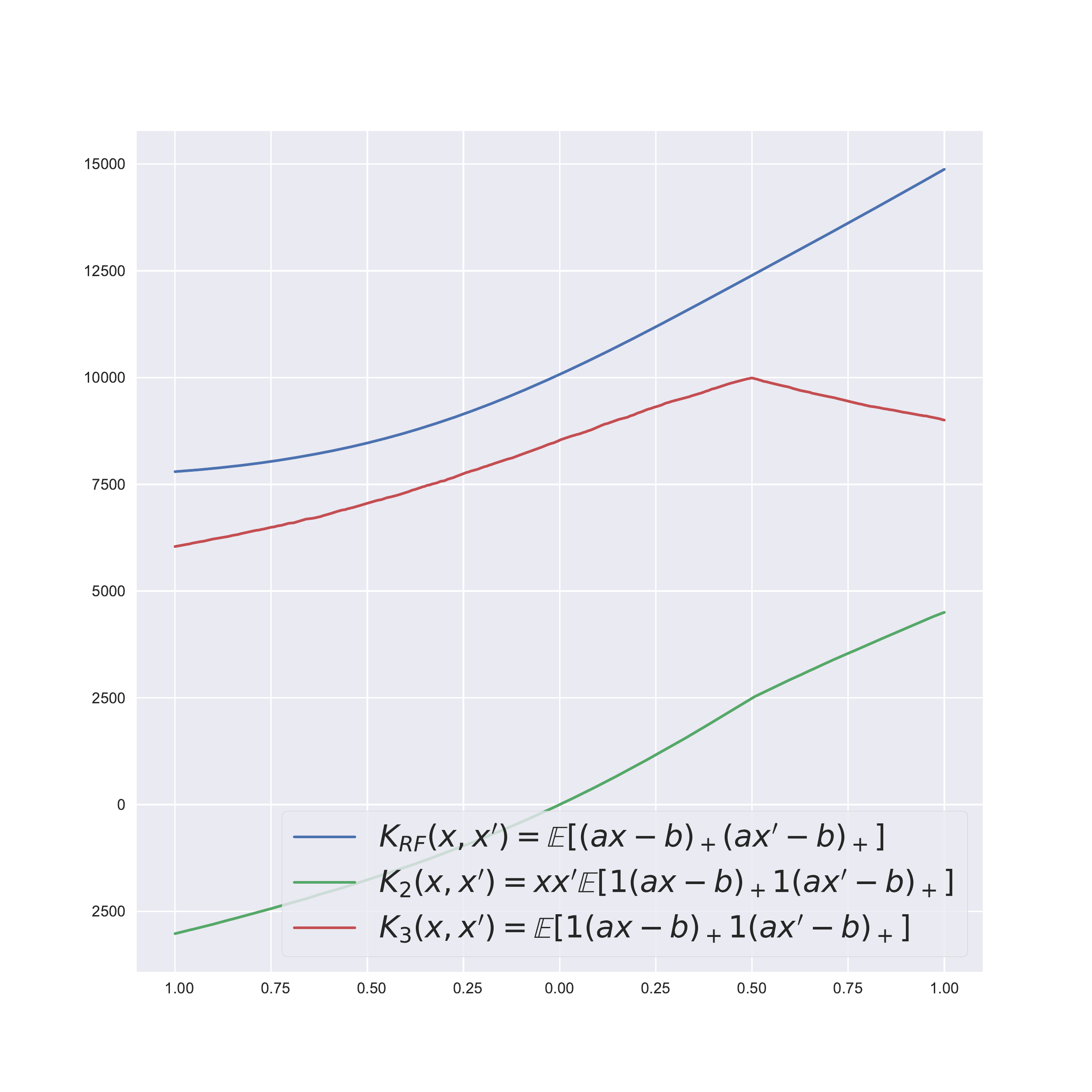}
    \endminipage\hfill
    \parbox{.49\textwidth}{\centering Uniform Initialization}
    \parbox{.49\textwidth}{\centering Gaussian Initialization}
    \caption{1D slice of the plot of the different terms of the tangent kernel \eqref{eq:ntk_kernel} for $x=0.5$ with a network consisting of $m = 25000$ neurons. The parameters $\bm a$ and $\bm$ are initialized from Uniform and Gaussian distributions with $\frac{1}{\sqrt{m}}$ scaling.}
    \label{fig:kernels}
\end{figure}

The analysis and comparison of these kernels has recently been addressed in \cite{bietti2019inductive, ghorbani2019linearized} in the general high-dimensional setting, by describing its spectral decay in terms of spherical harmonics. Our results complement them in the particular one-dimensional setting thanks to the explicit functional form of the resulting RKHS norms. 
Additionally, Savarese et al. \cite{savarese2019infinite} study the functional form of the minimisation in the variation norm, leading to a penality of the form $\int |f''(u)| \dd u$. 
We have instead $L^2$ norms (RKHS) in the kernel regime. The $L^2$ norms do not provide any adaptivity as opposed to the $L^1$ norm \cite{bach2017breaking}. An interesting question is to precisely describe the transition between these two regimes as a function of the initialisation.

\section{Numerical Experiments}
For the numerical experiments, we follow gradient descent using the parameterization \eqref{eq:leastsquares} with $\alpha(m) = \sqrt{m}$, appropriately scaling the weights $\bm a, \bm b, \bm c$ to achieve different dynamical behavior. In addition to those presented here, Section~\ref{sec:additional_experiments} presents additional numerical experiments

\paragraph{Cubic Splines.}
We show in Figure~\ref{fig:spline} that when $-\delta \ll r^2$ (\ie we are in the kernel regime), growing the number of neurons grows causes the network function $f_{\bm z}$ converges to a cubic spline. For this experiment, we fit 10 points sampled from a square wave, and train only the parameters $c$ (\ie $\delta_i = \infty$). 

\begin{figure}
    \centering
    \minipage{0.33\textwidth}
    \includegraphics[width=\linewidth]{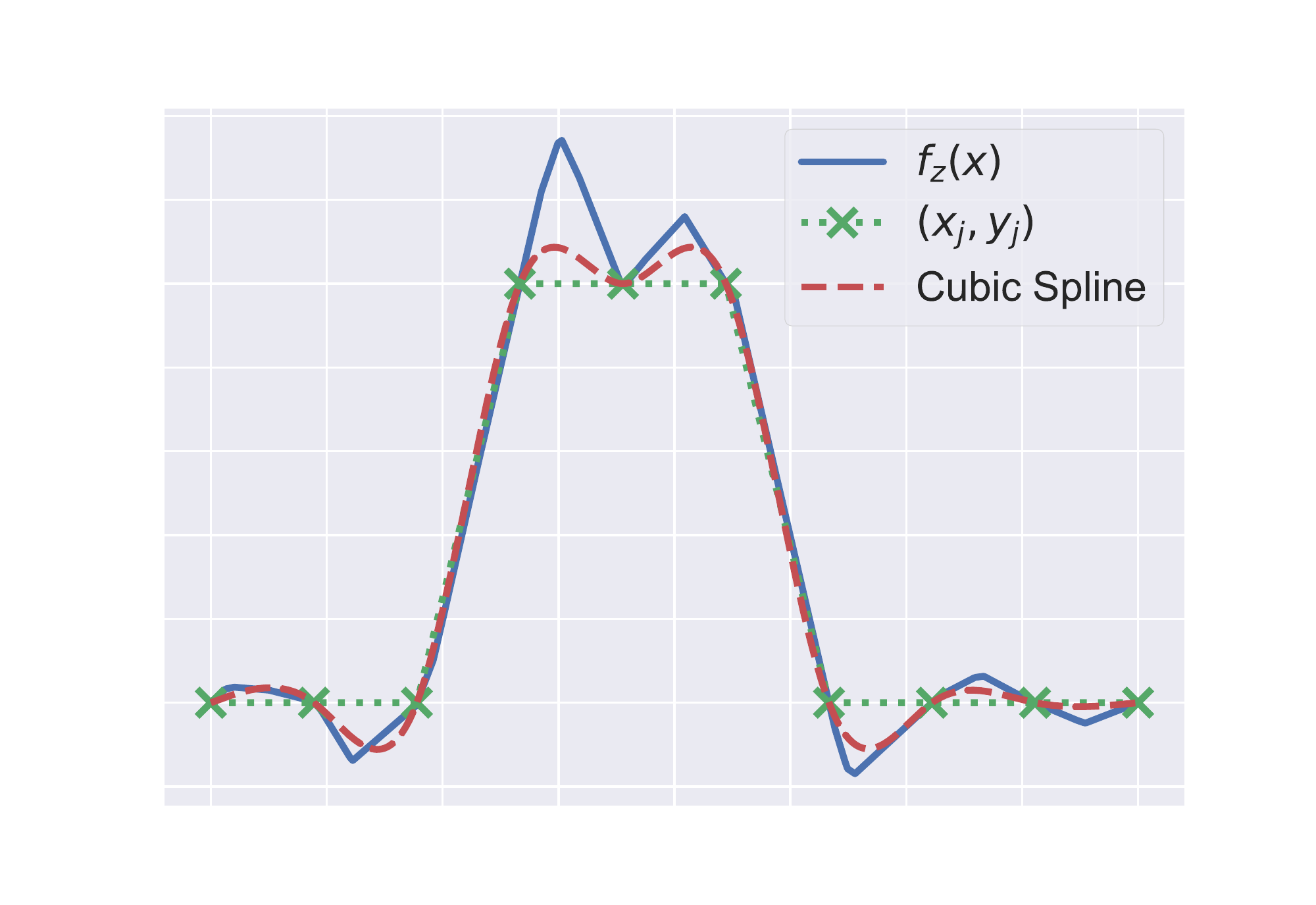}
    \endminipage
    \minipage{0.33\textwidth}
    \includegraphics[width=\linewidth]{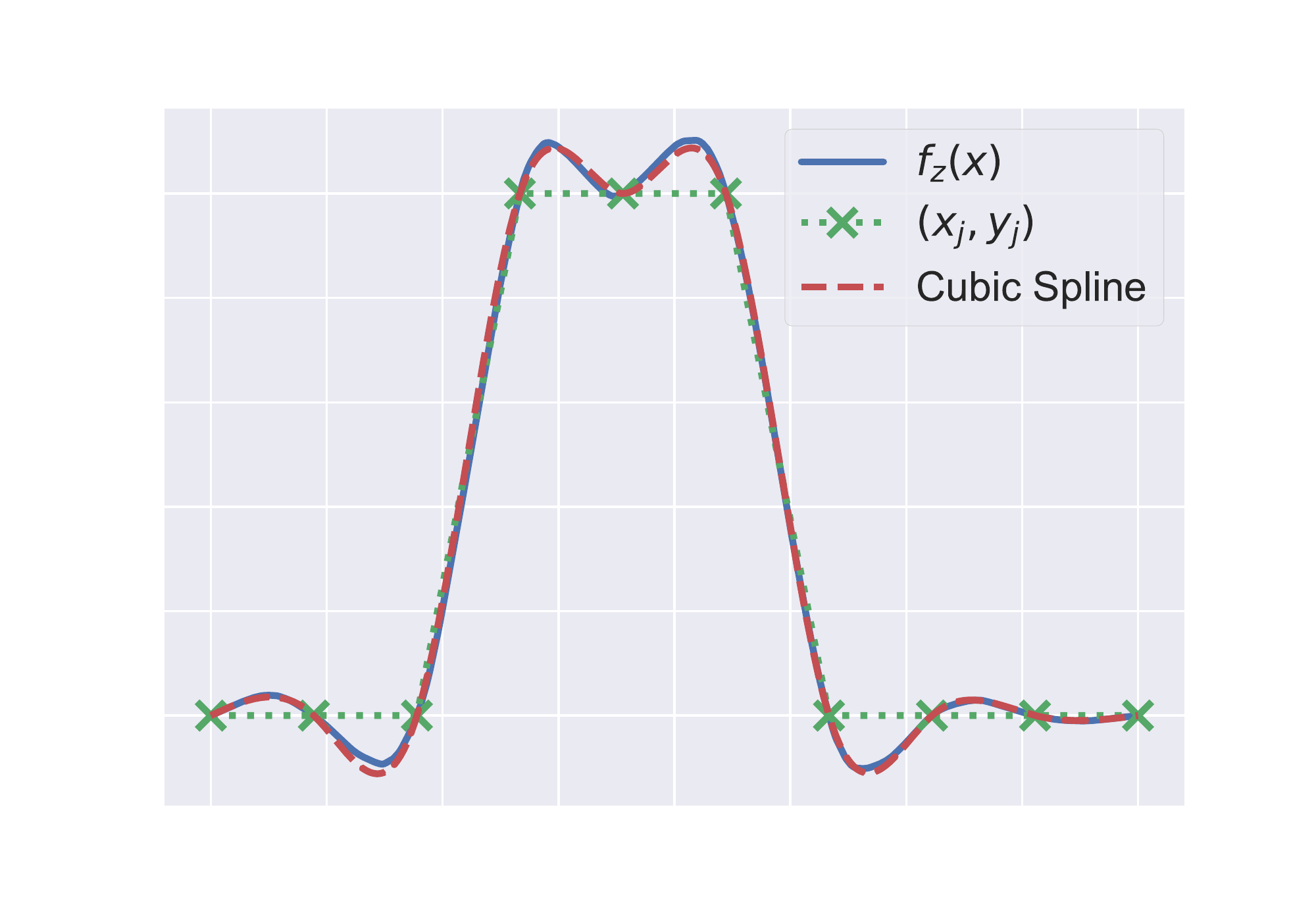}
    \endminipage\hfill
    \minipage{0.33\textwidth}
    \includegraphics[width=\linewidth]{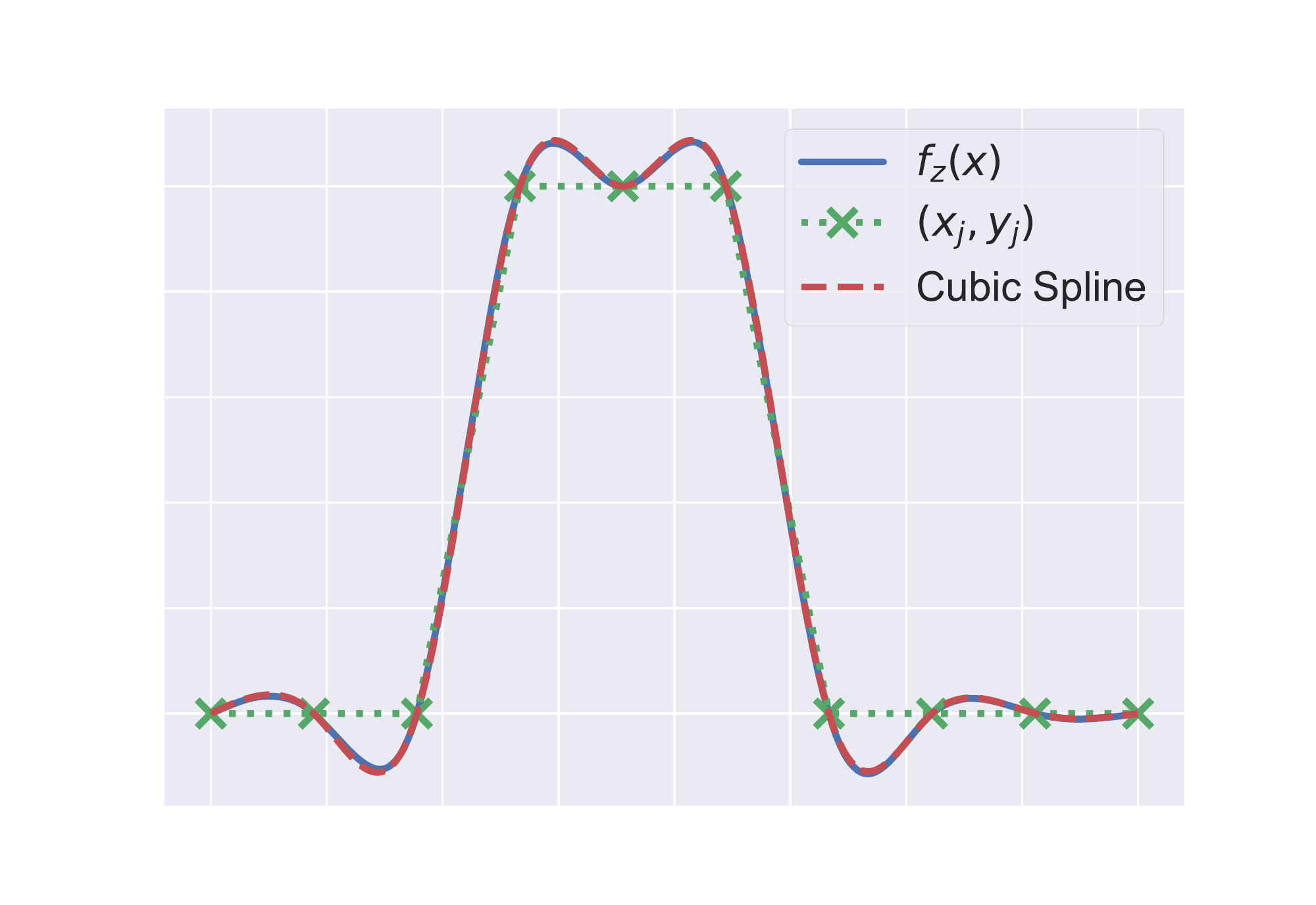}
    \endminipage\hfill\\
    \parbox{.32\textwidth}{\centering $m = 10^2$}
    \parbox{.32\textwidth}{\centering $m = 10^3$}
    \parbox{.32\textwidth}{\centering $m = 10^4$}
    \caption{A cubic spline with vanishing second derivative at its endpoints (blue line) is approximated by a neural network  ($\delta = -100$) while varying the number $m$ of neurons.}
    \label{fig:spline}
\end{figure}

\paragraph{Network Dynamics as a Function of $\delta$.}
We show in Figure~\ref{fig:vary_delta} that as we vary $\delta$, the network function goes from being smooth and non-adaptive in the kernel regime ($\delta = -\infty$, \ie training only the parameter $\bm c$) to very adaptive ($\delta = \infty$, \ie training only the parameters $\bm a, \bm b$). Note how as $\delta$ gets large, clusters of knots emerge at the sample positions (collinear points in the $uv$ diagrams).
\begin{figure}
    \centering
    \minipage{0.2\textwidth}
    \includegraphics[width=\linewidth]{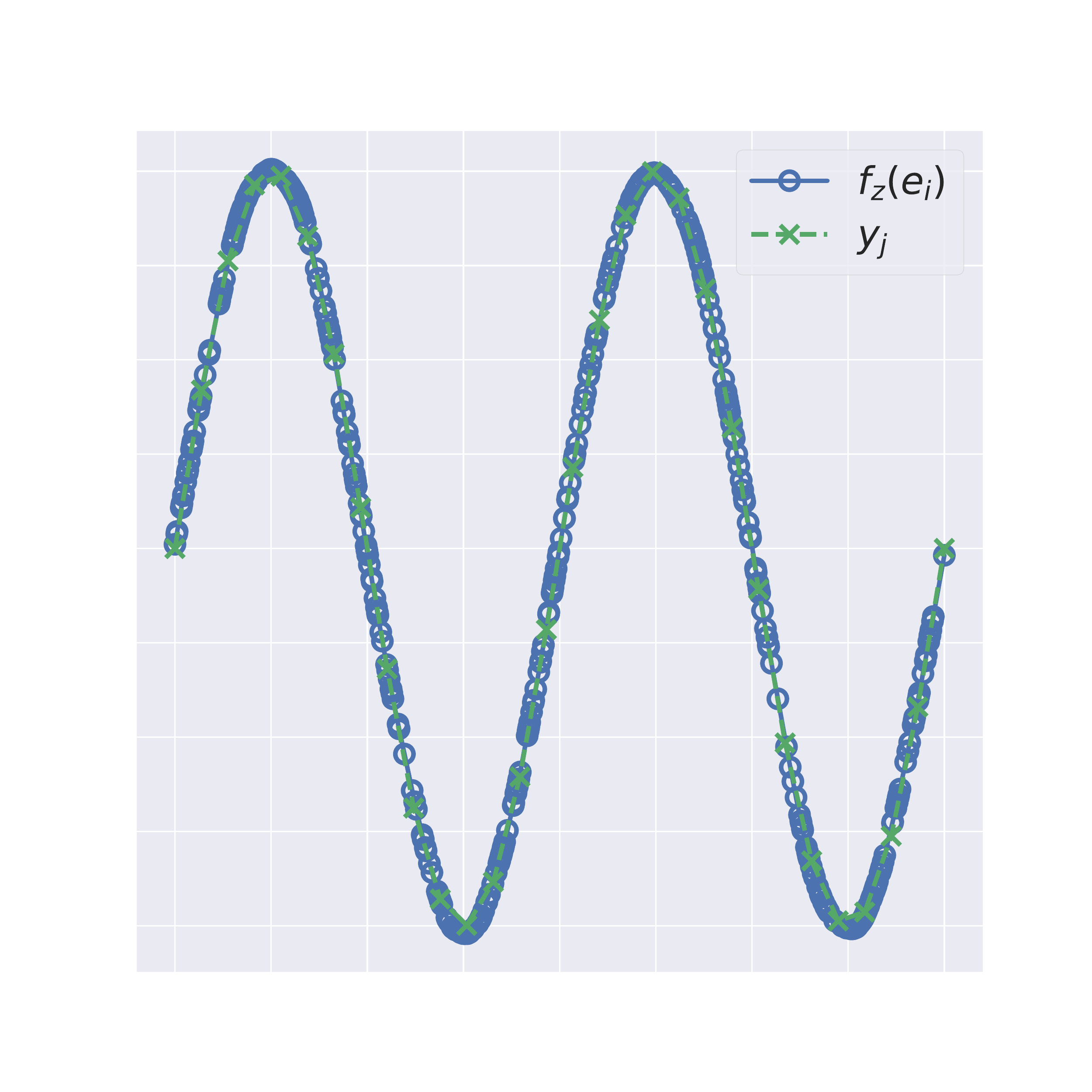}
    \endminipage
    \minipage{0.2\textwidth}
    \includegraphics[width=\linewidth]{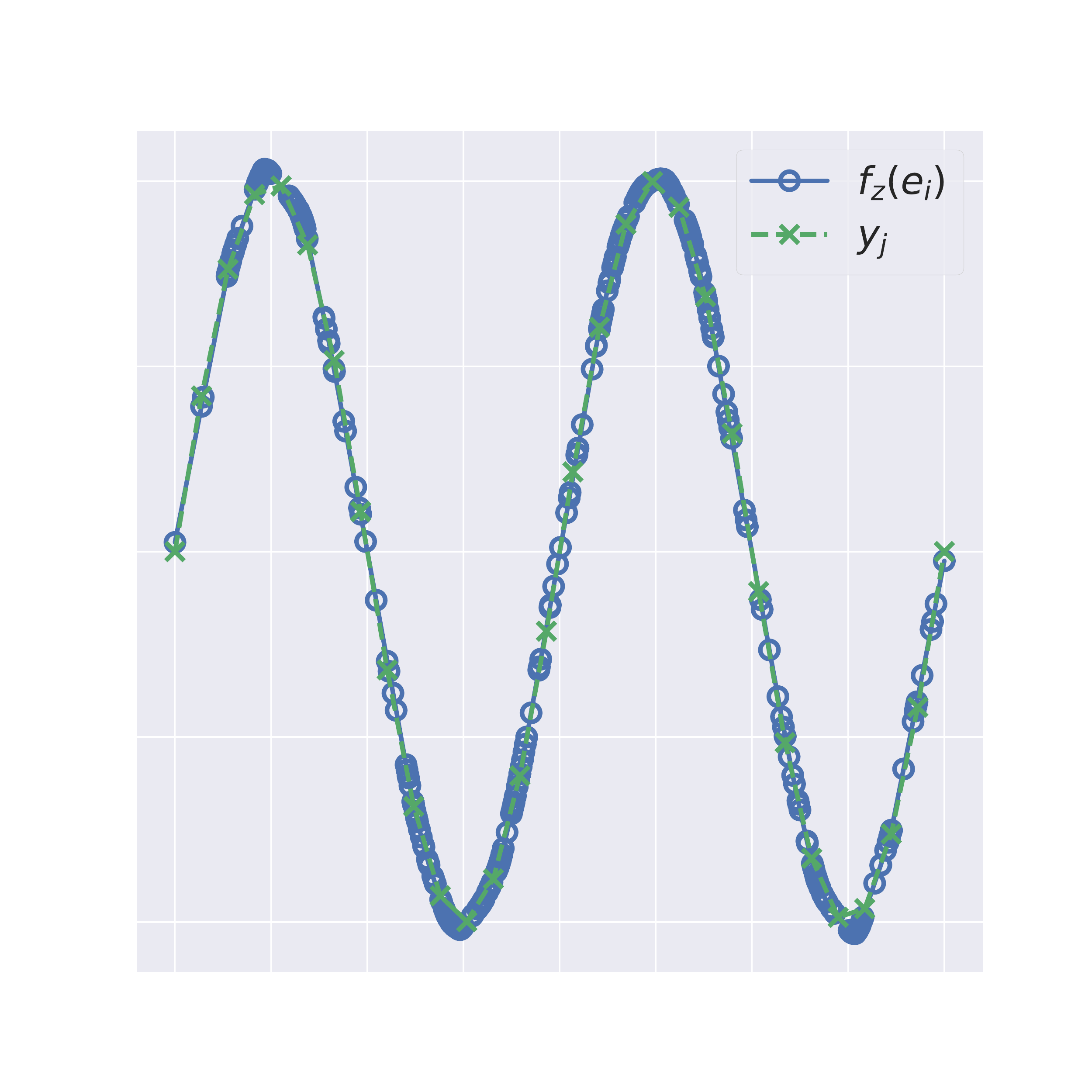}
    \endminipage
    \minipage{0.2\textwidth}
    \includegraphics[width=\linewidth]{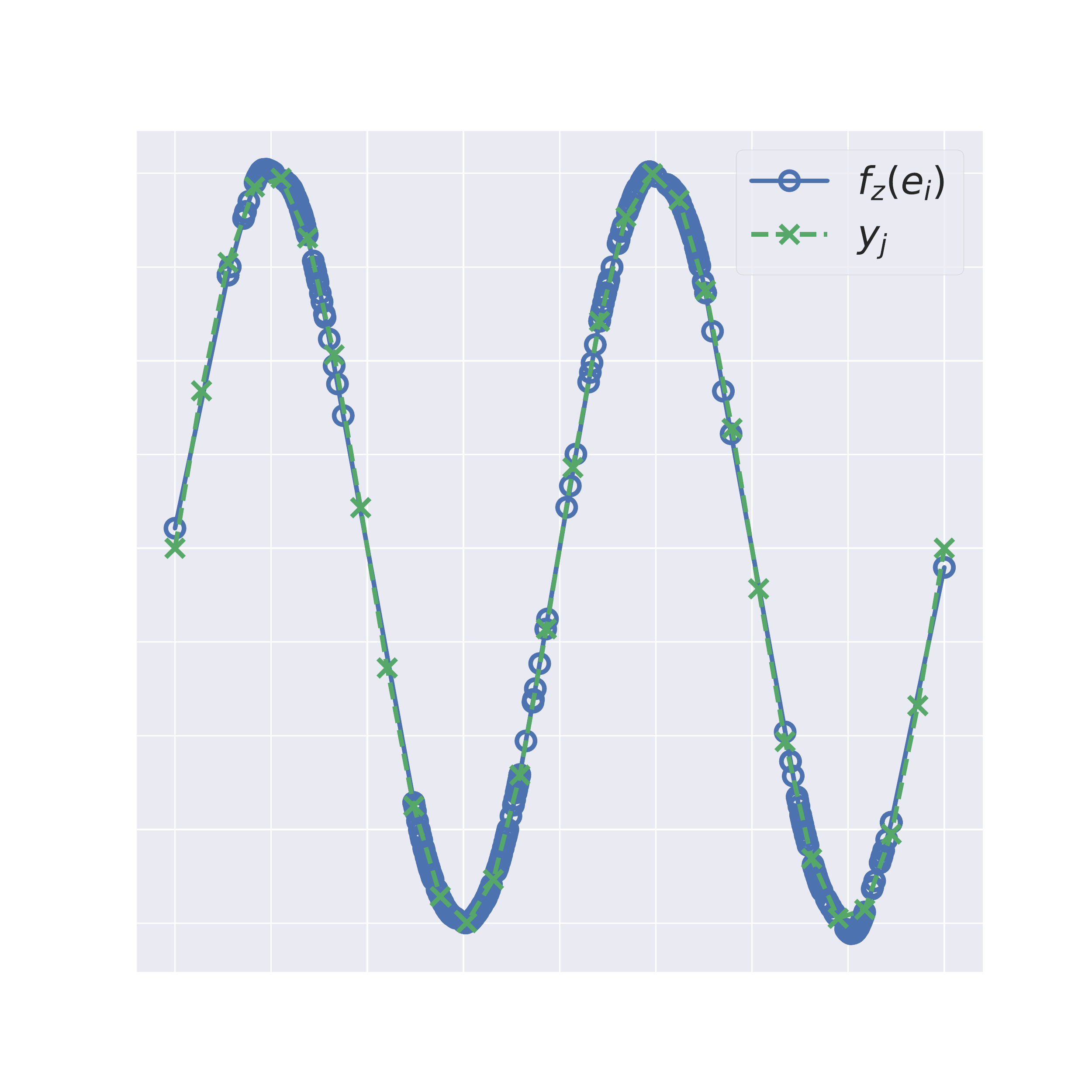}
    \endminipage
    \minipage{0.2\textwidth}
    \includegraphics[width=\linewidth]{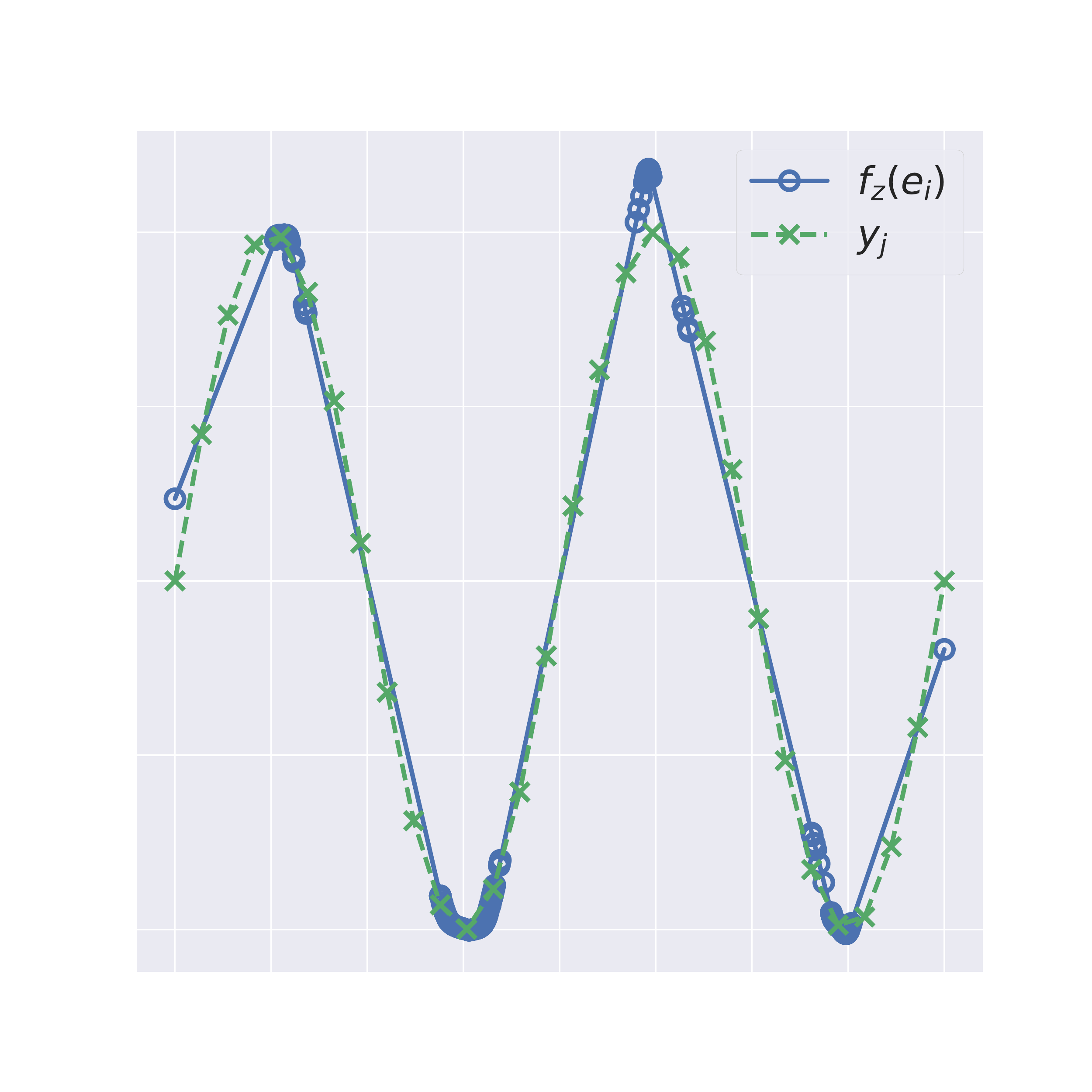}
    \endminipage
    \minipage{0.2\textwidth}
    \includegraphics[width=\linewidth]{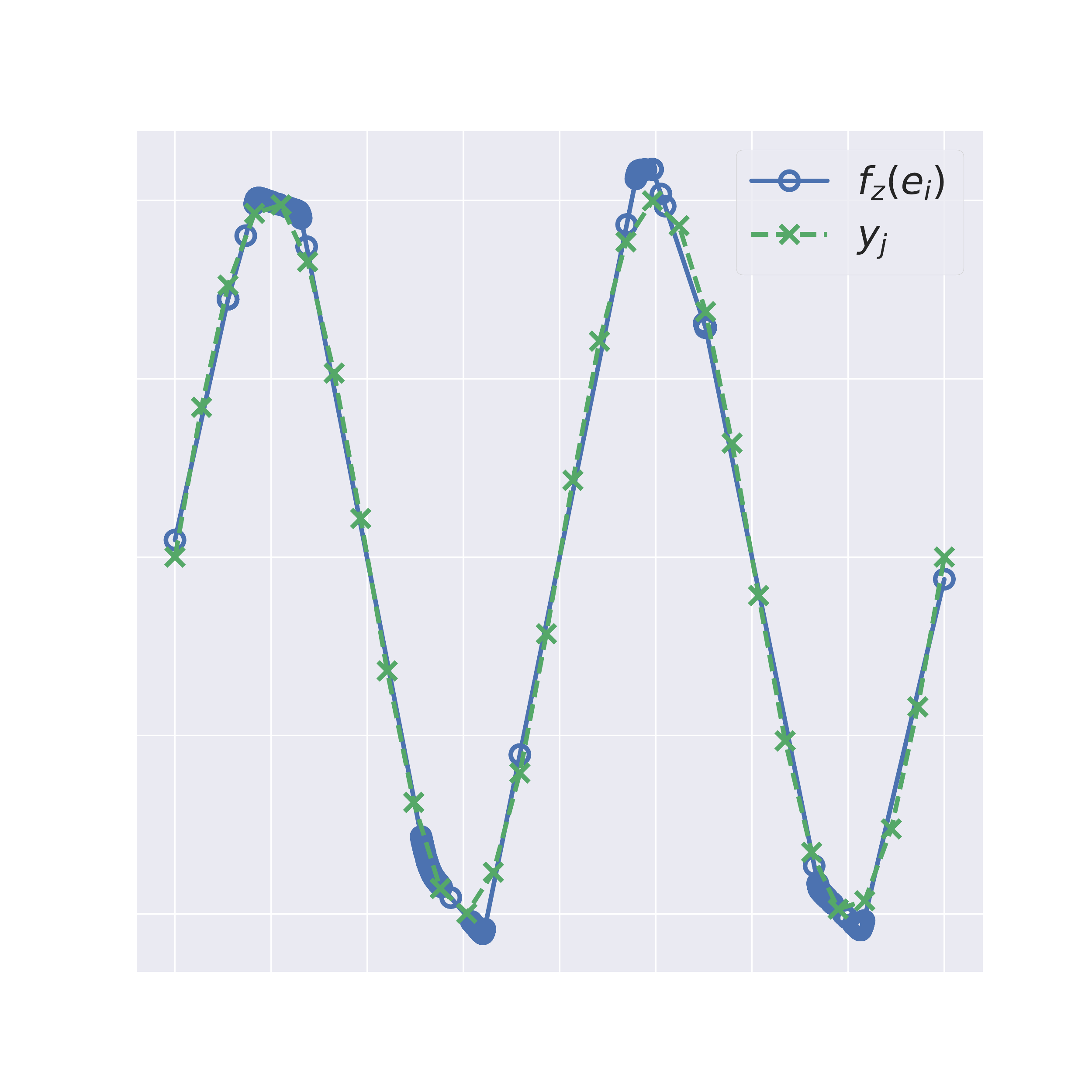}
    \endminipage
    \vspace{-8pt}
    \minipage{0.2\textwidth}
    \includegraphics[width=\linewidth]{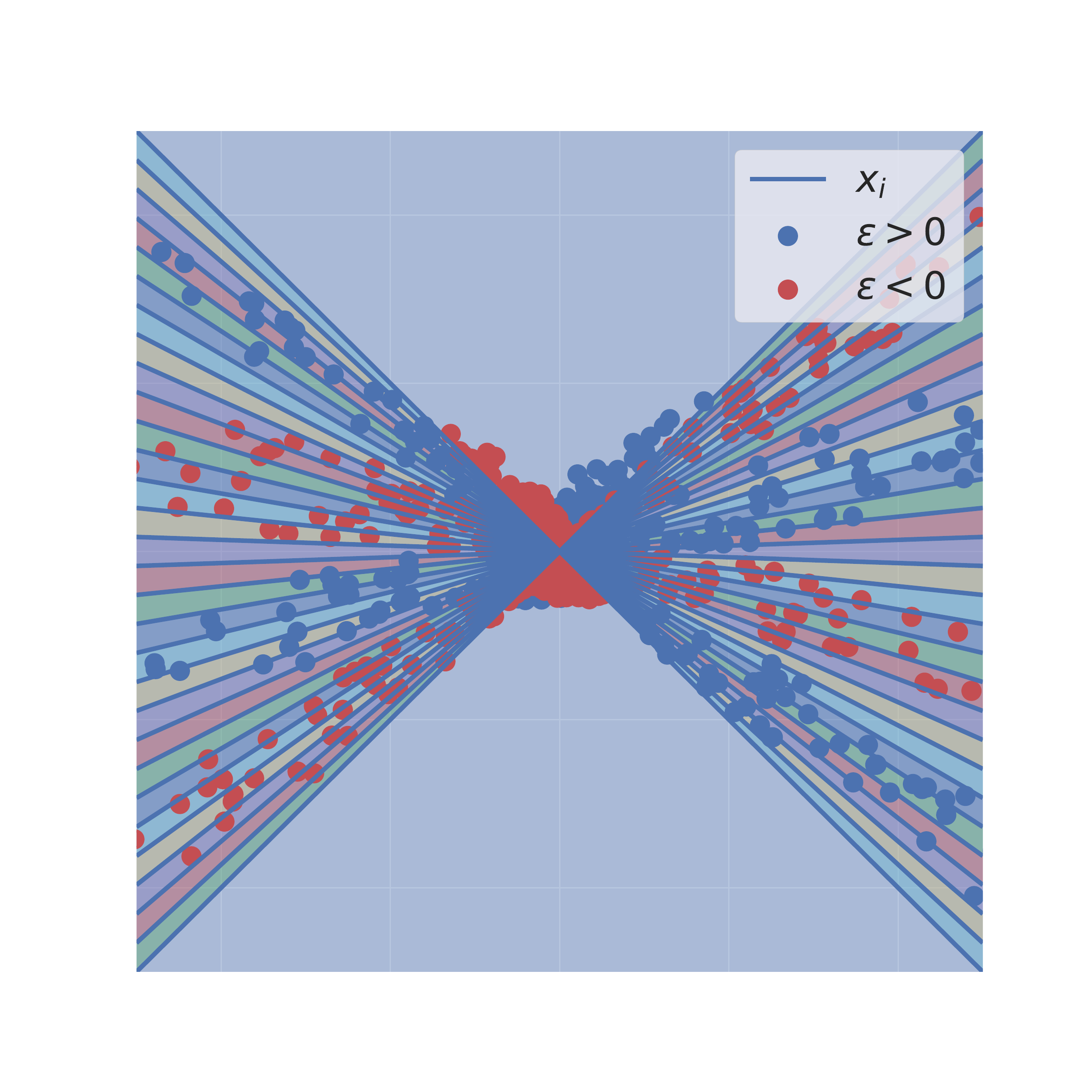}
    \endminipage
    \minipage{0.2\textwidth}
    \includegraphics[width=\linewidth]{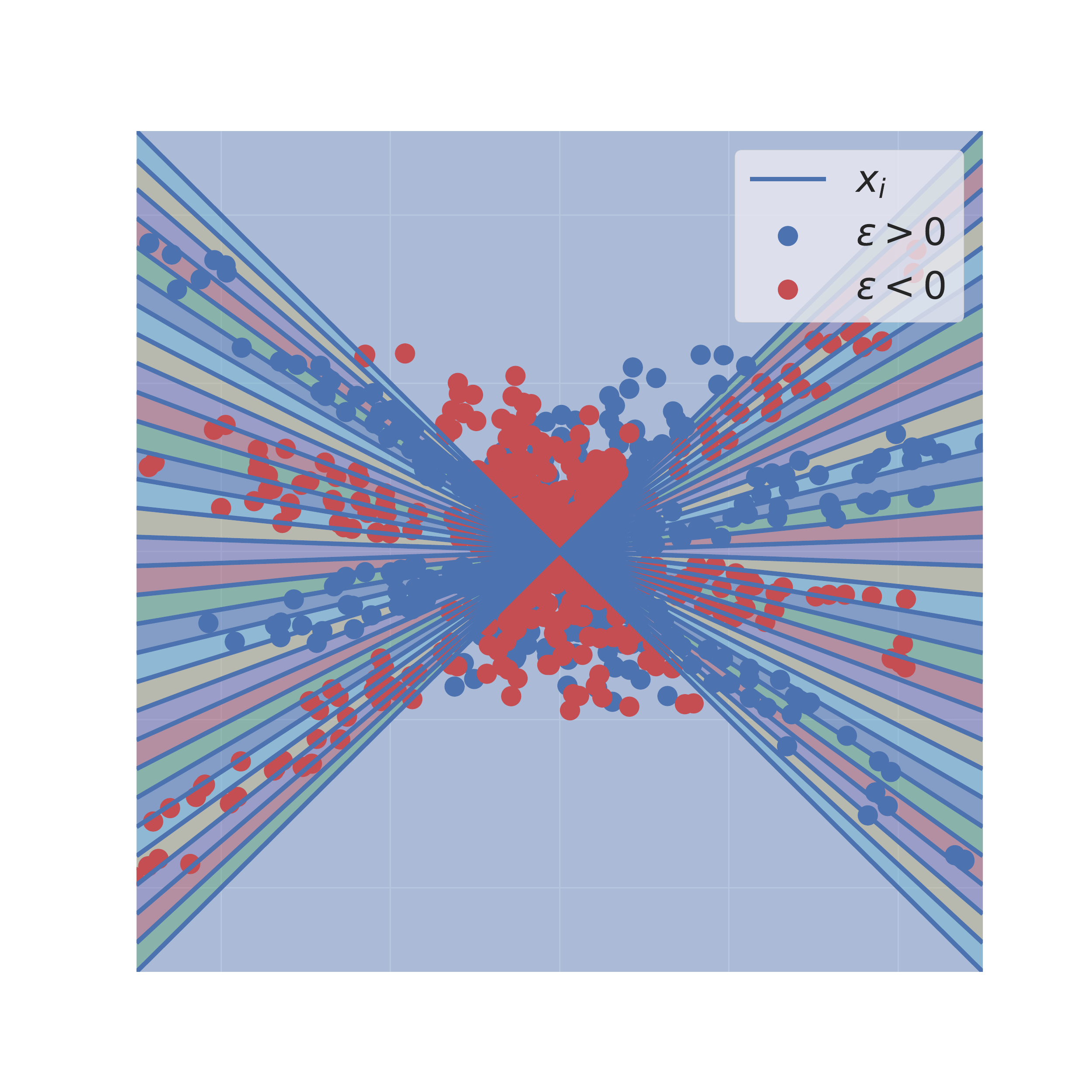}
    \endminipage
    \minipage{0.2\textwidth}
    \includegraphics[width=\linewidth]{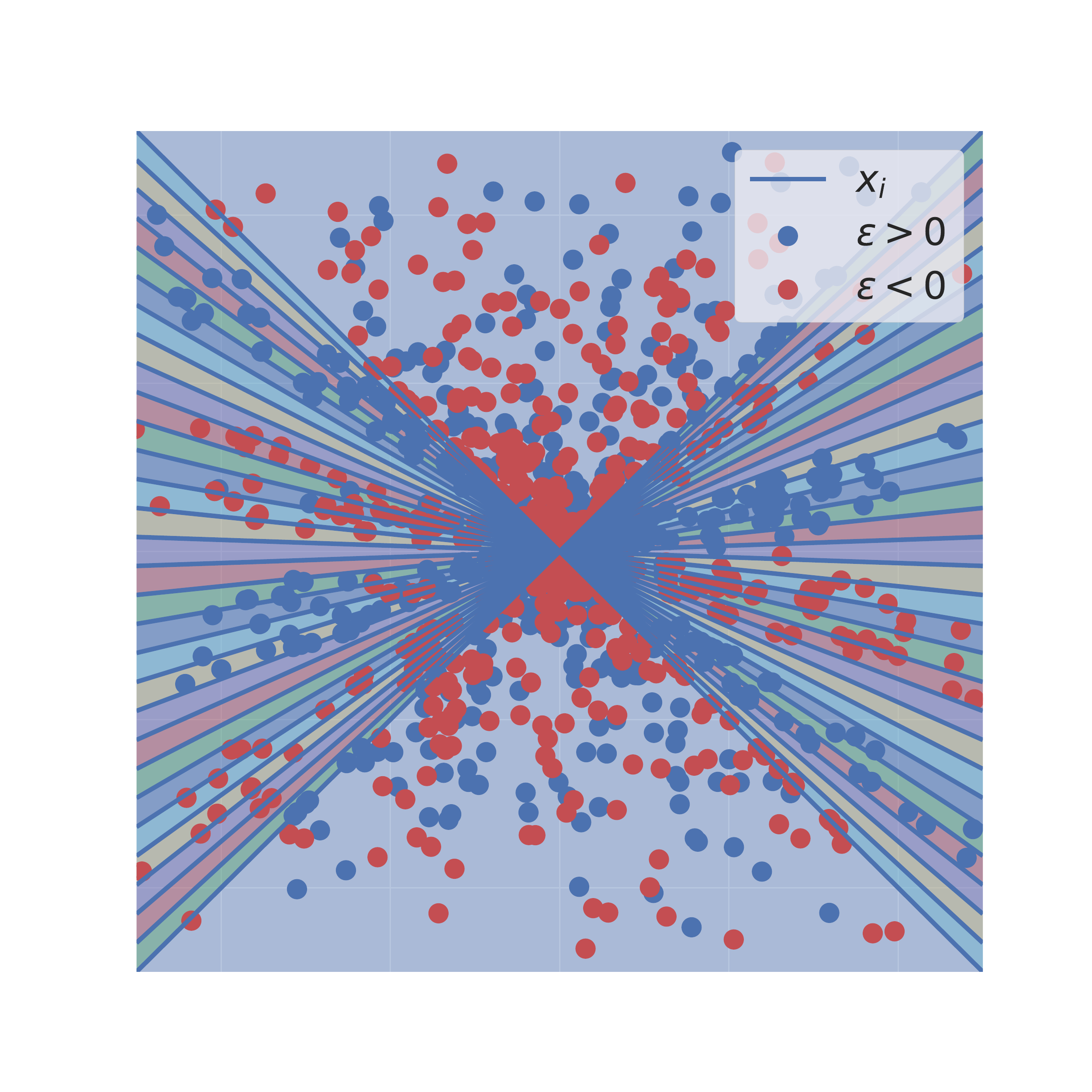}
    \endminipage
    \minipage{0.2\textwidth}
    \includegraphics[width=\linewidth]{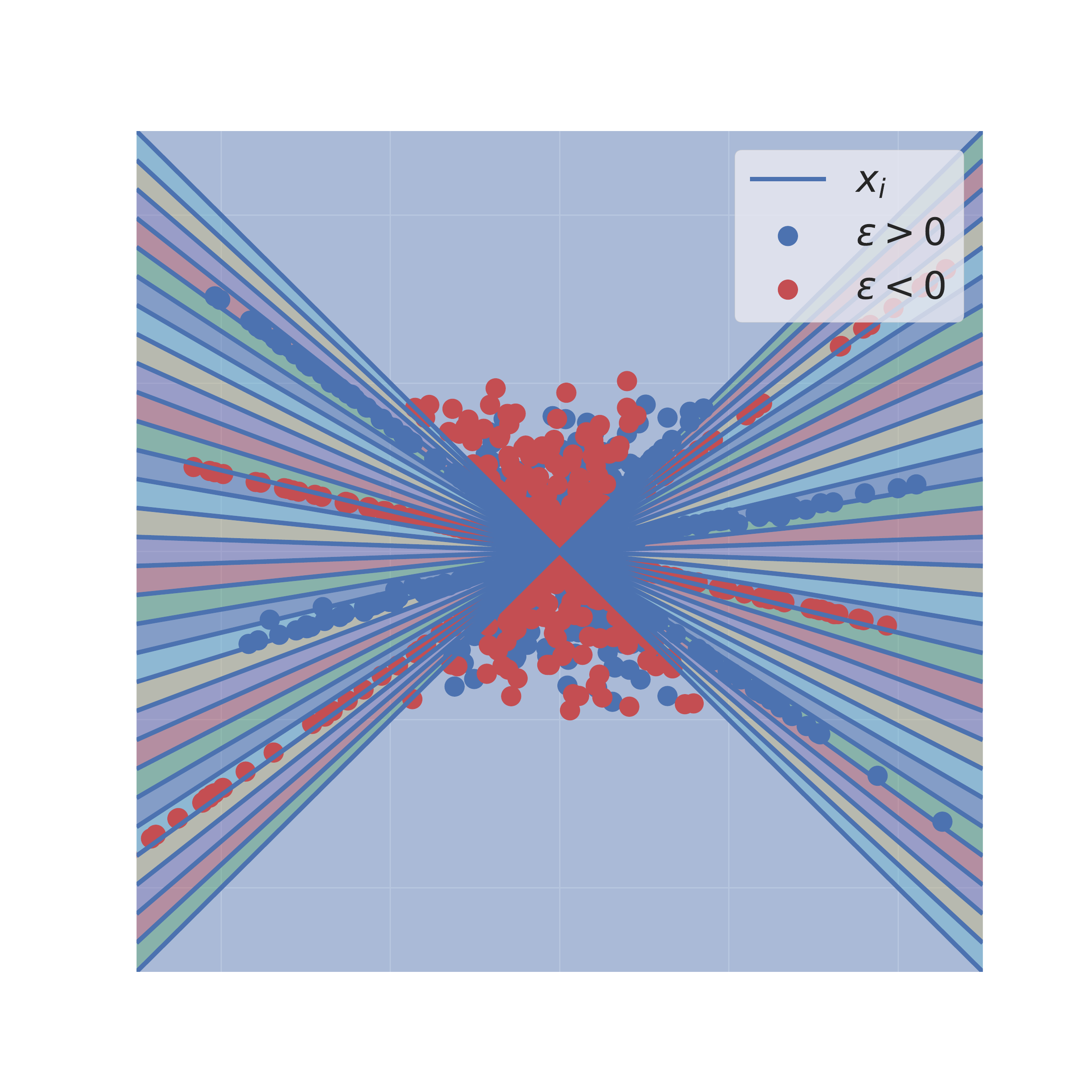}
    \endminipage
    \minipage{0.2\textwidth}
    \includegraphics[width=\linewidth]{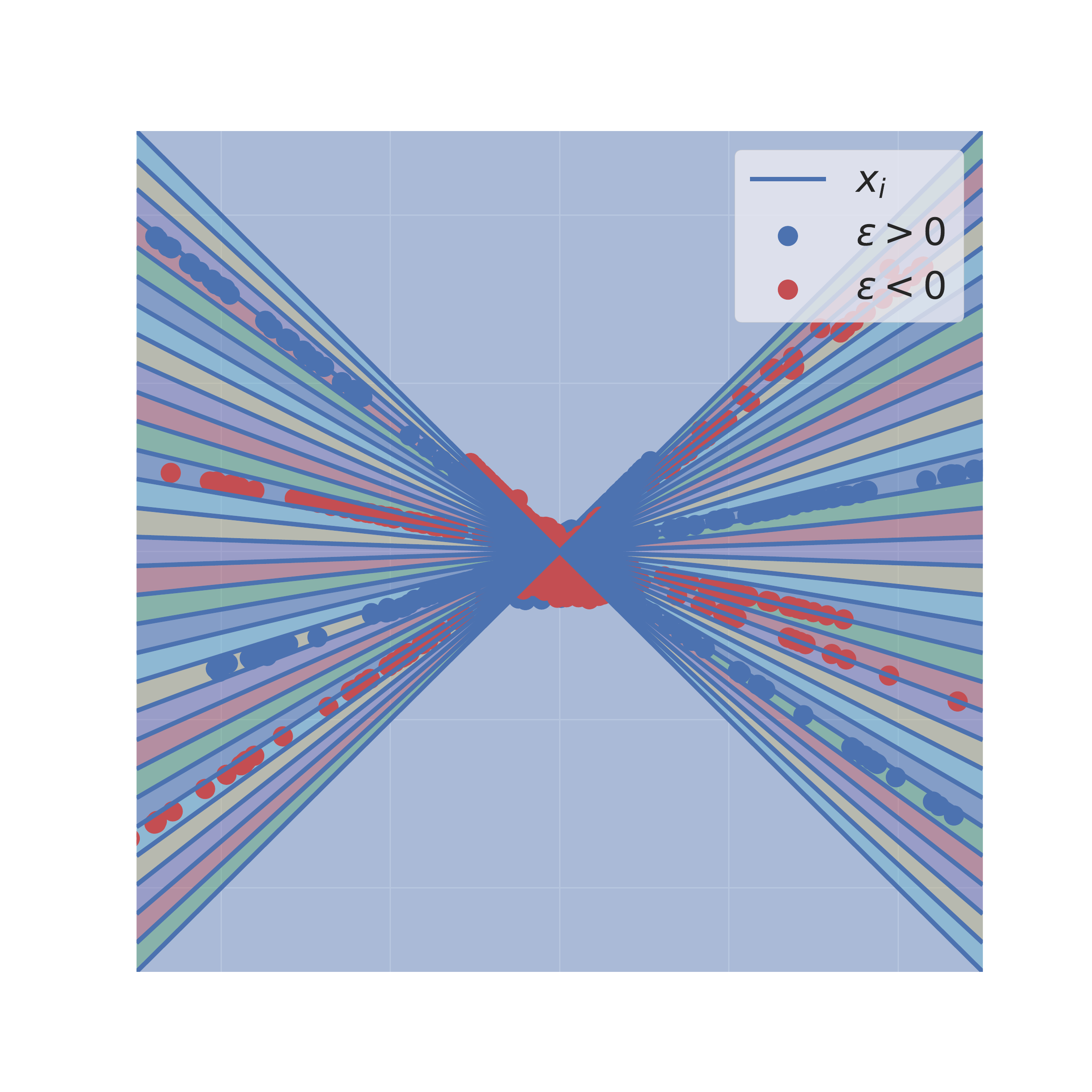}
    \endminipage\\
    \vspace{-5pt}
    \parbox{.19\textwidth}{\centering $\delta = -\infty$}
    \parbox{.19\textwidth}{\centering $\delta = -1$}
    \parbox{.19\textwidth}{\centering $\delta = 0$}
    \parbox{.19\textwidth}{\centering $\delta = 1$}
    \parbox{.19\textwidth}{\centering $\delta = \infty$}
    \caption{Comparison of fitting the network function to  a sinusoid as $\delta$ varies (10000 epochs).\vspace{-20pt}}
    \label{fig:vary_delta}
\end{figure}
\vspace{-0.1cm}
%
%


\section{Concluding Remarks}

We have studied the implicit bias of gradient 
descent in the approximation of univariate functions with single-hidden layer ReLU networks. 
Despite being an extremely simplified learning setup, it provides a clear illustration 
that such implicit bias can be drastically different depending on how the neural architecture is 
parameterized, normalized, or even initialized. Building up on recent theoretical work that studies 
neural networks in the overparameterized regime, we show how the model can behave 
either as a `classic' cubic spline interpolation kernel, or as an adaptive interpolation method, 
where neurons concentrate on sample points where the approximation most needs them. 
Moreover, in the one-dimensional case, we complement existing works \cite{sirignano2018mean} to reveal a clear transition between these two extreme training regimes, which roughly correspond to $W^{1,2}$ and $W^{2,2}$ Sobolev spaces respectively.

%
Although in our univariate setup there is no clear advantage of one functional space over the other, our full description of the dynamics may prove useful in the high-dimensional regime, where the curse of dimensionality hurts Hilbert spaces defined by kernels \cite{bach2017breaking}. 
We believe that the analysis  of the PDE resulting from the mean-field regime (where adaptation occurs) in the low-dimensional  setting will be useful to embark in the analysis of the high-dimensional counterpart. We note however that naively
extending our analysis to high-dimensions would result in an exponential increase in the number of regions that define our piecewise  linear flow, thus we anticipate new tools might be needed to carry it over. Moreover, the interpretation of ReLU features in terms of Green's functions (as first pointed out in \cite{sirignano2018mean}) does not directly carry over to higher dimensions. 
Lastly, another important limitation of the mean-field analysis is that it cannot be easily adapted 
to deeper neural network architectures, since the neurons within the network are no longer exchangeable as in 
the many-particle system described above.

\paragraph{Acknowledgements:}

This work was partially supported by the Alfred P. Sloan Foundation, NSF RI-1816753, NSF CAREER CIF 1845360, Samsung Electronics,
the NSF CAREER award 1652515,
the NSF grant IIS-1320635,
the NSF grant DMS-1436591,
the NSF grant DMS-1835712,
the SNSF grant P2TIP2\_175859,
the Moore-Sloan Data Science Environment, 
the DARPA D3M program, 
NVIDIA, 
Labex DigiCosme,
DOA W911NF-17-1-0438,
a gift from Adobe Research, 
and a gift from nTopology. 
Any opinions, findings, and conclusions or recommendations expressed in this material are those of the authors and do not necessarily reflect the views of DARPA.

\bibliography{main.bib}{}
\bibliographystyle{plain}
\newpage
\appendix
{\Large \bf Appendix}

\section{Spline Kernels}
\label{app:spline}

\subsection{Proof of Theorem \ref{prop:splines}} 

Consider the factorization of $\mu_0$ in terms of the marginal with respect to $a$, $q(a)$ and its conditional $\mu_a(b)$: $\mu_0(a,b) = q(a) \mu_a(b)$.
Observe that 
\begin{eqnarray}
\label{bli}
K_{\mathrm{RF}}(x,x') &=& \int_{\mathcal{D}_a \times \mathcal{D}_b}  [ax - b]_+ \cdot [ax' - b]_+ \dd \mu_0( a, b) \\
&=& \int_{\mathcal{D}_a} K_a(x,x') \mathrm{d}q(a) ~, \text{with }\nonumber
\end{eqnarray}
$$K_a(x,x') := \int_{I_a} [a x - b]_+ [a x' - b]_+ \dd \mu_a(b) ~.$$
We will argue that each $K_a$ defines a RKHS $\mathcal{H}_a$ 
of twice-continuously differentiable functions with appropriate boundary condition, 
and with norm $\| f \|_a$ corresponding to a weighted squared second derivative. 
We will then conclude by leveraging the fact that averages 
of positive definite kernels also define an RKHS \cite{aronszajn1950theory,hotz2012representation}.  

We verify that for each $a \neq 0$ this kernel defines a natural spline. Let $I_a = \text{supp}(\mu_a)\subseteq \mathcal{D}_b$ 
and $\tilde{I}_a=\{ u; au \in I_a\}$. Let $s_a = \inf\{ x \in I_a \}$. 
For $f \in C^2(I_a)$ with $f'$ and $f''$ in $L^2(I_a)$ such that $f( s_a)=f'( s_a)=0$, let $f_a(x):=f(ax)$, $x \in \tilde{I}_a$. Assume first that $I_a$ is an interval (and thus $\tilde{I}_a$ is also an interval).  
By the Taylor integral remainder theorem, we have for $x \in \tilde{I}_a$ that
\begin{equation}
\label{kuku}
    f_a(x)=f(ax) = \int_{s_a}^{ax} (ax-u) f^{''}(u) \dd u = \int_{I_a} [ax-u]_+ \frac{f^{''}(u)}{\mu_a(u)} \mu_a(u) \dd u ~.
\end{equation}
This shows that $g_a=\frac{f^{''}}{\mu_a}$ is the kernel representation of $f_a$ corresponding to $K_a$. 
This kernel defines an RKHS $\mathcal{H}_a$ over the space of $C^2$ functions $f$ supported in $\tilde{I}_a$ with boundary conditions ${f}(s_a/a) = {f}'(s_a/a)=0$. Moreover, 
 its RKHS norm is by definition $$\| f_a \|_{K_a}^2 = \int_{I_a} \left|\frac{f^{''}(u)}{\mu_a(u)}\right|^2 \dd \mu_a(u)~. $$
For general support $I_a$, consider the closure $I_a^c :=(\inf I_a, \sup I_a)$. Since $I_a \subseteq \mathbb{R}$ is measurable and the support of a continuous function, we can write it as $I_a = \cup_{k=1}^\infty I_a^{(k)}$, where $I_a^{(k)}$ are disjoint intervals. We modify the boundary condition for $f$ accordingly, as 
\begin{equation}
\label{prov1}
f(\sup I_a^{(k)}) = f(\inf I_a^{(k+1)}) \text{ and } f'(\sup I_a^{(k)}) = f'(\inf I_a^{(k+1)}) ~\forall \, k.  \end{equation}

We extend the representation (\ref{kuku}) to $I_a^c$ by first extending $g_a$ to $\RR$ as 
$$\bar{g}_a(u) = \left\{ \begin{array}{cc}
\frac{f^{''}(u)}{\mu_a(u)} & \, u \in I_a \\
0 & \text{ otherwise}.
\end{array} \right.$$
We verify that for $f$ in $C^2(I_a^c)$ satisfying the boundary conditions (\ref{prov1}), $\bar{g}_a$ satisfies 
\begin{equation}
\label{kuku2}
    f_a(x) = \int_{I_a^c} [ax-u]_+ \bar{g}_a(u) \mu_a(u) \dd u ~.
\end{equation}
Therefore, if $h \in C^2( \tilde{I}_a^c)$, 
the change of variables $h=f_a$ for some $f$ in $C^2({I}_a^c)$ yields a kernel representation 
$$\hat{h}_a(u) = \left \{ \begin{array}{cc}
\frac{h^{''}(a^{-1}u)}{a^2 \mu_a(u)} & \, u \in I_a \\
0 & \text{ otherwise}.
\end{array} \right. ,$$
and thus
\begin{equation}
    \| h \|^2_{K_a}= \frac{1}{|a|}\int_{\tilde{I}_a} \frac{|h^{''}(u)|^2}{\bar{\mu}_a( u)}  \dd u~, 
\end{equation}
where $\bar{\mu}_a(u) := \mu_a(au)$.
For $f \in C^2(\RR)$, we can thus decompose 
it as $f = \tilde{f} + \bar{f}$, where $\tilde{f} \in \mathcal{H}_a$ is supported in $\tilde{I}_a^c$ and 
satisfies the boundary conditions (\ref{prov1}), 
and $\bar{f}''\equiv 0$ in $\tilde{I}_a$. The projection of $f$ onto $\mathcal{H}_a$ is $\tilde{f}$ and
\begin{equation}
\label{blii}
\|P_{\mathcal{H}_a} f \|_{K_a}^2 = \frac{1}{|a|} \int_{\tilde{I}_a} \left|\frac{f^{''}(u)}{\bar{\mu}_a(u)}\right|^2 \dd \bar{\mu}_a(u) ~.
\end{equation}

Let us now show that we can `integrate' the collection of kernels $K_a$ and define a 
resulting RKHS $\mathcal{H}_{\mathrm{RF}}$ whose kernel is precisely $K_{\mathrm{RF}}$. 
For that purpose, we verify first that for all $x$, 
\begin{equation}
\label{yuy}
    \int_{\mathcal{D}_a} K_a(x,x) \mathrm{d} q(a) < \infty ~.
\end{equation}
Indeed, observe that 
\begin{eqnarray*}
\int_{\mathcal{D}_a} K_a(x,x) \mathrm{d} q(a) &=& \int_{\mathcal{D}_a \times \mathcal{D}_b} [ax - b]_+^2 \dd \mu_0(a,b) \\
&\leq& 2\max(1,x^2) \int (a^2 + b^2) \mu_0(\mathrm{d}a,\mathrm{d}b) \leq 2\max(1,x^2) \sigma^2_{\mu_0} < \infty~,
\end{eqnarray*}
which proves (\ref{yuy}). We can thus apply Theorem 3.1 from \cite{hotz2012representation}, which establishes that $K_{\mathrm{RF}}$ defines an RKHS 
$$\mathcal{H}_{\mathrm{RF}} = \left\{f; f= \int_{\mathcal{D}_a} f_a \mathrm{d}q(a)~;\, f_a \in \mathcal{H}_a \right\}~,$$
with norm
$$\| f \|_{K_{\mathrm{RF}}}^2 :=\inf \left \{\int_{\mathcal{D}_a} \| f_a \|_{K_a}^2 \mathrm{d}q(a) ~;~f=\int_{\mathcal{D}_a} f_a \mathrm{d}q(a)   \right\}~,$$
which, from (\ref{blii}), gives
\begin{equation}
\label{bliii}
    \| f \|_{K_{\mathrm{RF}}}^2 = \inf \left \{\int_{\mathcal{D}_a} \left(\int_{\tilde{I}_a} \frac{|f_a^{''}(u)|^2}{\bar{\mu}_a(u)} \dd u\right) \frac{\mathrm{d}q(a)}{|a|} ~;~f=\int_{\mathcal{D}_a} f_a \mathrm{d}q(a)   \right\}~.
\end{equation}

Let us now show that (\ref{bliii}) 
has an explicit form as a weighted curvature. 
Let $\Omega:=\cup_{a} \tilde{I}_a$. 
From $f = \int_{\mathcal{D}_a} f_a \dd q(a)$ 
we deduce that 
$$\forall~u\in \Omega~,f''(u) = \int_{\mathcal{D}_a} f''_a(u) \dd q(a)~.$$
For each $u \in \Omega$, denote $\beta(a)=f''_a(u)$ for $a \in \mathcal{B}_u:=\{a; u \in \tilde{I}_a\}$, 
and $\nu(u) := \int_{\mathcal{B}_u} |a| \bar{\mu}_a(u) \dd q(a)$. 
Since $\| f\|^2_{K_{\mathrm{RF}}} < \infty$ 
and $f'' \in C^0(\Omega)$, let us first argue 
that $\nu(u)=0$ necessarily implies that 
$f''(u)=0$. Indeed, if $\nu(u)=0$, 
it follows that $\bar{\mu}_a(u)=0$ for all $a$, and since each $f''_a$ is also continuous, from (\ref{bliii}) we deduce that necessarily $f''_a(u)=0$ for all $a$, which implies that $f''(u)=0$.

Let us now assume that $\nu(u)>0$.
The constrained minimisation problem 
\begin{equation}
\label{co1}
    \min_{\beta(a)}\quad  \int_{\mathcal{B}_u} |\beta(a)|^2 \frac{\dd  q(a)}{\bar{\mu}_a(u) |a|} 
    \quad s.t\quad   f''(u) = \int_{\mathcal{B}_u} \beta(a) \dd q(a)
\end{equation}
has an associated Lagrangian 
$$\mathcal{L}(\beta, \lambda) = \int_{\mathcal{B}_u} |\beta(a)|^2 \frac{\dd q(a)}{\bar{\mu}_a(u) |a|} + \lambda\left( f''(u) - \int_{\mathcal{B}_u} \beta(a) \dd q(a)\right)~.$$
The first-order KKT optimality conditions directly give 
\begin{equation}
\label{co2}
\beta(a) = \frac{|a| \bar{\mu}_a(u) f''(u)}{\int_{\mathcal{B}_u} |a'| \bar{\mu}_{a'}(u) \dd q(a')}~,
\end{equation}
resulting in a minimum of (\ref{co1}) at
\begin{equation}
\label{co3}
\frac{|f''(u)|^2}{\nu(u)}~.    
\end{equation}
Finally, we need to show that one can make these optimal choices for each $u$ to define $f_a \in \mathcal{H}_a$ for each $a$. 
From (\ref{co2}), $f''_a$ is defined for each $u \in \tilde{I}_a$ as $f_a''(u) = f''(u) \eta_a(u)$, where  $\eta_a(u)=\frac{|a| \bar{\mu}_a }{\nu(u)}$ is the normalised density satisfying $\int \eta_a(u)\dd q(a)=1$ for all $u$.
We verify that $f'' \eta_a$ is in $L^2(\tilde{I}_a)$ thanks to the smoothness hypothesis of $\mu_0$. Indeed, 
by assumption, the normalised density 
$\eta_a$ is bounded for each $a$, $\sup_u \eta_a(u) \leq K_a$,
and thus
$$\int_{\tilde{I}_a} |f''_a(u)|^2 \dd u = \int_{\tilde{I}_a} |f''(u)|^2 \eta_a(u)^2 \dd u \leq K_a^2  \int_\Omega \frac{|f''(u)|^2}{\nu(u)^2} \dd u < \infty~,$$

since $f'' \in L^2(\Omega)$.
We can therefore define $f_a$ on each $\tilde{I}_a$ thanks the reproducing property 
$$\forall ~a,\,x\in \tilde{I}_a ~,\, f_a(x)=\int f_a''(u) [x-u]_+ \dd u~.$$
Integrating over $a$  yields
\begin{eqnarray*}
\int_{\mathcal{D}_a} f_a(x) \dd q(a) &=& \int_{\mathcal{D}_a} \int f_a''(u) [x-u]_+ (\dd u) \dd q(a)= \int \left(\int f_a''(u) \dd q(a) \right) [x-u]_+  (\dd u) \\
&=& \int f''(u) [x-u]_+ \dd u= f(x) \,,
\end{eqnarray*}
which proves that $(f_a)_a$ is a feasible set to represent $f$ in (\ref{bliii}).

Finally, by applying the optimality conditions (\ref{co1}, \ref{co3}) to each $u \in \Omega$ one obtains $$\|f \|^2_{K_{\mathrm{RF}}} = \int_{\Omega} \frac{|f''(u)|^2}{ \nu(u)} \dd u\,$$
which concludes the proof of the first statement.

Finally, let us verify that in the specific case when $\mu_0(a,b) = q(a) \mathbf{1}( b \in I_a)$ the kernel is piece-wise cubic. 
Fix $a \in \mathcal{D}_a$ and suppose first that $a>0$. Let $I_a=[s_a, S_a]$, and denote $\tilde{I}_a = [s_a/a,S_a/a]$.
A direct calculation shows that when $x' \in \tilde{I}_a$, $K_a(x,x')$ is $0$ when $x < s_a/a$, cubic in $x$ when $x \leq x'$, and linear when $x > x'$.
When $x' < s_a/a$, then $K_a(\cdot,x')\equiv 0$, and finally when $x' > S_a/a$ we have that $K_a(x, x')=0$ for $x < s_a/a$, $K_a(x,x')$ is cubic for $x \in \tilde{I}_a$ and linear for $x > S_a/a$. We also verify that its first and second derivatives are continuous. In summary, the  evaluation functions $K_a(\cdot, x')$ are $0$ for $x < s_a/a$, a piece-wise cubic inside $\tilde{I}_a$, and linear for $x>S_a/a$. When $a<0$ we verify the same holds true by flipping the sense of inequalities. 
$\square$


\subsection{Proof of Corollary \ref{coro:ntk}}

The proof follows closely the previous section. 

Observe first that $K_{\mathrm{NTK}}$ can 
be written as the sum of three kernels:
$K_{\mathrm{NTK}} = K_{\mathrm{RF}} + \mathbb{E}(c^2)(K_1 + K_2)$, with 
$$K_1(x,x') = \mathbb{E}_{\mu_0} [\mathbf{1}(ax-b>0)\mathbf{1}(ax'-b>0)]~,~K_2(x,x') = xx'\mathbb{E}_{\mu_0} [\mathbf{1}(ax-b>0)\mathbf{1}(ax'-b>0)]~.$$

We will first assume that $\mu_0$ is concentrated at $a=1$ and uniform $b\in(0,1)$.
We can then extend to the general case by the previous proof. 
Observe that for $f \in C^1([0,1))$ with $f(0)=0$ the Taylor integral remainder at order $1$ now gives
$$f(x) = \int_0^1 f'(u) \mathbf{1}( x-u>0) \dd u~,~x\in(0, 1)~.$$
It follows that $f'$ is the kernel representation associated to $K_1(x,x')$. 

Similarly, 
$$x f(x) = \int_0^1 f'(u) x \mathbf{1}( x-u>0) \dd u~,~x\in(0, 1)~,$$
which shows that $f'$ is the kernel representation of $g(x) = x f(x)$ corresponding to the kernel $K_2(x,x')$. 
A change of variables then implies that 
$$f' = \left(\frac{g}{x}\right)'=\frac{x g' - g}{x^2}$$
is the contribution of $K_2$ to the RKHS norm of $K_{\mathrm{NTK}}$. 
The proof is completed by repeating the argument used in the proof of Theorem \ref{prop:splines} that replaces the uniform measure in the interval $(a,b) \in \{1\} \times (0,1)$ by a smooth density $\mu_0$. 
We leave the derivation of an explicit form of the mixed norm for future work. $\square$.

\subsection{Radial Kernel}
We verify by direct calculation that 
\begin{eqnarray}
K_{\mathrm{RF}}^r(x,x') &=& \int_{\mathcal{D}_a \times \mathcal{D}_b}  [ax + b]_+ \cdot [ax' + b]_+ \mu_0(da, db) \\
&=& \frac{1}{2\pi}\int_{\arctan(x')}^{\arctan(x)+\pi}\int_0^\infty  r^3 (\cos(\eta)x+\sin(\eta))(\cos(\eta)x'+\sin(\eta)) \mu_0(dr) d\eta \nonumber \\ 
&=& \frac{C}{2\pi} \int_{\arctan(x')}^{\arctan(x)+\pi} \left(x x' \cos^2(\eta) + \sin^2(\eta) + (x+x')\cos(\eta)\sin(\eta) \right) d\eta \nonumber \\
&=& \frac{C}{2\pi} \left[x x' \left(\frac{\eta}{2} +\frac{\sin(2\eta)}{4}\right) + \left(\frac{\eta}{2} -\frac{\sin(2\eta)}{4}\right) + \frac{(x+x')}{2}\sin^2(\eta)  \right]_{\arctan(x')}^{\arctan(x)+\pi} \nonumber \\
&=& \frac{C}{4\pi}\Big\{(\pi - \arctan(x')+\arctan(x))(xx'+1) + \left( \frac{x'}{1+(x')^2} - \frac{x}{1+x^2} \right)(xx'-1) + \nonumber \\ && \quad + (x+x')\left(\frac{(x')^2}{1+(x')^2} - \frac{x^2}{1+x^2} \right)\Big\} \nonumber ~,
\end{eqnarray}

\section{Mean Field Computations}
\label{sec:mfappendix}

Making use of notation introduced in Section~\ref{sec:reduced_dynamics}, we have that if $w=(\hat{r}, \theta)$, with $\theta \in \mathcal{A}_k$, then
\begin{eqnarray}
\label{eq:meanfield1}
    \nabla_\theta V(w; \mu_t) &=&
    -\nabla_\theta F(w) + \int_\mathcal{D} \nabla_\theta K(w, w') \mu_t(dw') \nonumber \\
    &=& -\hat{r}\left( \sum_{j \in \mathcal{C}_k} y_j \langle \tilde{x}_j, t(\theta)\rangle  -\int_\mathcal{D} r'\sum_{j \in \mathcal{C}_k} \langle \tilde{x}_j,t(\theta)\rangle \langle \tilde{x}_j, d(\theta') \rangle_+ \mu_t(d\hat{r}', d\theta') \right) \nonumber \\
    &=& -\hat{r} \sum_{j \in \mathcal{C}_k} \langle \tilde{x}_j, t(\theta)\rangle \left(y_j - \int_{\mathbb{R} \times \mathcal{B}_j} \hat{r}' \langle \tilde{x}_j, d(\theta') \rangle \mu_t(d\hat{r}', d\theta') \right) \nonumber \\
    &=& \hat{r} \left \langle \sum_{j \in \mathcal{C}_k} \rho_j(t) \tilde{x}_j, t(\theta) \right \rangle~,
\end{eqnarray}
where $\rho_j(t) = f_{\mu_t}(x_j) - y_j =    \int_{\mathbb{R} \times \mathcal{B}_j} c \langle \tilde{x}_j,\theta  \rangle \mu_t(d\hat{r}, d\theta)  - y_j$
is the residual at point $x_j$ at time $t$. 
Similarly, the field in the direction of the charges is given by
\begin{equation}
\label{eq:meanfield3}
    \nabla_{\hat{r}} V(w; \mu_t) =  \left \langle \sum_{j \in \mathcal{C}_k} \rho_j(t) \tilde{x}_j, \theta \right \rangle ~.
\end{equation}

\subsection{Proof of Proposition \ref{prop:resiode}}
We observe that for each $j$, 
\begin{eqnarray}
\label{eq:oderesiduals}
\dot{\rho}_j(t) &=& \partial_t f_{\mu_t}(x_j) = \partial_t \left( \int_\mathcal{D} \varphi(w;x_j) \mu_t(dw) \right) \nonumber \\ 
&=& - \int_\mathcal{D} \langle \nabla_w \varphi(w;x_j), \nabla V(w;\mu_t) \rangle \mu_t(dw) \nonumber \\
&=& - \int_\mathcal{D} \left(  \nabla_\theta \varphi(w;x_j) \cdot \nabla_\theta V(w;\mu_t)  +  \nabla_r \varphi(w;x_j) \cdot \nabla_r V(w;\mu_t)  \right) \mu_t(dw) \nonumber \\
&=& -\sum_{k; \mathcal{A}_k \subset \mathcal{B}_j} \int_{\mathbb{R} \times \mathcal{A}_k} \left( r^2  \tilde{x}_j^\top (t(\theta)t(\theta)^\top) (\sum_{j' \in \mathcal{C}_k} \rho_{j'}(t) \tilde{x}_{j'}) + \tilde{x}_j^\top (\theta \theta^\top) (\sum_{j' \in \mathcal{C}_k} \rho_{j'}(t) \tilde{x}_{j'}) \right) \mu_t(dw) \nonumber  \\
&=& -\tilde{x}_j^\top \sum_{k; \mathcal{A}_k \subset \mathcal{B}_j}  \Sigma_k(t) (\sum_{j' \in \mathcal{C}_k} \rho_{j'}(t) \tilde{x}_{j'})~, 
\end{eqnarray}
where 
$$\Sigma_k(t) = \int_{\mathbb{R} \times \mathcal{A}_k} \left(r^2 t(\theta)\, t(\theta)^\top + \theta\, \theta^\top\right) \mu_t(dr,d\theta) $$
tracks the covariance of the measure along each  cylindrical region. 

\section{Changing Metric in the Dynamics}

\begin{lemma}\label{le:fixed_delta_supp}
If $\bm z(t) = (\bm a(t), \bm b(t), \bm c(t))$ is a solution of the gradient flow \eqref{eq:gradient_flow}, then the quantities
\begin{equation}\label{eq:invariants_supp}
\bm \delta = (\delta_i = c_i(t)^2 - a_i(t)^2 - b_i(t)^2)_{i=1}^m
\end{equation}
remain constant for all $t$. In particular, given a reduced neuron $(r_i,\theta_i)$, we can uniquely recover the original neuron $(a_i,b_i,c_i)$, since
\begin{equation}\label{eq:c_uv_supp}
    c_i^2 = \frac{\delta_i + \sqrt{\delta_i^2 + 4 r_i^2}}{2}. 
\end{equation}
\end{lemma}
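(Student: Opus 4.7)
The claim has two parts: a conservation law and an algebraic inversion formula. The plan is to treat them separately, with the conservation law following from the scaling invariance of the ReLU parametrization and the inversion formula from solving a quadratic in $c_i^2$.

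For the conservation law, I would first compute the three partial derivatives of $L$ with respect to $a_i,b_i,c_i$ directly from the definition of $f_{\bm z}$ in \eqref{eq:leastsquares}. Writing $r_j := f_{\bm z}(x_j)-y_j$ and $\mathbf{1}_{ij}:= \mathbf{1}[a_ix_j - b_i > 0]$, these become
\begin{align}
\partial_{a_i} L &= \tfrac{1}{\alpha(m)}\textstyle\sum_j r_j\, c_i x_j \mathbf{1}_{ij},\quad
\partial_{b_i} L = -\tfrac{1}{\alpha(m)}\textstyle\sum_j r_j\, c_i \mathbf{1}_{ij},\\
\partial_{c_i} L &= \tfrac{1}{\alpha(m)}\textstyle\sum_j r_j\, [a_i x_j - b_i]_+.
\end{align}
Using the identity $(a_i x_j - b_i)\mathbf{1}_{ij} = [a_i x_j - b_i]_+$, one checks the homogeneity relation
\begin{equation}
a_i\,\partial_{a_i} L + b_i\,\partial_{b_i} L \;=\; c_i\,\partial_{c_i} L,
\end{equation}
which is just the infinitesimal form of the $(a,b,c)\mapsto(ka,kb,c/k)$ rescaling invariance noted just after \eqref{eq:gradient_flow_functional}. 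Under the gradient flow $\dot{\bm z} = -\nabla L$, this yields
\begin{equation}
\tfrac{d}{dt}(a_i^2+b_i^2) = -2(a_i\partial_{a_i}L + b_i\partial_{b_i}L) = -2c_i\partial_{c_i}L = \tfrac{d}{dt}(c_i^2),
\end{equation}
so $\delta_i = c_i^2 - a_i^2 - b_i^2$ is conserved. To handle the non-smoothness of ReLU, I would note that the identity holds pointwise on the open dense set where no $a_i x_j - b_i$ vanishes, and invoke the Clarke subdifferential footnote from the paper: at generic times $t$, $\partial L$ reduces to $\{\nabla L\}$, while the conserved quantity is continuous, so it is constant on all of $\mathbb{R}_{\geq 0}$.

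For the inversion formula, recall from the definition of $\bm \pi$ in Section~2.2 that with $\alpha(m) = m$ (the canonical parametrization) one has $r_i = c_i\sqrt{a_i^2+b_i^2}$, hence
\begin{equation}
r_i^2 = c_i^2(a_i^2+b_i^2).
\end{equation}
Setting $u = c_i^2 \geq 0$ and $v = a_i^2+b_i^2 \geq 0$, the two relations $u - v = \delta_i$ and $uv = r_i^2$ give the quadratic $u^2 - \delta_i u - r_i^2 = 0$. Selecting the non-negative root produces the claimed formula
\begin{equation}
c_i^2 \;=\; \frac{\delta_i + \sqrt{\delta_i^2 + 4 r_i^2}}{2}.
\end{equation}
From $c_i^2$ and $r_i^2$ one then recovers $a_i^2+b_i^2$, while the angle $\theta_i$ fixes the ratio $b_i/a_i$ and the sign conventions of $\bm\pi$ fix the signs of $a_i,b_i,c_i$, completing the reconstruction of the original neuron.

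The conceptually interesting step is the homogeneity identity; the remaining work (differentiating the loss, handling the kink of ReLU, and solving the quadratic) is routine. I do not anticipate a substantive obstacle, since the conservation of $c_i^2 - (a_i^2+b_i^2)$ is precisely the Noether-type invariant associated to the one-parameter rescaling symmetry of each neuron.
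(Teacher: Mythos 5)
Your proof is correct and follows essentially the same route as the paper's: you compute the three partial derivatives of $L$, observe that $a_i\partial_{a_i}L + b_i\partial_{b_i}L = c_i\partial_{c_i}L$ (the paper simply expands $\dot\delta_i$ and cancels terms, which amounts to the same identity), and then invert via the quadratic $c_i^4 - \delta_i c_i^2 - r_i^2 = 0$. The only differences are cosmetic — you frame the cancellation as a homogeneity/Noether identity and explicitly address the ReLU kink via the Clarke subdifferential, both of which are fine and arguably cleaner; incidentally, you also have the correct sign on $\partial_{b_i}L$, whereas the paper's displayed formula drops the minus sign (harmless, since the cancellation still goes through).
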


\begin{proof}
 The gradient equations of the loss $L(\bm z)$ can be written as
 \begin{equation}\label{eq:derivative_equations}
\begin{aligned}
&\nabla_{a_i}L(\bm z) = c_i \sum_{j=1}^s  \mathds 1[a_i x_j - b_i \ge 0] x_j r_j,\\
&\nabla_{b_i}L(\bm z) = c_i \sum_{j=1}^s \mathds 1[a_i x_j - b_i \ge 0] r_j,\\
&\nabla_{c_i}L(\bm z) = \sum_{j=1}^s  \mathds 1[a_i x_j - b_i \ge 0](a_i x_j - b_i) r_j.\\
\end{aligned}
\end{equation}
 From these expressions we see that
\begin{equation*}\label{eq:delta_i}
\begin{aligned}
\dot{\delta}_i &= 2 c_i \dot{c}_i - 2 a_i \dot{a}_i - 2 b_i \dot{b}_i \\
               &= 2 c_i \nabla_{c_i} L(\bm z) - 2 a_{i} \nabla_{a_{i}} L(\bm z) - 2 b_i \nabla_{b_i} L(\bm z) \\
               &= 0.
\end{aligned}
\end{equation*}
Using $r^2_i = c_i \sqrt{a_i^2 + b_i^2}$, we see that $c_i^2 - \frac{r^2_i}{c_i^2} = \delta_i$ implies $c_i^4 - \delta_i c_i^2 - r^2 = 0$, and thus~\eqref{eq:c_uv_supp}.
\end{proof}

\begin{proposition}\label{thm:reduced_parameter_grad_app}
Let $\bm z(t)$ be a solution gradient flow \eqref{eq:gradient_flow} of $L(\bm z)$, and let $\bm \delta = (\delta_i) \in \RR^m$ be the vector of invariants~\eqref{eq:invariants}, which depend only on the initialization $\bm z(0)$. If $\bm w(t) = (\bm r(t), \bm \theta(t))$ is curve of reduced parameters corresponding to $\bm z(t)$, then we have that
\begin{equation*}
\dot{\bm w}_i(t) = \bm P_i \cdot \nabla_{\bm w_i} \tilde L(\bm w),
\quad i=1,\ldots,m,
\end{equation*}
where
\begin{equation*}
\bm P_{\delta_i}(r_i) = \begin{bmatrix}
    \frac{m^2}{\alpha(m)^2} (a_i^2 + b_i^2 + c_i^2)  & 
    0        \\
    0   & \frac{1}{a^2_i + b^2_i} \\
\end{bmatrix} = 
\begin{bmatrix}
\frac{m^2}{\alpha(m)^2}\left(\frac{r_i^2}{c(r_i)^2} + c(r_i)^2\right) & 0\\
0 &  \frac{c(r_i)^2}{r_i^2}\\
\end{bmatrix},
\end{equation*}

and $c(r_i)^2 = \frac{\delta_i + \sqrt{\delta_i^2 + 4 r_i^2}}{2}$.
\end{proposition}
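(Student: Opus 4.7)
The plan is to treat this as a direct change-of-variables computation via the chain rule, exploiting the fact that $\bm \pi$ acts neuron-wise so the Jacobian is block diagonal, with one $2\times 3$ block $J_i$ per neuron. Concretely, from $\tilde L \circ \bm \pi = L$ we get $\nabla_{\bm z_i} L = J_i^\top \nabla_{\bm w_i} \tilde L$, and from $\bm w_i(t) = \bm \pi_i(\bm z_i(t))$ together with the gradient flow $\dot{\bm z}_i = -\nabla_{\bm z_i} L$ we get $\dot{\bm w}_i = J_i \dot{\bm z}_i = -J_i J_i^\top \nabla_{\bm w_i} \tilde L$. So the goal reduces to showing that $\bm P_i := J_i J_i^\top$ equals the claimed $2\times 2$ diagonal matrix.

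Next I would compute $J_i$ explicitly. Setting $\rho_i := \sqrt{a_i^2 + b_i^2}$ and writing $r_i = \frac{m}{\alpha(m)} c_i \rho_i$, $\theta_i = \arctan(-b_i/a_i)$, a direct differentiation yields
\begin{equation*}
J_i \;=\; \begin{pmatrix}
\frac{m}{\alpha(m)}\frac{c_i a_i}{\rho_i} & \frac{m}{\alpha(m)}\frac{c_i b_i}{\rho_i} & \frac{m}{\alpha(m)}\rho_i \\[4pt]
\frac{b_i}{\rho_i^2} & -\frac{a_i}{\rho_i^2} & 0
\end{pmatrix}.
\end{equation*}
The key calculation is then $J_i J_i^\top$. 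The $(1,1)$ entry collects $\frac{m^2}{\alpha(m)^2}\bigl[\frac{c_i^2 a_i^2}{\rho_i^2}+\frac{c_i^2 b_i^2}{\rho_i^2}+\rho_i^2\bigr] = \frac{m^2}{\alpha(m)^2}(c_i^2 + \rho_i^2)$, which matches the claim. The $(2,2)$ entry is $\frac{a_i^2+b_i^2}{\rho_i^4} = 1/\rho_i^2$, also as claimed.

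The only step that is not completely mechanical is the vanishing of the off-diagonal entry, which I would highlight as the structurally important point: the $(1,2)$ element is $\frac{m}{\alpha(m)}\bigl[\frac{c_i a_i b_i}{\rho_i^3} - \frac{c_i a_i b_i}{\rho_i^3} + 0\bigr] = 0$. This is exactly the orthogonality between the radial rescaling $(a,b,c)\mapsto (ka, kb, c/k)$ (which leaves $\theta_i$ invariant and scales $r_i$) and the angular direction; the third column of $J_i$ (the $c_i$-derivatives) contributes to the first row only, and the $c_i$-derivative of $\theta_i$ is zero, so no cross term can arise.

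Finally, to obtain the second form of $\bm P_{\delta_i}(r_i)$ I would invoke Lemma~\ref{le:fixed_delta_supp}, which guarantees $\delta_i = c_i^2 - a_i^2 - b_i^2$ is conserved, so $c_i^2$ depends only on $r_i$ through $c(r_i)^2 = (\delta_i + \sqrt{\delta_i^2 + 4 r_i^2})/2$ (taking $\alpha(m) = m$ as in the canonical parametrisation used in the lemma), and hence $a_i^2 + b_i^2 = r_i^2/c(r_i)^2$. Substituting these two identities into the matrix derived above converts both diagonal entries into functions of $r_i$ and $\delta_i$ alone, yielding the stated expression. The whole argument is essentially a Jacobian computation; I do not anticipate a genuine obstacle, only the need to record the off-diagonal cancellation carefully.
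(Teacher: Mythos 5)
Your proposal is correct and follows essentially the same route as the paper: compute the block-diagonal $2\times 3$ Jacobian $J_i$ of $\bm\pi$ per neuron, form $J_iJ_i^\top$, verify the off-diagonal vanishes, and invoke Lemma~\ref{le:fixed_delta_supp} to eliminate $(a_i,b_i,c_i)$ in favor of $(r_i,\delta_i)$. In fact your write-up is the more careful one — you correctly use $J_iJ_i^\top$ where the paper's proof writes $\nabla\bm\pi^\top\nabla\bm\pi$ (a typo, since that would be $3\times 3$), and you correctly recover the minus sign $\dot{\bm w}_i=-J_iJ_i^\top\nabla_{\bm w_i}\tilde L$ that is dropped in the paper's statement.
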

\begin{proof} The Jacobian of the mapping $\bm \pi$ from parameters to reduced parameters is given by
\begin{equation*}
\nabla(\bm \pi)(a_i,b_i,c_i) = 
    \begin{bmatrix}
    \frac{m}{\alpha(m)}\frac{ca}{\sqrt{a_i^2 + b_i^2}} & \frac{m}{\alpha(m)}\frac{cb}{\sqrt{a_i^2 + b_i^2}} & \frac{m}{\alpha(m)}\sqrt{a_i^2 + b_i^2}\\
    - \frac{b}{{a_i^2 + b_i^2}} & \frac{a}{{a_i^2 + b_i^2}} & 0
    \end{bmatrix}, \qquad i=1,\ldots,m.
\end{equation*}
This implies that the \emph{tangent kernel} $\bm P_{\delta_i}(r_i) = \nabla({\bm \pi})^\top \nabla({\bm \pi})$ is as in~\eqref{eq:neuron_kernel}. We emphasize that the fact that this kernel can be written only as a function of $\bm w$ (and, in fact, only of $\bm r$) relies in essential manner on Lemma~\ref{le:fixed_delta}.
\end{proof}

\section{Additional Numerical Experiments}\label{sec:additional_experiments}

In Figure~\ref{fig:trajectories}, we plot the trajectories of neurons for $\delta = \pm \infty$ over 10000 epochs. We see that, if $\delta = -\infty$, the neurons move radially away from the origin and thus the knot positions do not change (top row). In stark contrast, if $\delta = \infty$, the neurons adapt to the input data, and the knots ``stick'' to input samples (bottom row).
%
    

\begin{figure}
    \centering
    \parbox{0.5\textwidth}{\centering $\delta = -\infty$}\hfill
    \parbox{0.5\textwidth}{\centering $\delta =  \infty$}
    \minipage{0.50\textwidth}
    \includegraphics[width=\linewidth]{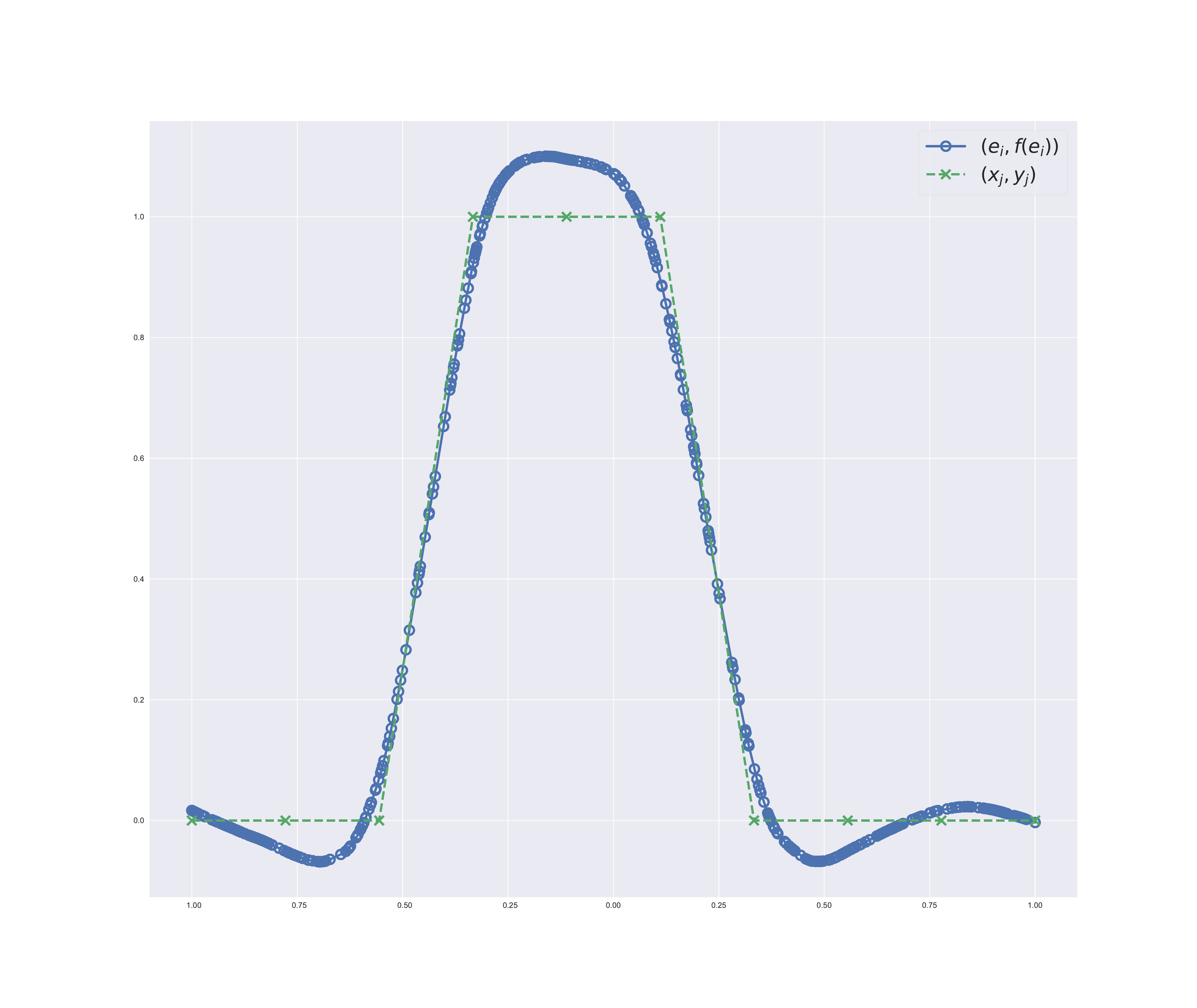}
    \endminipage\hfill
    \minipage{0.50\textwidth}
    \includegraphics[width=\linewidth]{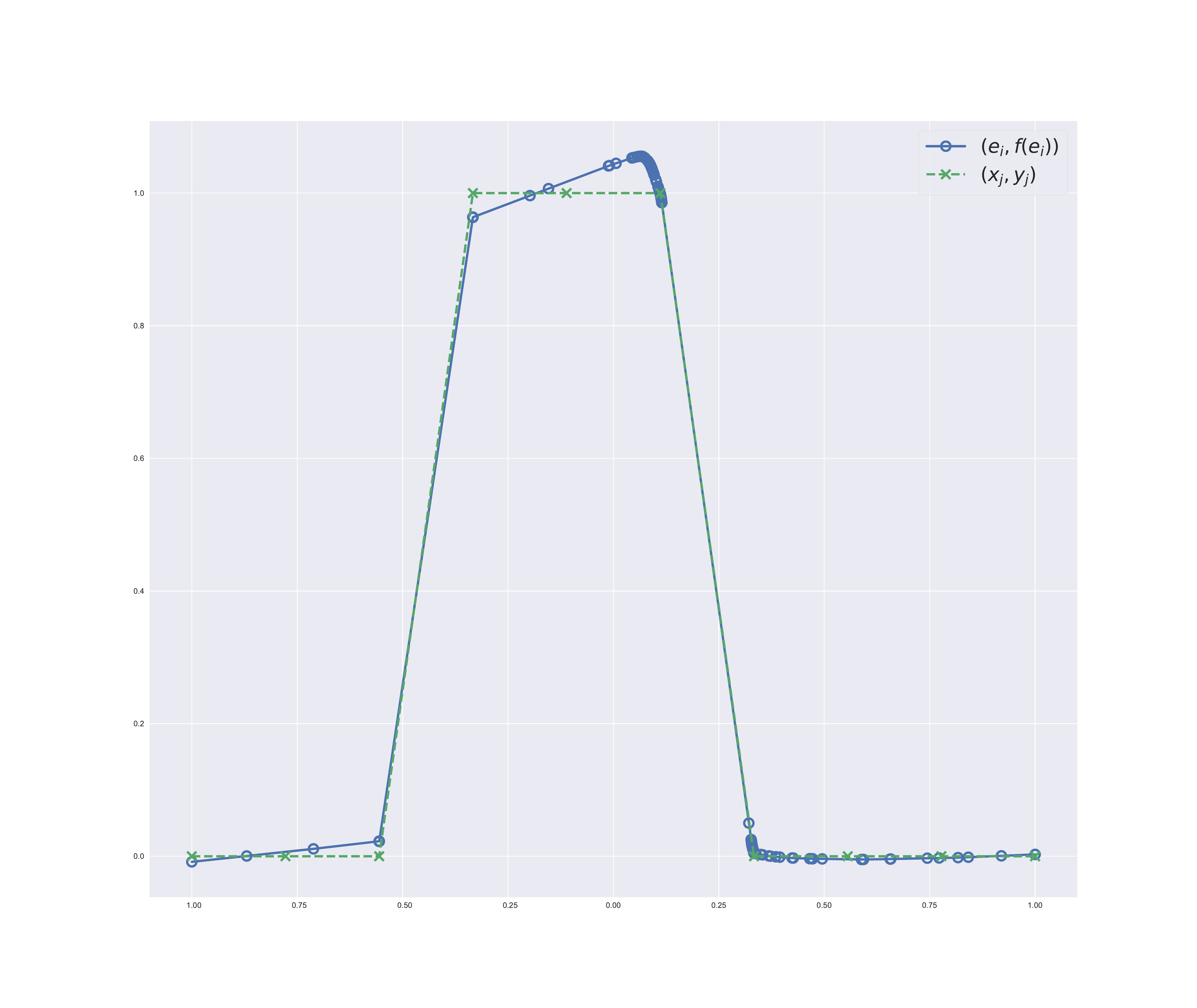}
    \endminipage\hfill\\
    \minipage{0.50\textwidth}
    \includegraphics[width=\linewidth]{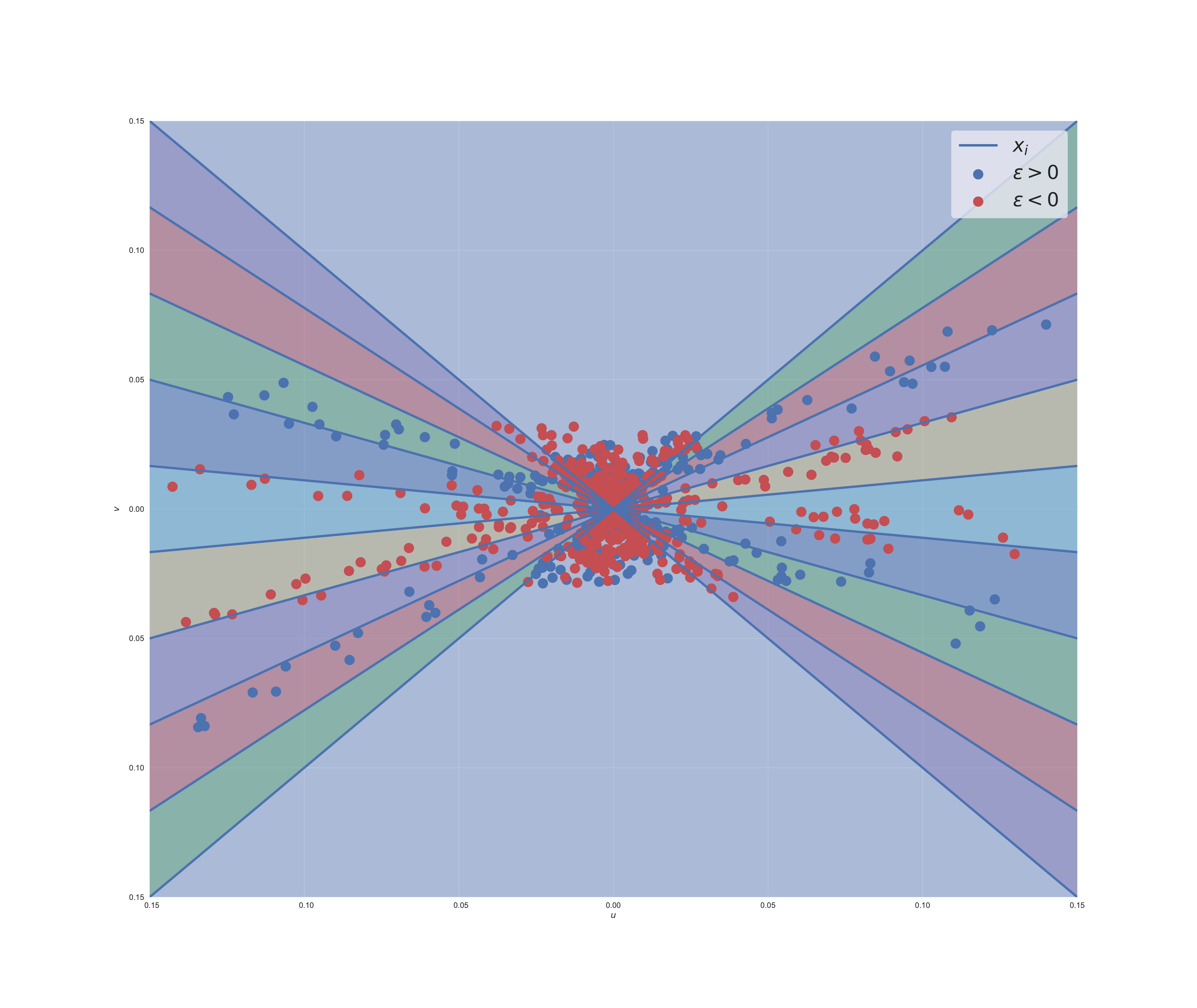}
    \endminipage\hfill
    \minipage{0.50\textwidth}
    \includegraphics[width=\linewidth]{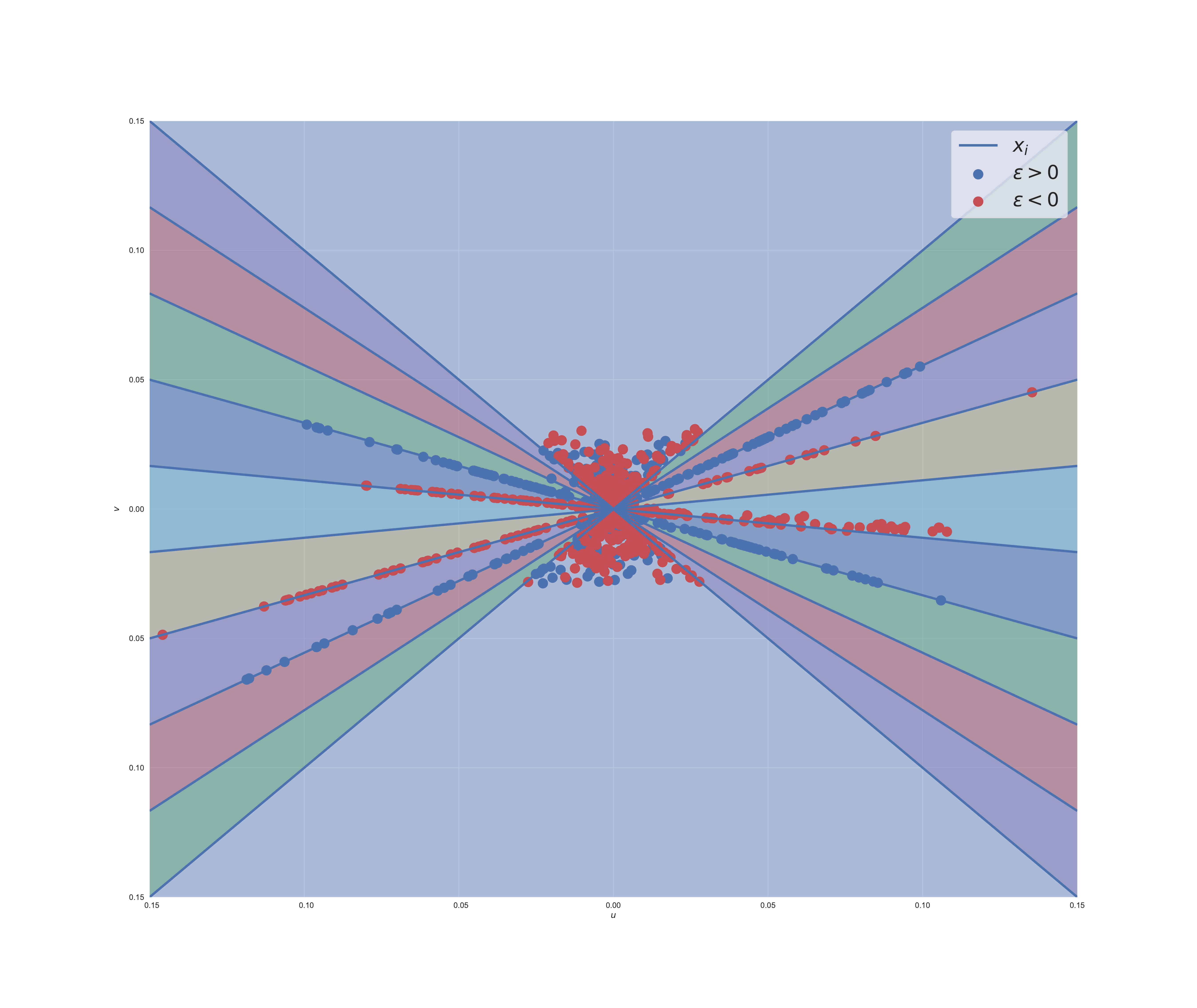}
    \endminipage\hfill\\
    \minipage{0.50\textwidth}
    \includegraphics[width=\linewidth]{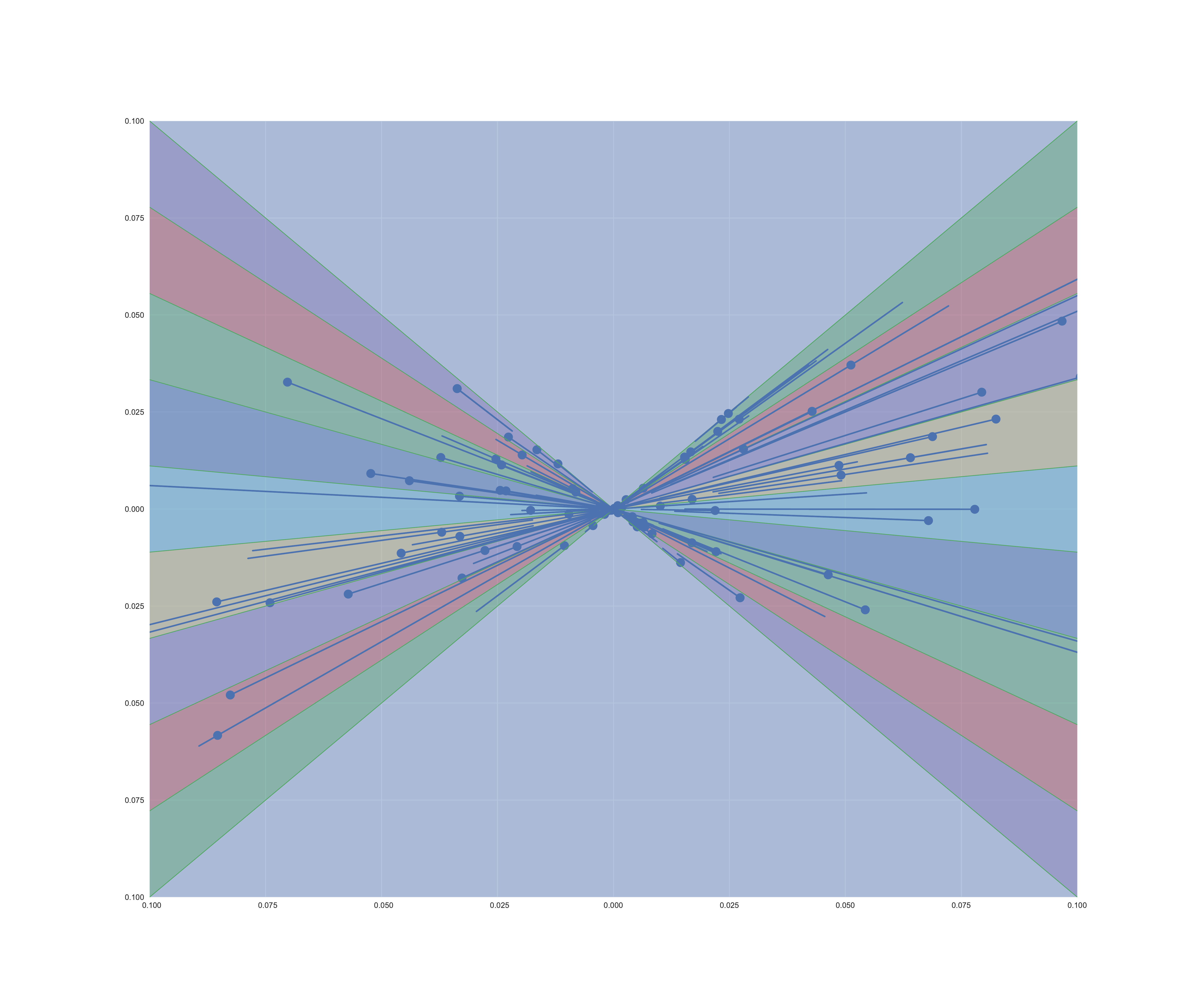}
    \endminipage\hfill
    \minipage{0.50\textwidth}
    \includegraphics[width=\linewidth]{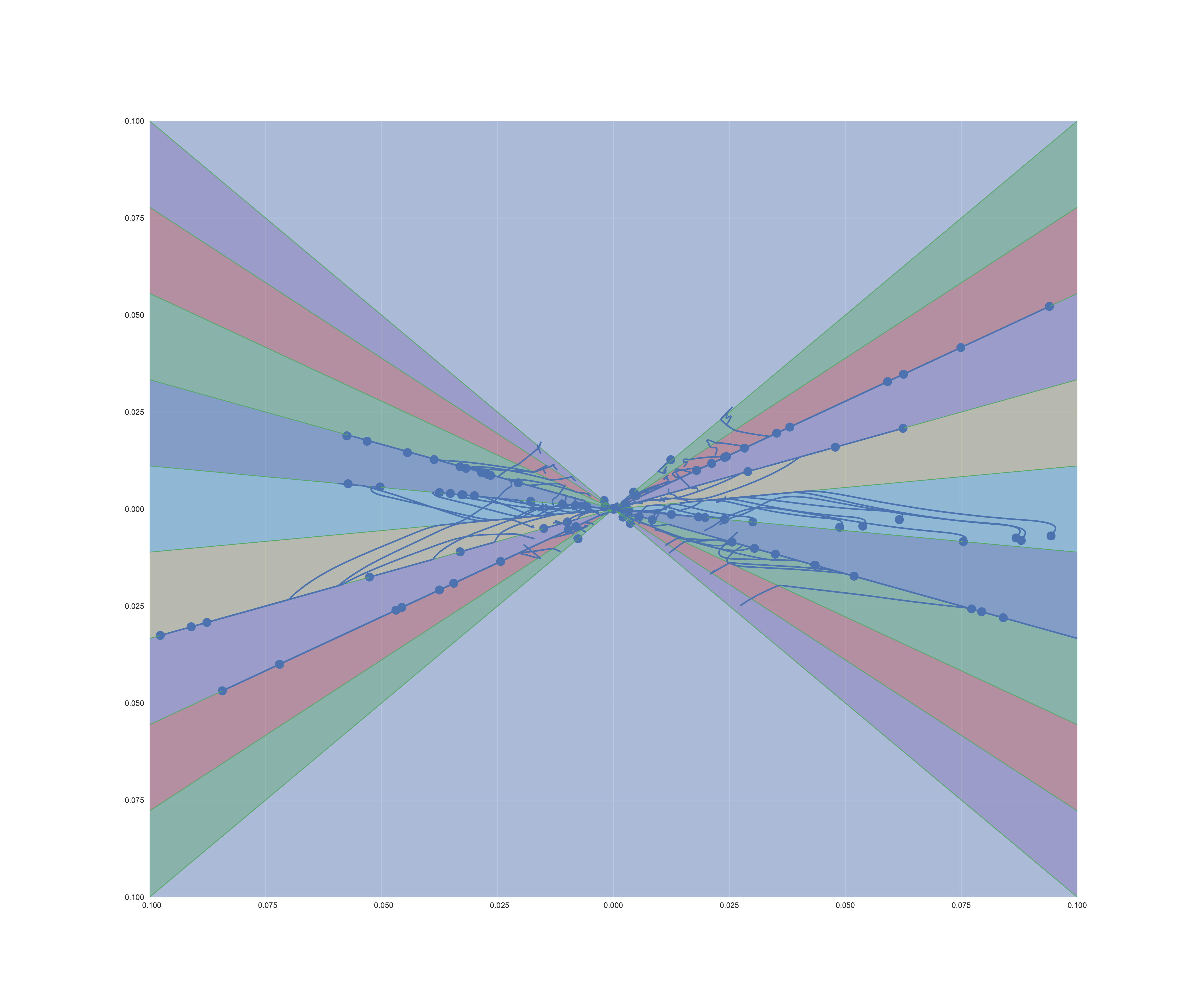}
    \endminipage
    \caption{Evolution of 1000 neurons over 10000 epochs for $\delta = \pm \infty$ while fitting 10 points sampled from a square wave. Left: plotted network function after training. Middle: state of the network in $uv$ space after training. Right: training trajectories of 100 random neurons.}
    \label{fig:trajectories}
\end{figure}

We remark in Figure~\ref{fig:different_funcs_same_init} that the same initial function can yield extremely different results depending on $\delta$. 

\begin{figure}
    \centering
    \minipage{0.33\textwidth}
    \includegraphics[width=\linewidth]{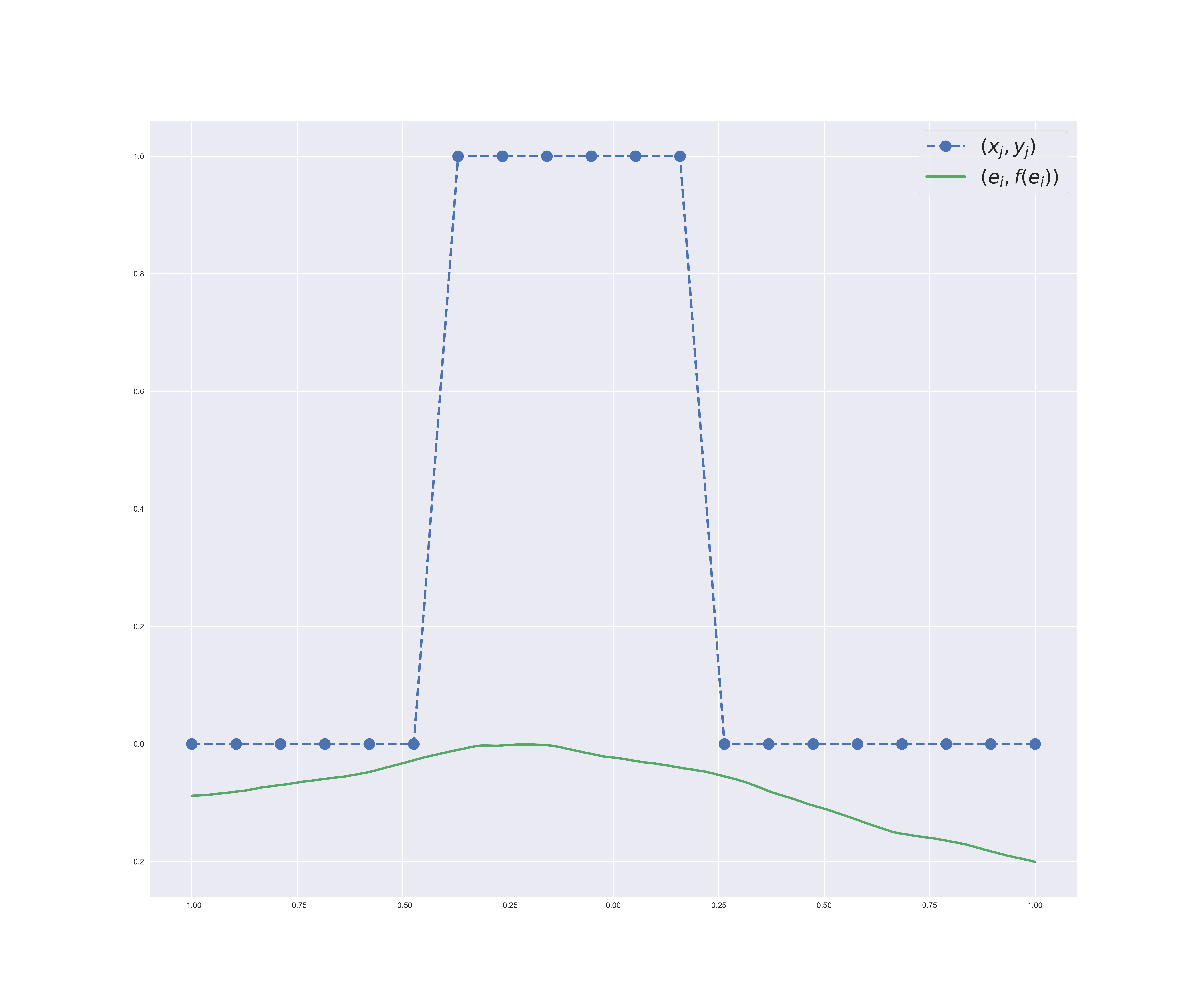}
    \endminipage\hfill
    \minipage{0.33\textwidth}
    \includegraphics[width=\linewidth]{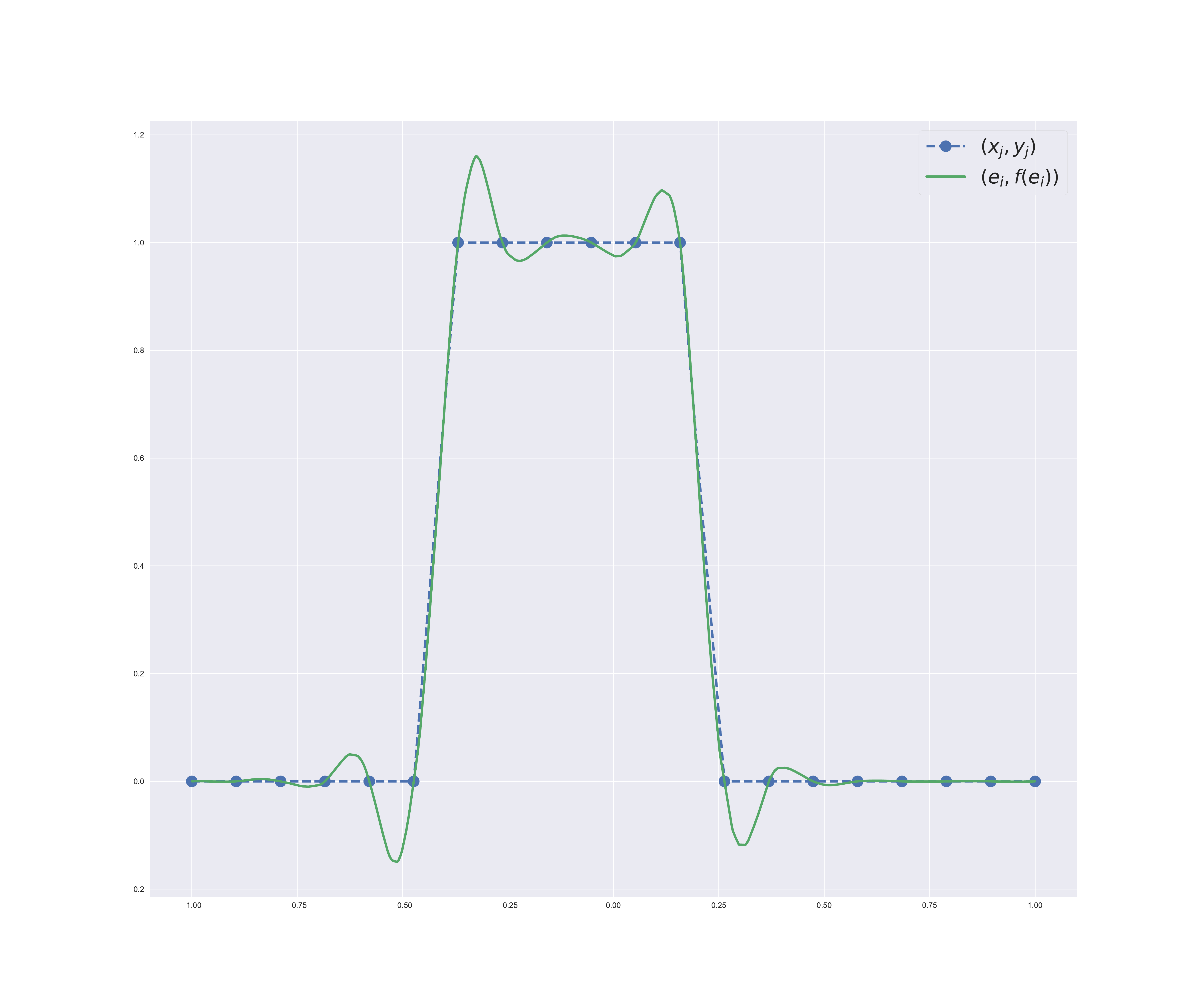}
    \endminipage\hfill
    \minipage{0.33\textwidth}
    \includegraphics[width=\linewidth]{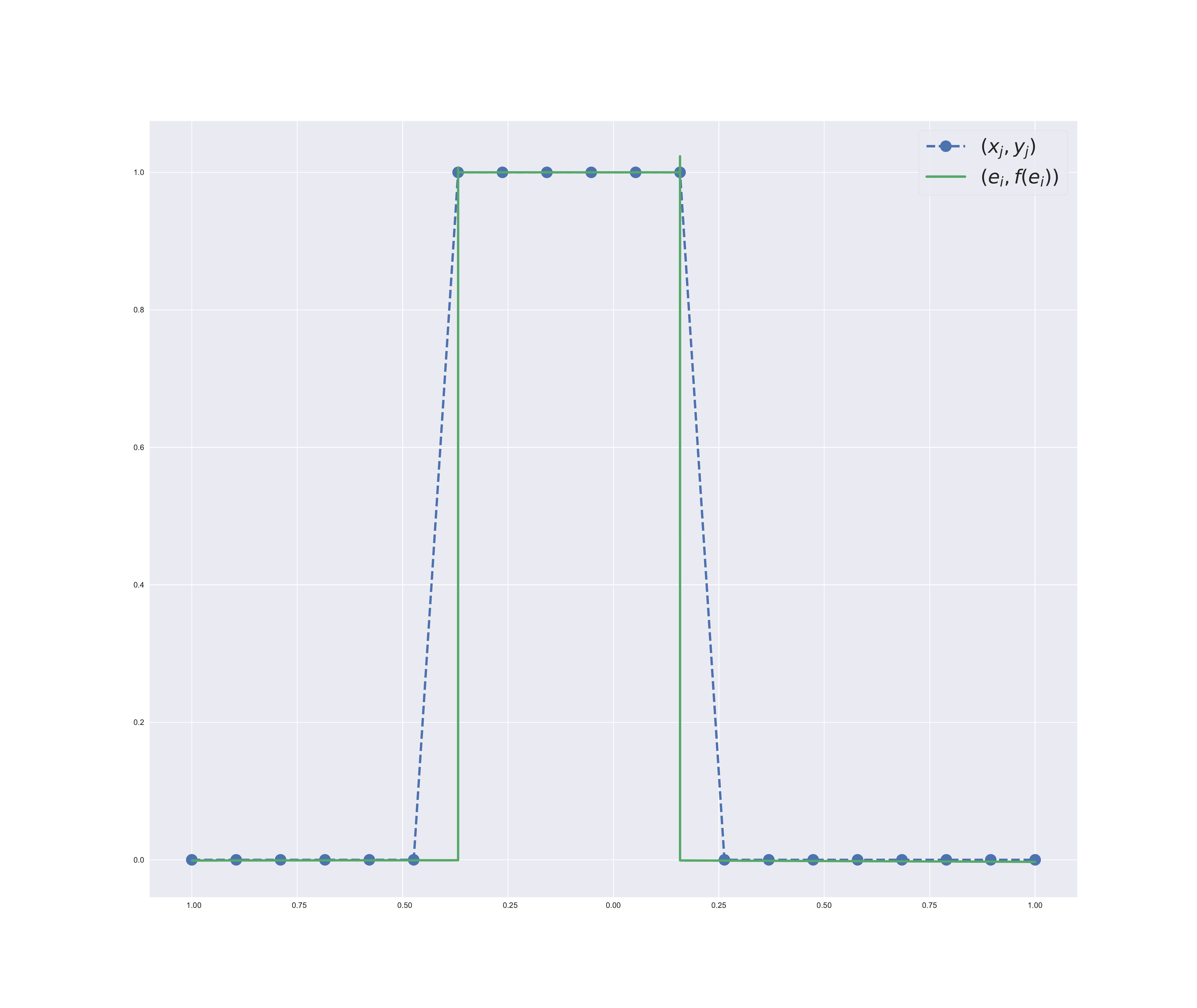}
    \endminipage\\
    \vspace{-5pt}
    \parbox{.32\textwidth}{\centering Epoch $0$\\ $\bm z$ = ($\bm a, \bm b, \bm c$)}
    \parbox{.32\textwidth}{\centering Epoch $10^4$\\ $\bm z$ = ($\bm a, \bm b, \bm c$)}
    \parbox{.32\textwidth}{\centering Epoch $10^4$\\ $\bm z$ = ($10^3 \bm a, 10^3 \bm b, 10^{-3} \bm c$)}
    \caption{Left: A network (green) with an initial set of parameters initial parameters ($\bm a, \bm b, \bm c$) is used to approximate a given function (blue). Scaling the initial parameters (right) leads to a very different fit (middle).}
    \label{fig:different_funcs_same_init}
\end{figure}

We now show the effect of varying the number of neurons during training. In this example we fit 20 samples from a sine wave using 20, 200, and 2000 neurons respectively. In PyTorch, the default initializatioon is such that $\bm a, \bm b ~ U(-1, 1)$ and $\bm c ~ U(-1/m, 1/m)$. Thus, as we scale down the numberof neurons, the value of $\delta$ grows, making the network function adapt more to the data. Figure~\ref{fig:varying_m} shows the results of this experiment. 

\begin{figure}
    \centering
    \minipage{0.5\textwidth}
    \includegraphics[width=\linewidth]{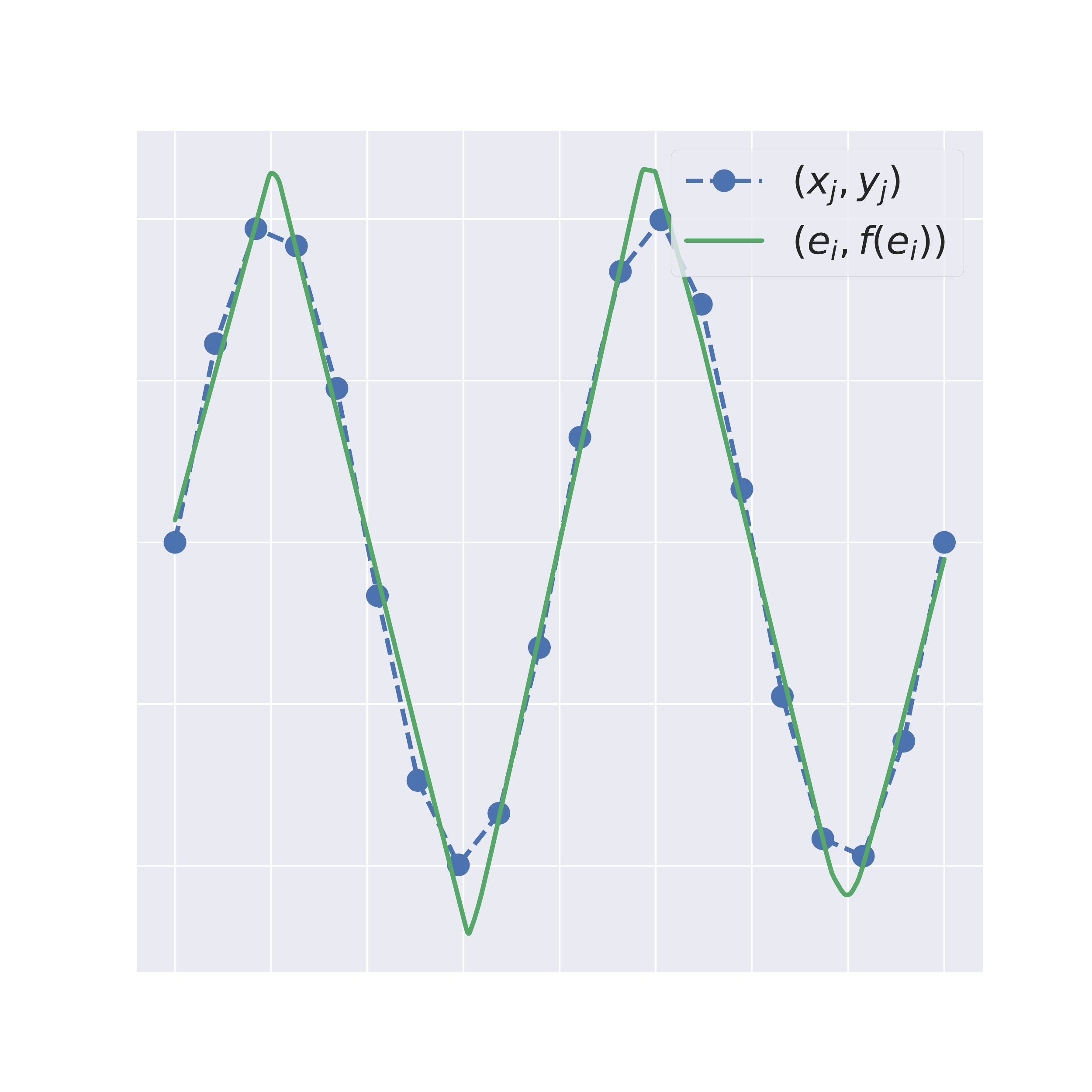}
    \endminipage\hfill
    \minipage{0.5\textwidth}
    \includegraphics[width=\linewidth]{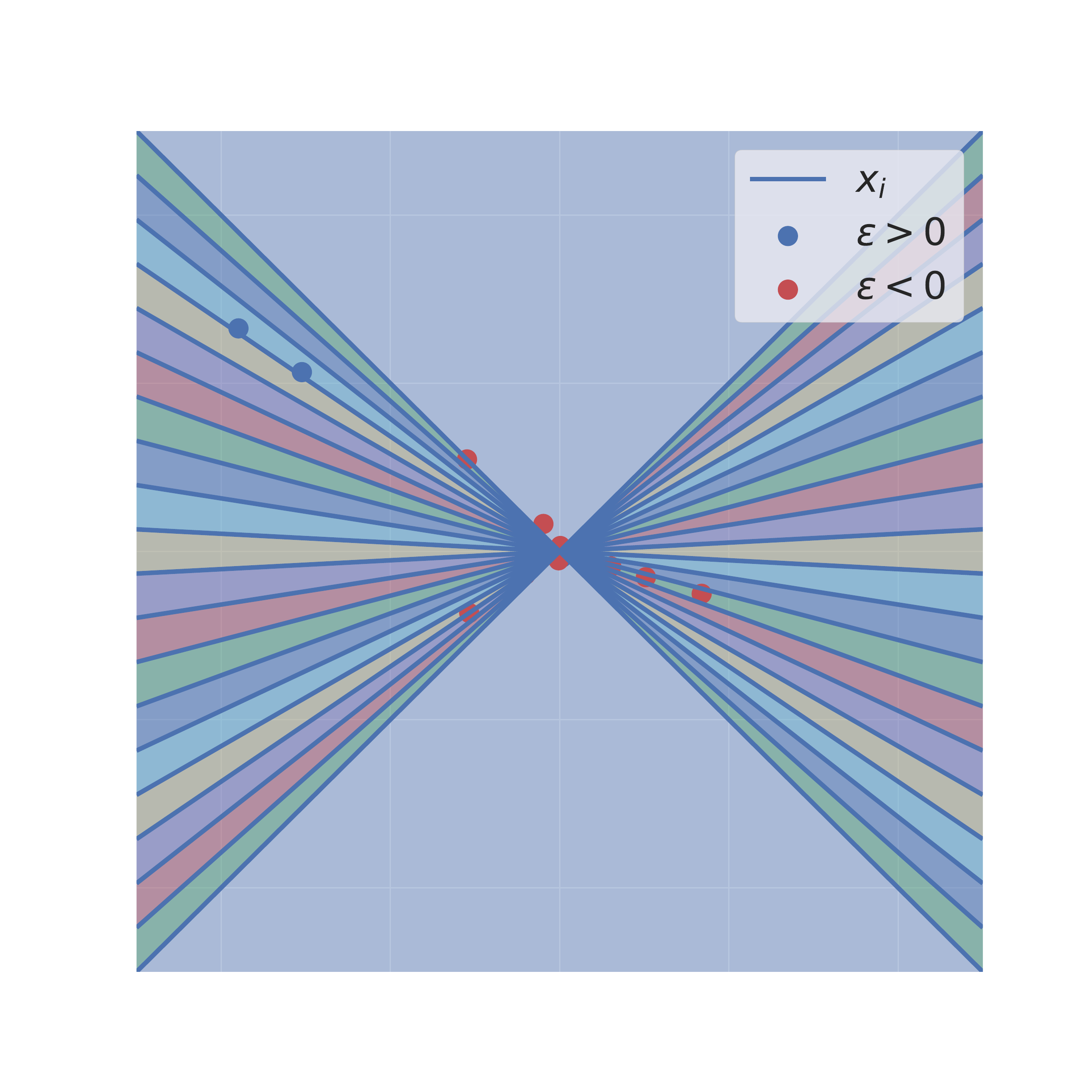}
    \endminipage\hfill
    
    \minipage{0.5\textwidth}
    \includegraphics[width=\linewidth]{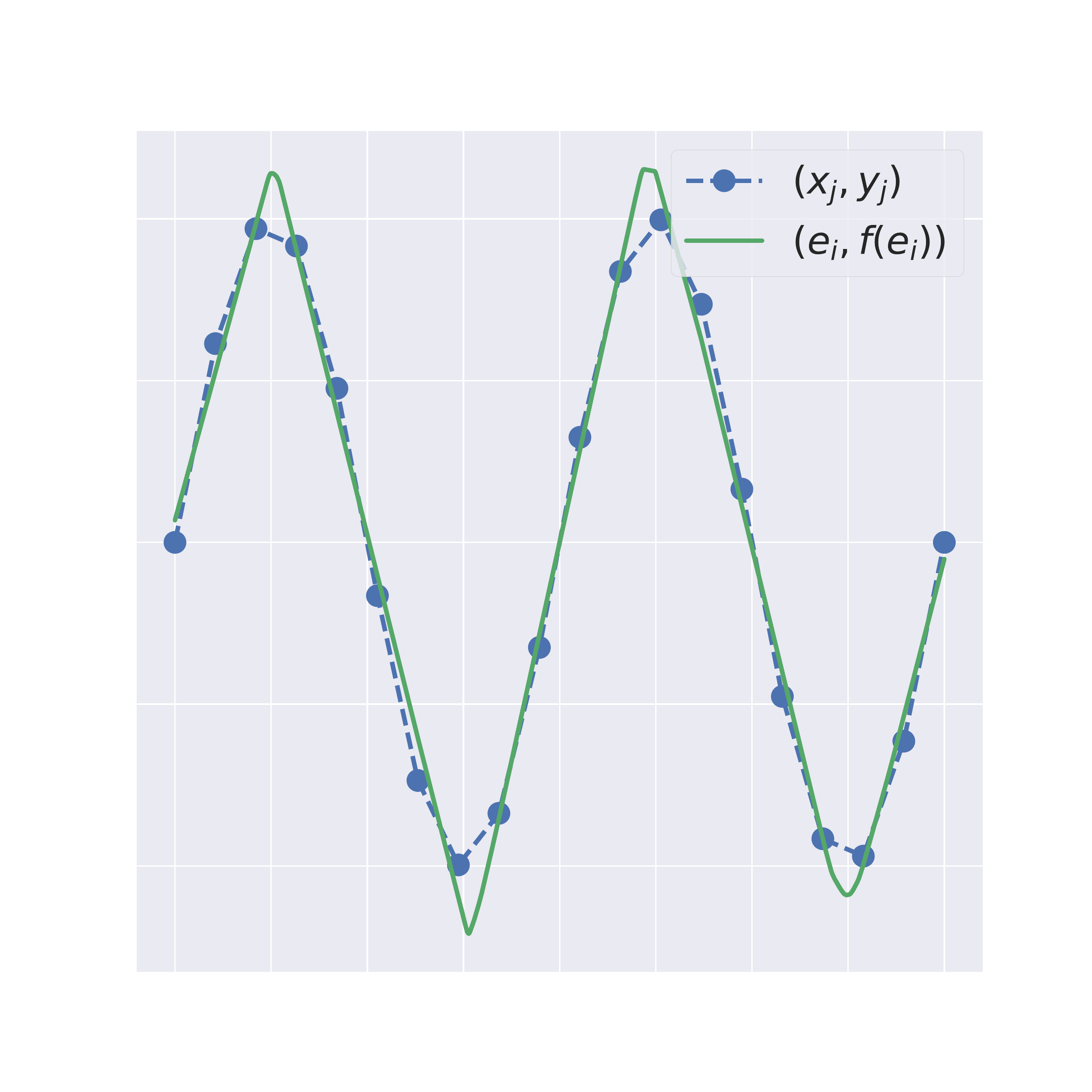}
    \endminipage\hfill
    \minipage{0.5\textwidth}
    \includegraphics[width=\linewidth]{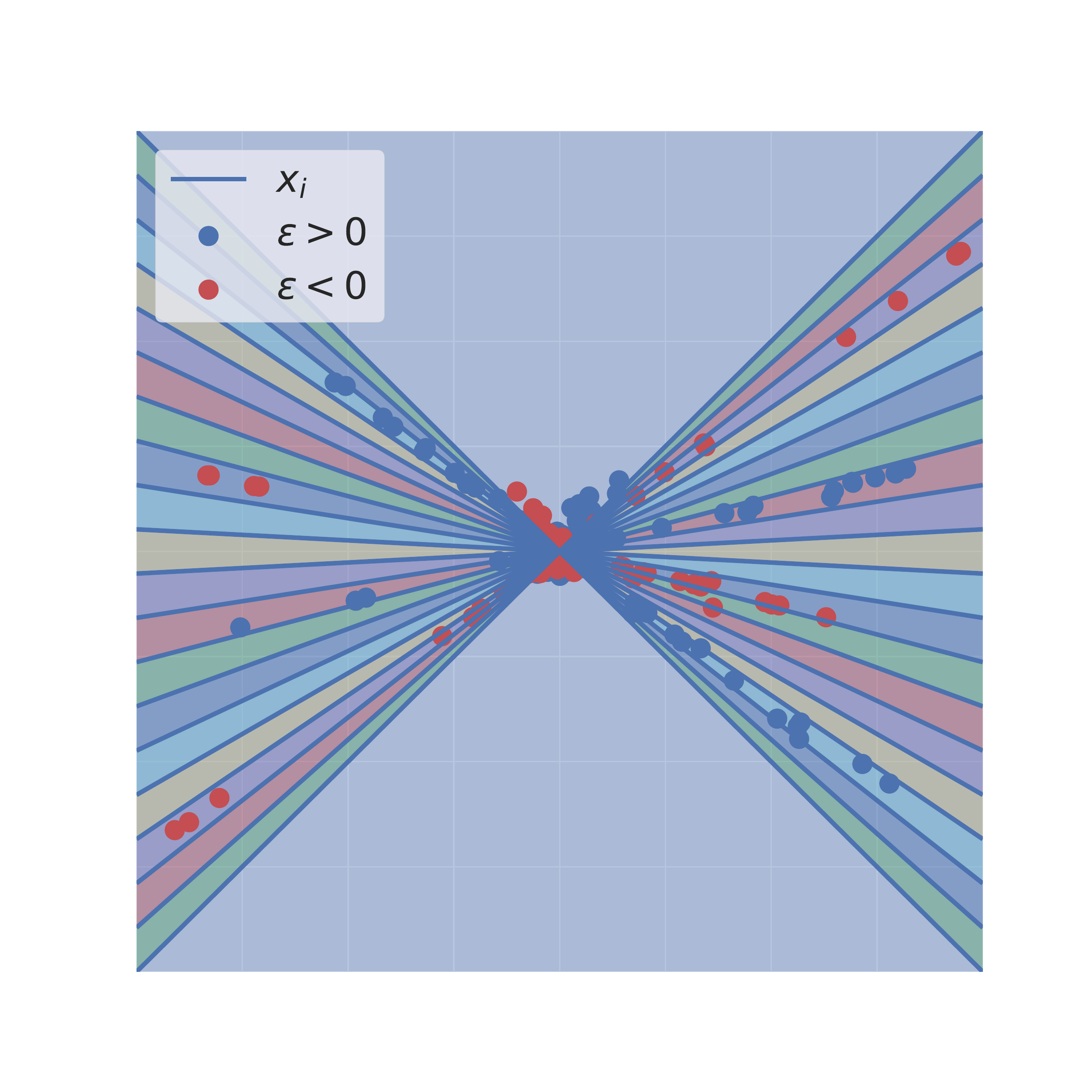}
    \endminipage\hfill
    
    \minipage{0.5\textwidth}
    \includegraphics[width=\linewidth]{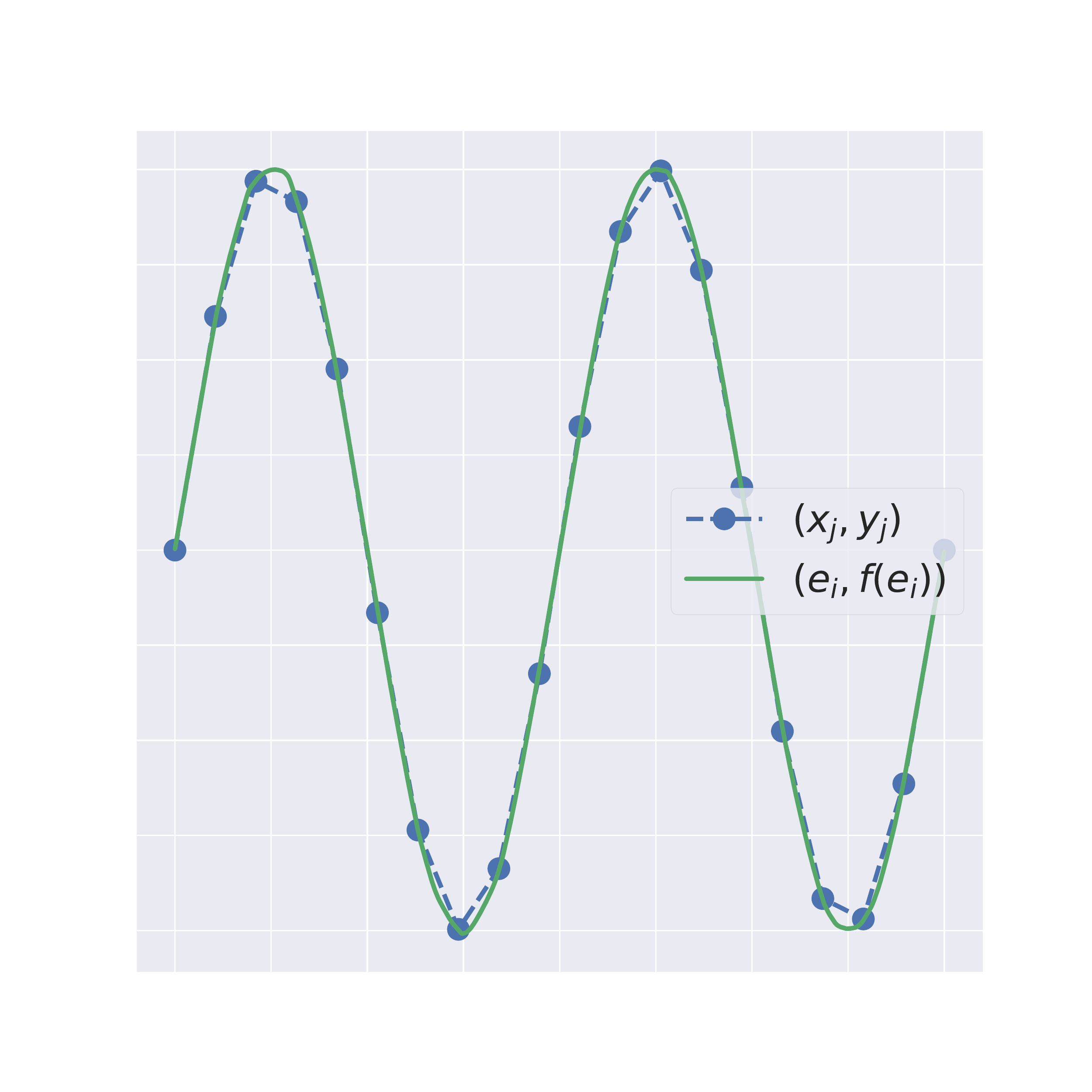}
    \endminipage\hfill
    \minipage{0.5\textwidth}
    \includegraphics[width=\linewidth]{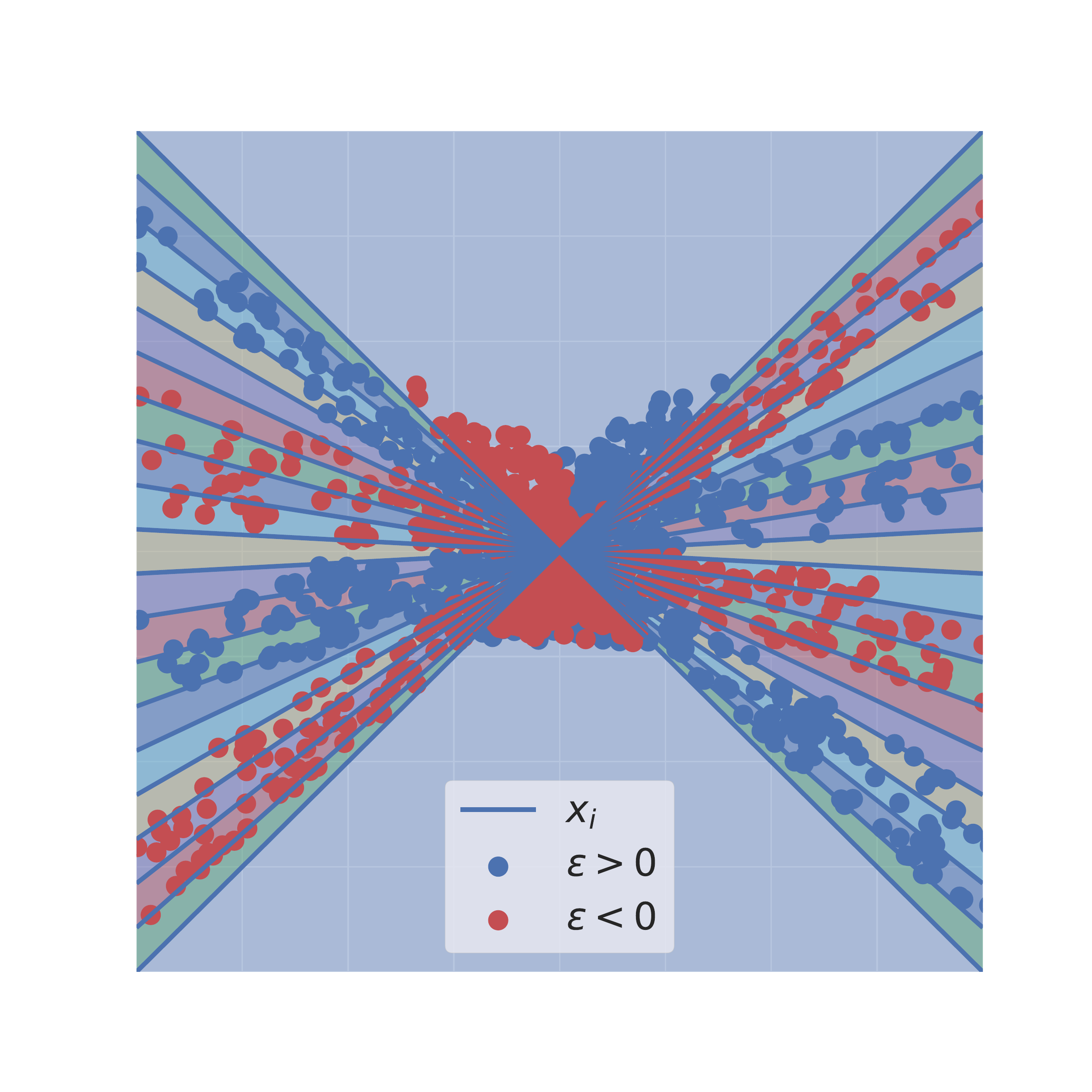}
    \endminipage\hfill
    
    \caption{The effect of varying the number of neurons $m$ the top image uses 20 neurons, the middle uses 200 and the bottom uses 2000. Observe that with fewer neurons, the function is adaptive to the data since $\delta$ gets larger.}
    \label{fig:varying_m}
\end{figure}

\subsection{Visualizing Attractor Samples}
We can visualize the vector field $(\partial_t(\hat{r}, \theta)$ by considering the change of metric from $w = (\hat{r}, \theta)$ to $(u, v)$ with the map 
$$
\pi_{(u, v)}(\hat{r}, \theta) = (|\hat{r}|\cos\theta, |\hat{r}|\sin\theta) = (u, v).
$$ 
Assuming we know the sign of $\hat{r}$, the vector field 

\begin{equation}
    \begin{bmatrix}
    \partial_t u\\
    \partial_t v
    \end{bmatrix} = 
    D\pi_{(u, v)} D\pi_{(u, v)}^T
    \begin{bmatrix}
    \partial_t r\\
    \partial_t \theta
    \end{bmatrix}
\end{equation}

Observing that $D\pi_{(u, v)} D\pi_{(u, v)}^T = I$, we have simply that
$$
\begin{bmatrix}
    \partial_t u\\
    \partial_t v
    \end{bmatrix} = \begin{bmatrix}
    \partial_t r\\
    \partial_t \theta
    \end{bmatrix}
$$

Figure~\ref{fig:attractor} shows a plot of this vector felt by a single particle in $uv$ in the case where $\delta = \infty$. In this case, the partial derivative $\partial_t r$ remains unchanged. Furthermore, we remark that at the boundaries of samples, the vector field can change directions, causing these samples to become ``attractors'' or ``repulsors'' (see Lemma~\ref{le:attractor} in the main document).
\begin{figure}
    \centering
    \includegraphics[width=\textwidth]{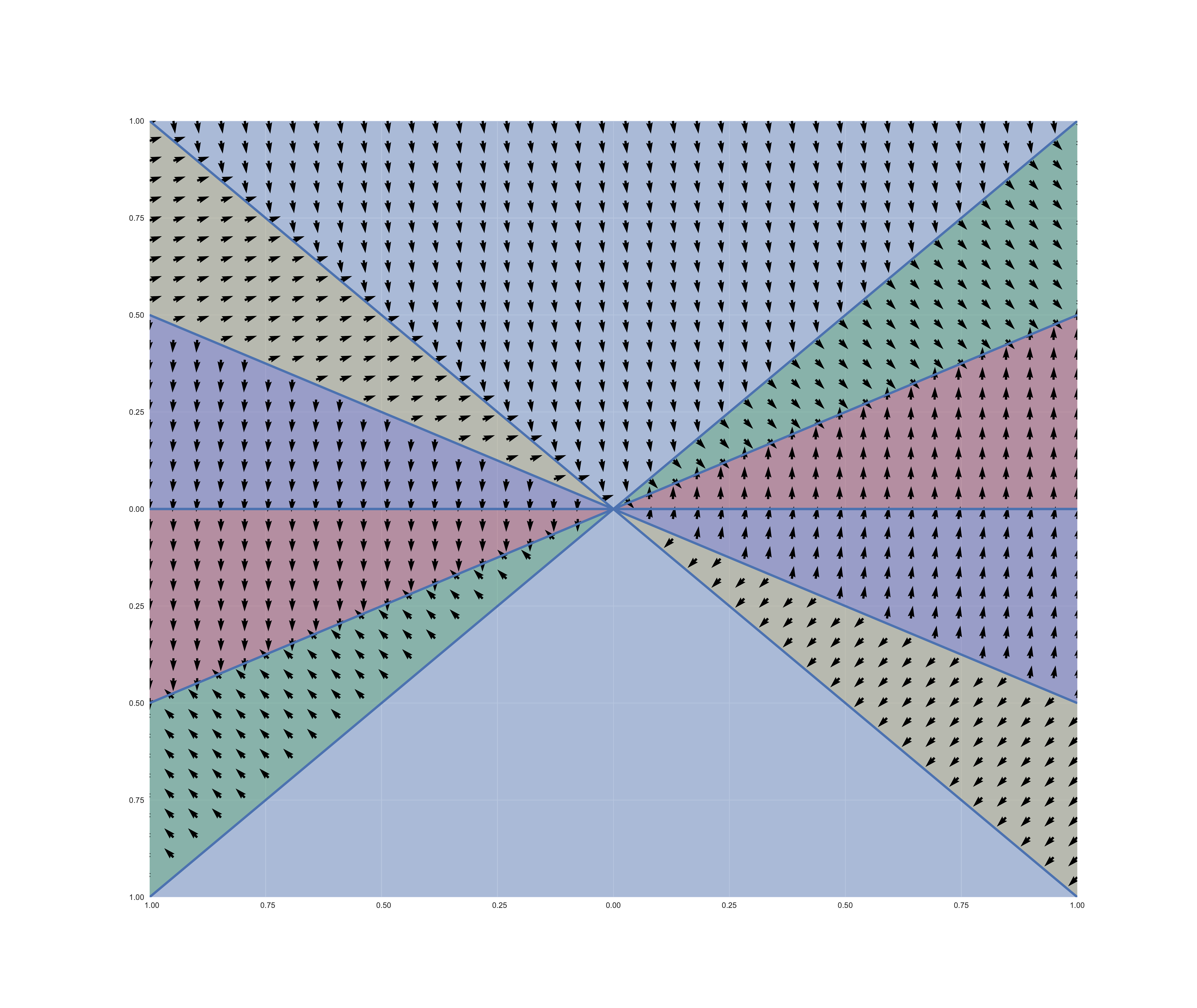}
    \minipage{0.5\textwidth}
    \includegraphics[width=\linewidth]{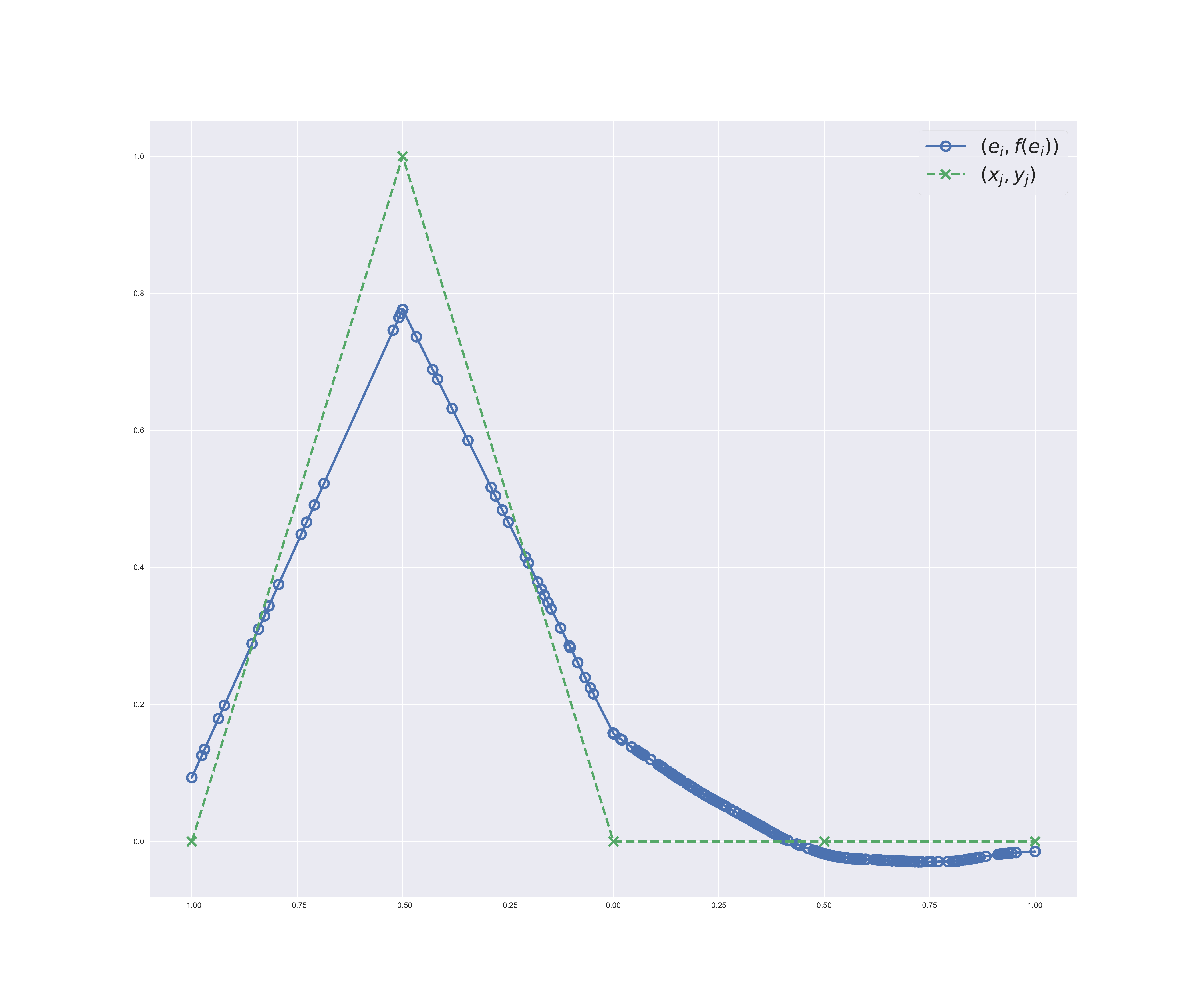}
    \endminipage\hfill
    \minipage{0.5\textwidth}
    \includegraphics[width=\linewidth]{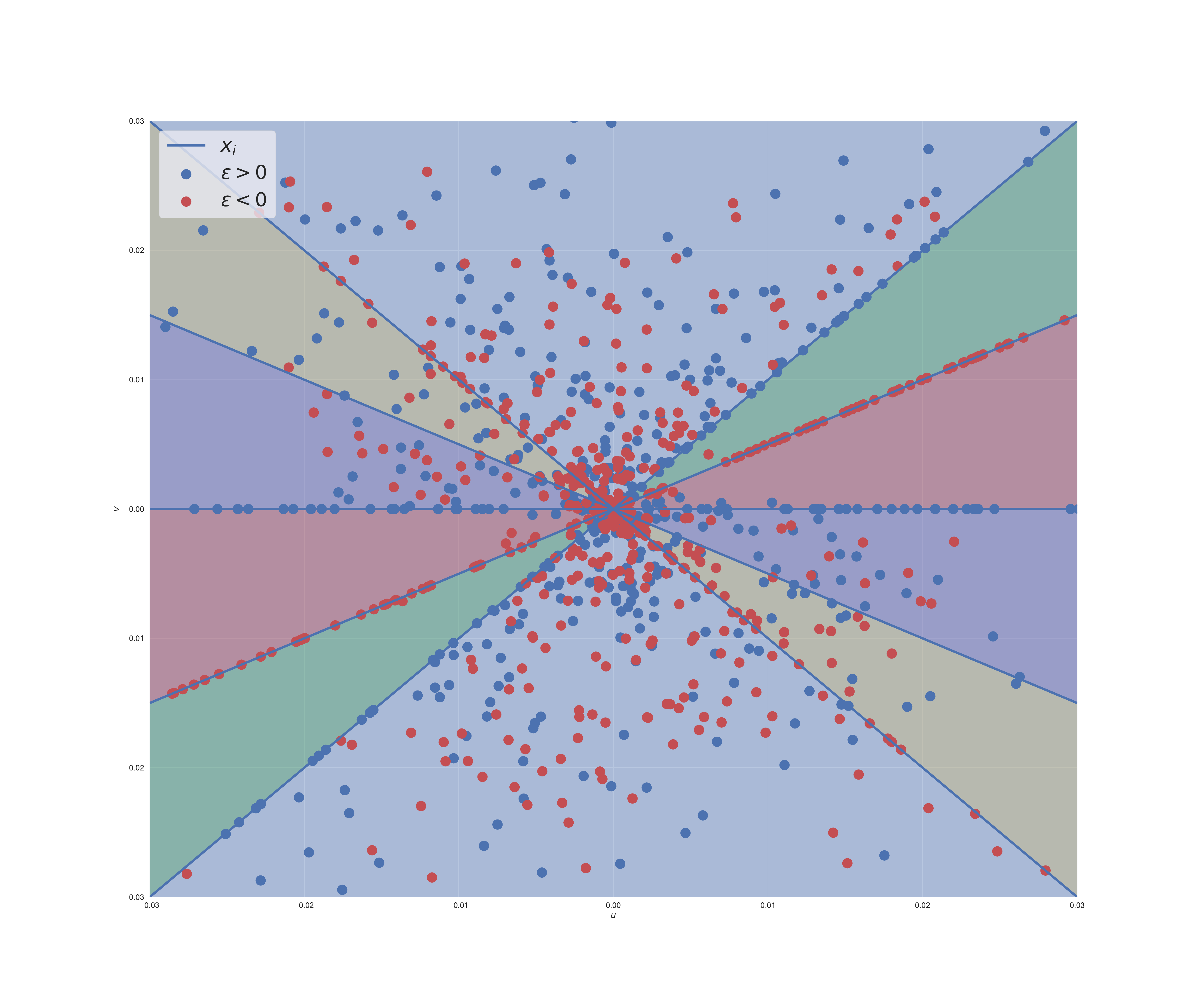}
    \endminipage\hfill
    \caption{\emph{Top:} The gradient field \eqref{eq:particle_vfield} felt by a particle. Note how the vectors change directions at certain samples. These samples are ``attractors'' or ``repulsors'' where particles get stuck or get pushed away from. \emph{Bottom Left:} A plot of the network function for the gradient field in the top image. Observe how there are clusters of neurons (blue circles) aligned with certain samples. \emph{Bottom Right:} A plot of the neurons in $uv$ space. Observe how the red neurons cluster at ``attractor'' points in the top image}
    \label{fig:attractor}
\end{figure}

\end{document}